\documentclass[10pt]{article}
\makeatletter
\renewcommand\@fnsymbol[1]{}
\makeatother
\usepackage{lmodern}
\usepackage[T1]{fontenc}
\usepackage[margin=1.0in]{geometry}
\usepackage{amsmath,amssymb}
\usepackage{bm}
\usepackage{bbm}
\usepackage{booktabs}
\usepackage{hyperref}
\usepackage{pgfplots}
\usepackage{subcaption}
\usepackage{authblk}

\pgfplotsset{compat=1.18}

\usepackage{xcolor}
\usepackage{amsthm}
\usepackage[titletoc,title]{appendix}
\usepackage{caption}
\theoremstyle{plain}
\usepackage{algorithm}
\usepackage{algpseudocode}
\usepackage{utils/commands}
\usepackage{enumitem}

\usepackage[acronym]{glossaries}
\glsdisablehyper
\newacronym{sota}{SOTA}{state-of-the-art}
\newacronym{kpi}{KPI}{key performance indicator}

\newacronym{pac}{PAC}{probably approximately correct}
\newacronym{rv}{RV}{random variable}
\newacronym{iid}{i.i.d.}{independent and identically distributed}

\newacronym{fpr}{FPR}{false positive rate}
\newacronym{tpr}{TPR}{true positive rate}
\newacronym{roc}{ROC}{receiver operating characteristic}
\newacronym{auc}{AUC}{area under curve}
\newacronym{ece}{ECE}{expected calibration error}
\newacronym{kl}{KL}{Kullback-Leibler}
\newacronym{rms}{RMS}{root mean-square}
\newacronym{mav}{MAV}{mean-absolute value}

\newacronym{lti}{LTI}{linear time-invariant}

\newacronym{ai}{AI}{artificial intelligence}
\newacronym{rl}{RL}{reinforcement learning}
\newacronym{es}{ES}{evolution strategies}
\newacronym{rlvr}{RLVR}{reinforcement learning with verifiable rewards}
\newacronym{nn}{NN}{neural network}
\newacronym{rnn}{RNN}{recurrent neural network}
\newacronym{ml}{ML}{maximum likelihood}
\newacronym{map}{MAP}{maximum a posteriori}
\newacronym{ood}{OoD}{out-of-distribution}
\newacronym{gd}{GD}{gradient descent}
\newacronym{erm}{ERM}{empirical risk minimization}
\newacronym{mlp}{MLP}{multi-layer perceptron}
\newacronym{vae}{VAE}{variational auto-encoder}
\newacronym{cdl}{CDL}{contrastive density learning}
\newacronym{elbo}{ELBO}{evidence lower bound}
\newacronym{vi}{VI}{variational inference}
\newacronym{mcmc}{MCMC}{Markov chain Monte Carlo}
\newacronym{em}{EM}{expectation maximization}
\newacronym{vem}{VEM}{variational expectation maximization}
\newacronym{lvm}{LVM}{latent variable model}
\newacronym{hmm}{HMM}{hidden Markov model}
\newacronym{gp}{GP}{Gaussian process}
\newacronym{bnn}{BNN}{Bayesian neural network}
\newacronym{fwp}{FWP}{fast weight programmer}
\newacronym{eb}{EB}{empirical Bayes}
\newacronym{kd}{KD}{knowledge distillation}
\newacronym{spsa}{SPSA}{simultaneous perturbation stochastic approximation}
\newacronym{mezo}{MeZO}{memory-efficient ZO optimizer}
\newacronym{rkhs}{RKHS}{reproducible kernel Hilbert space}
\newacronym{lm}{LM}{language-modeling}

\newacronym{llms}{LLMs}{large language models}
\newacronym{icl}{ICL}{in-context learning}
\newacronym{la}{LA}{linear attention}
\newacronym{ssm}{SSM}{state-space model}
\newacronym{s4}{S4}{structured state-space sequence}
\newacronym{fla}{FLA}{flash linear attention}
\newacronym{gla}{GLA}{gated linear attention}
\newacronym{dyt}{DyT}{dynamic $\mathrm{tanh}$}

\newacronym{ptq}{PTQ}{post-training quantization}
\newacronym{qat}{QAT}{quantization-aware training}
\newacronym{lsq}{LSQ}{learned step-size quantization}
\newacronym{bp}{BP}{backpropagation}

\newacronym{snn}{SNN}{spiking neural network}
\newacronym{ann}{ANN}{artificial neural network}
\newacronym{if}{IF}{integrate-and-fire}
\newacronym{lif}{LIF}{leaky integrate-and-fire}
\newacronym{a2s}{A2S}{ANN-to-SNN}

\newacronym{gpu}{GPU}{graphics processing unit}
\newacronym{lut}{LUT}{look-up table}
\newacronym{acc}{ACC}{accumulation}
\newacronym{bbm}{BBM}{binary-binary multiplication}
\newacronym{qbsm}{QBSM}{quantized-binary static multiplication}
\newacronym{qsm}{QSM}{quantized static multiplication}
\newacronym{qdm}{QDM}{quantized dynamic multiplication}
\newacronym{qnl}{QNL}{quantized non-linearity}
\newacronym{fp}{FP}{floating-point}
\newacronym{fixp}{FixP}{fixed-point}

\newacronym{zo}{ZO}{zeroth-order}
\newacronym{fo}{FO}{first-order}
\newacronym{peft}{PEFT}{parameter-efficient fine-tuning}
\newacronym{sft}{SFT}{supervised fine-tuning}
\newacronym{ste}{STE}{straight-through estimator}
\usetikzlibrary{fillbetween}
\usepgfplotslibrary{fillbetween}
\usepackage{booktabs}
\usepackage{makecell}
\usepackage{pifont}
\usepackage{multirow}
\usepackage{natbib}
\newcommand{\cmark}{\ding{51}}
\newcommand{\xmark}{\ding{55}}

\title{TerMeZO: Ternary Sparse Zeroth-Order Optimization for Fine-tuning BitNet Models at the Edge}
\author{Houssem Sifaou, Prabodh Katti, Bipin Rajendran, and Osvaldo Simeone\thanks{This project is funded by the Advanced Research + Invention Agency (ARIA).}}
\affil{Institute for Intelligent Networked Systems, Northeastern University London, UK}

\date{}

\begin{document}
\maketitle

\maketitle

\begin{abstract}
Fine-tuning \gls{llms} with first-order optimizers requires a memory several times larger than that required for inference. Memory-efficient zeroth-order optimization (MeZO) sidesteps this cost by estimating gradients from forward passes only. However, for BitNet architectures, a family of LLMs with ternary $\{-1,0,1\}$ weights and 8-bit activations, fine-tuning requires updating full-precision latent weights, and thus the memory footprint of MeZO no longer matches that of inference. A promising solution is to finetune only a subset of the latent weights, but existing sparse zeroth-order (ZO) methods either ignore the ternary structure or require first-order gradient information to build a sparse mask, which is at odds with the purpose of ZO fine-tuning. We propose TerMeZO, a sparse MeZO scheme that exploits the geometry of the ternary quantizer itself to identify the latent weights that are more likely to change values during fine-tuning, at no additional data or memory cost. Our convergence analysis shows that TerMeZO can converge faster than full-parameter MeZO, owing to its optimized reduction of the fine-tuning effective dimension. We run extensive experiments on BitNet models ranging from 1B to 3B parameters, spanning classification, instruction-following, and mathematical reasoning tasks. TerMeZO matches or exceeds the performance of full-parameter MeZO while substantially reducing the fine-tuning memory footprint.
\end{abstract}

\glsresetall 

\section{Introduction}
\label{sec:intro}

Deploying \gls{llms} on resource-constrained and edge hardware has become a practical necessity for a growing range of applications, yet it remains difficult, since the memory, compute, and energy footprint of modern LLMs far exceeds what such platforms can afford~\citep{1rajbhandari2020zero,dettmers2022bit}. Extreme quantization has emerged as the most promising route around this barrier. The BitNet family of models~\citep{wang2023bitnet,ma2024era,ma2025bitnetb1582b4ttechnical} trains LLMs from scratch with ternary $\{-1,0,1\}$ weights and 8-bit activations, replacing most matrix multiplications with integer additions and shrinking the memory footprint enough to make LLM inference viable on commodity and edge devices.

Adapting such a model to a new task or user requires fine-tuning, and here the memory advantages of quantization are lost. In fact, fine-tuning with \gls{fo} optimizers must store gradients, optimizer states, and activations, incurring a footprint several times larger than inference~\citep{1rajbhandari2020zero,dettmers2022bit}. Our work is part of the broader effort to relieve this bottleneck and to bring the cost of fine-tuning quantized LLMs back down toward the cost of inference.

\Gls{zo} optimization~\citep{malladi2023fine} offers a promising direction for this goal, as it estimates gradients from forward passes alone through \gls{spsa}~\citep{spall1992multivariate}. Being \gls{bp}-free, \gls{zo} fine-tuning avoids storing gradients and activations, and for standard LLMs, the \gls{mezo}~\citep{malladi2023fine} fine-tunes at roughly the memory cost of inference. However, this parity does not carry over to BitNet models, which are fine-tuned through full-precision \textit{latent weights} that are quantized on the fly at each forward pass, accumulating updates in full precision. As a result, even MeZO must perturb and update every latent weight at each step, so that its memory footprint no longer matches that of inference for the quantized model. 


As an alternative route to memory savings, \gls{peft}~\citep{hu2022lora,houlsby2019adapter,lester2021prompt}, reduces the number of trainable parameters, while still relying on \gls{bp}. Accordingly, \gls{peft} retains much of the activation and gradient overhead that ZO is designed to avoid. Closer to our setting, several sparse \gls{zo} optimization methods build on MeZO to lighten fine-tuning further, yet none are designed for heavily quantized models like BitNet. FIM-S-MeZO~\citep{guo2024zerothorder} restricts fine-tuning to a subset of parameters selected by \gls{fo} gradient information, a process that itself requires \gls{bp} and thus partially negates the memory benefits of ZO. S-MeZO~\citep{liu2024sparse} selects a sparse working set by weight magnitude, a criterion that ignores the ternary decision boundaries. QZO~\citep{shang2026finetuning} avoids \gls{fo} overhead by freezing the integer weights entirely and tuning only the full-precision quantization scales. QZO is effective when the required knowledge is already present in the pre-trained model, but too restrictive on harder tasks that demand acquiring new knowledge.

In this work, we propose TerMeZO, a sparse MeZO scheme tailored to BitNet models that does not require \gls{fo} gradients. Our key observation is that the ternary quantizer itself provides a natural criterion for identifying the latent weights most likely to benefit from fine-tuning. Weights closest to the quantization boundaries are the most sensitive to perturbations, and therefore the most likely to flip their integer value during \gls{zo} fine-tuning. In contrast, weights far from the quantization boundaries are effectively frozen in ternary space. TerMeZO restricts perturbations and updates to this sparse, boundary-adjacent subset of latent weights, while also updating the remaining full-precision parameters in BitNet models \citep{wang2023bitnet,ma2024era,ma2025bitnetb1582b4ttechnical}, namely the embeddings, normalization layers, and \gls{lm} head. Unlike the selection rule in \citep{guo2024zerothorder}, which requires full-parameter \gls{fo} gradients, TerMe\gls{zo}'s selection rule requires only a linear scan of the latent weights. This scan can be performed offline, before the model is uploaded to the edge device, or on-device, provided the device has enough memory to hold the model together with its latent weights.

We first show empirically that this boundary-proximity criterion selects the latent weights carrying the largest gradient signal, confirming that TerMeZO concentrates fine-tuning where it matters most. Then, we provide a convergence analysis proving that TerMeZO can converge faster than full-parameter MeZO by virtue of its reduced effective dimension (Section~\ref{sec:convergence}). The design and theory are validated through extensive experiments on BitNet models ranging from 1B to 3B parameters, including Microsoft's BitNet-b1.58-2B-4T~\citep{ma2025bitnetb1582b4ttechnical} and the Falcon-Edge family~\citep{tiionebitllms}. 
TerMeZO is seen to match full-parameter MeZO, exceeding it in several cases, while consistently outperforming existing sparse MeZO baselines that do not rely on \gls{fo} gradient information (Section \ref{sec:results}).

The main contributions are as follows:
\begin{itemize}[nosep, leftmargin=*]
\item We propose TerMeZO, a sparse MeZO scheme tailored to BitNet models that enables \gls{zo} fine-tuning without sacrificing the memory benefits of quantization, making on-device adaptation at the edge more practical.
\item We provide a convergence analysis establishing that TerMeZO can converge faster than full-parameter MeZO owing to its reduced effective dimension.
\item We demonstrate across BitNet models from 1B to 3B parameters and a range of downstream tasks that TerMeZO matches or exceeds full-parameter MeZO at a reduced memory footprint. 
\end{itemize}


\section{Background and Related Work}
\label{sec:related_work}

Our work sits at the intersection of three lines of research, namely \gls{peft}, \gls{zo} optimization for memory-efficient fine-tuning, and extreme quantization. We review these areas in the following, with additional details provided in Appendix~\ref{app:related_work}.

\noindent\textbf{Parameter-Efficient Fine-Tuning.}
For modern \gls{llms}, full fine-tuning via \gls{fo} optimizers is increasingly impractical, especially for resource-constrained scenarios, requiring a memory footprint several times larger than that of inference to store optimizer states and activations~\citep{1rajbhandari2020zero,dettmers2022bit}. \gls{peft} methods address this bottleneck by exploiting the observation that fine-tuning updates lie in a low-dimensional subspace of the full-parameter space~\citep{aghajanyan2021intrinsic,lialin2023scaling,han2024peft}. They freeze the pre-trained weights and introduce a small set of trainable parameters, achieving performance close to full-parameter fine-tuning. \gls{peft} methods can be broadly grouped into four families, namely adapter-based methods \citep{houlsby2019adapter,pfeiffer2021adapterfusion,he2022unified, mahabadi2021compacter,zhang2022adamix}, soft-prompt and prefix methods \citep{lester2021prompt,liu2022ptuning,li2021prefix,lester2021prompt,he2022unified}, low-rank reparameterization \citep{hu2022lora,dettmers2023qlora,ICLR2024_e6c2e85d,zhang2023adalora,liu2024dora,kopiczko2024vera}, and sparse fine-tuning \citep{guo2021diffpruning,zaken2022bitfit,sung2021training,ansell2022composable,xu2024random}.\\
\textbf{Zeroth-Order Optimization and MeZO.}
\gls{zo} optimization estimates gradients from function evaluations alone, without requiring a backward pass. A widely adopted \gls{zo} estimator is \gls{spsa}~\citep{spall1992multivariate}, which approximates the gradient $\nabla {F}(\mathbf{w})$ of the objective function $F(\w)$  along a random direction $\mathbf{z}\sim\mathcal{N}(\mathbf{0},\mathbf{I}_d)$ as
\begin{equation}
    \widehat{\nabla {F}}(\mathbf{w}) \;=\; \frac{F(\mathbf{w}+\epsilon\mathbf{z}) - F(\mathbf{w}-\epsilon\mathbf{z})}{2\epsilon}\,\mathbf{z},
    \label{eq:spsa}
\end{equation}
where $\epsilon$ is a small constant, requiring only two function evaluations per step. Other choices of the random direction $\z$ include uniform~\citep{duchi2015optimal} and Rademacher~\citep{dang2026fzoo}. \gls{zo} optimization has historically been applied to non-differentiable or black-box objectives, arising in settings such as adversarial attacks~\citep{chen2017zoo,ilyas2018black} and reinforcement learning~\citep{salimans2017evolution}, where gradients are inherently unavailable. More recently, \textit{memory-efficient ZO} (MeZO)~\citep{malladi2023fine} demonstrated that SPSA can also be used to fine-tune LLMs using only the memory footprint of inference. MeZO’s success in fine-tuning hinges on the locally well-conditioned loss landscape of pre-trained LLMs near initialization, which bypasses the dimension dependence of standard \gls{zo} convergence bounds~\citep{malladi2023fine}. A growing body of follow-up work has sought to improve MeZO along several directions. In particular, reduced-variance methods~\citep{zhao2025hizoo,cai2026adamezo,dang2026fzoo,dbouk2026adaptivity} has been investigated in to enhance convergence. Other schemes optimize the perturbation using curvature-aware~\citep{Seung2025Loren} or activation-guided~\citep{lin2026agzo} criteria, and gradient-informed perturbations \citep{mi2025towards}. These methods are compatible with TerMeZO and they could be combined with it, although we do not explore this direction in this work. Information about curvature can be also leveraged to dynamically select which weights to update \citep{wang2026curvzo}. However, such approach incur a memory requirement higher than full-parameter \gls{mezo}, and we therefore consider them poorly suited to the near-inference-memory of quantized models regime we target in this work.



\noindent\textbf{Fine-Tuning Quantized Models.}
Quantization replaces full-precision weights and activations with low-bit representations to reduce memory and compute. Post-training quantization (PTQ) methods such as GPTQ~\citep{frantar2022gptq}, AWQ~\citep{lin2024awq}, and SmoothQuant~\citep{xiao2023smoothquant} achieve 4- and 8-bit inference at a minor accuracy gap, while quantization-aware training (QAT) recovers the gap by simulating quantization during training~\citep{nagel2021white}. \textit{Extreme} quantization, 1-bit and ternary weights, has been studied for energy-constrained deployment. BinaryConnect~\citep{courbariaux2015binaryconnect}, XNOR-Net~\citep{rastegari2016xnor}, and ternary weight networks~\citep{li2016ternary} demonstrated that aggressive weight quantization is feasible for vision models. More recently, the BitNet framework~\citep{wang2023bitnet,ma2024era} has introduced training LLMs from scratch with $\{-1,0,1\}$ weights and 8-bit activations, enabling matrix-multiplication-free inference kernels~\citep{ma2024era,ma2025bitnetb1582b4ttechnical}. To further increase the availability of BitNet models, recent works have also explored pipelines for distilling existing pretrained models into BitNet versions~\citep{wu2025bitnet}. BitNet distillation is orthogonal to our work, as it is typically applied prior to fine-tuning.

The \gls{ste}~\citep{bengio2013estimating} is the standard \gls{bp} surrogate in this regime, treating the non-differentiable quantizer as the identity in the backward pass. However, STE is known to be a biased gradient surrogate~\citep{yin2019understanding} and to interact poorly with optimizer states stored at low precision~\citep{li2021brecq}. This further motivates the use of \gls{zo} for fine-tuning extremely quantized models.

FIM-S-MeZO~\citep{guo2024zerothorder} combines \gls{zo} fine-tuning with sparsity, fine-tuning a tiny fraction of parameters identified at pre-training, while keeping the remaining weights quantized to 4 bits. The work~\citep{yang2024adazeta} reduces the effective \gls{zo} dimension by inserting adapters and adapting the per-step query budget to stabilize convergence. QuZO~\citep{zhou2025quzo} adopts a stochastic rounding scheme that lowers gradient bias for quantized weights, allowing fine-tuning with low precision forward passes. However, we found that this approach fails for ternary quantization (See Appendix \ref{app:quzo}). Most directly related to our work, QZO \citep{shang2026finetuning} fine-tunes quantized models with Me\gls{zo} by keeping the quantized weights fixed and fine-tuning only the quantization scales as full-precision parameters. We instead target complex tasks that may require fine-tuning of the ternary weights.

Table~\ref{tab:comparison} summarizes the related work for memory-constrained, BitNet-style deployment along five axes. \gls{fo} \gls{peft} methods \citep{hu2022lora,dettmers2023qlora} require storing full-precision adapters, gradients, and activations throughout training; evaluate matrix multiplication in full-precision; and do not benefit from sparse updates. Among \gls{zo} methods, sparse Me\gls{zo} variants \citep{liu2024sparse,guo2024zerothorder}  and TerMe\gls{zo} save memory by restricting the perturbation and update to a sparse full-precision working set. TerMe\gls{zo} stands out by obtaining this set directly from the ternary quantization, thus avoiding both the \gls{fo} optimization steps required to construct a mask in~\citep{guo2024zerothorder} and the scale-only update constraints of QZO~\citep{shang2026finetuning}.

\begin{table}[ht]
\centering
\caption{Comparison of fine-tuning methods for BitNet-style models (Matmul = matrix multiplication).}
\label{tab:comparison}
\small
\setlength{\tabcolsep}{3pt}
\begin{tabular}{lccccc}
\toprule
\textbf{Method}
  & \makecell{Near inference \\ memory of BitNet}
  & \makecell{Matmul-free\\(BitNet like)}
  & \makecell{No \gls{fo}\\overhead}
  & \makecell{Sparsity} 
  &  \makecell{Quantization-\\Aware Mask}\\
\midrule
\multicolumn{5}{l}{\textit{\gls{peft} --- \gls{fo} fine-tuning}} \\[2pt]
LoRA~\citep{hu2022lora}          & \xmark & \xmark & \xmark & \xmark & \xmark \\
QLoRA~\citep{dettmers2023qlora}  & \xmark & \xmark & \xmark & \xmark & \xmark\\
\midrule
\multicolumn{5}{l}{\textit{\gls{zo} fine-tuning}} \\[2pt]
MeZO~\citep{malladi2023fine}              & \xmark & \cmark & \cmark & \xmark & \xmark\\
Me\gls{zo} (LoRA)~\citep{malladi2023fine}       & \xmark & \xmark & \cmark & \xmark & \xmark\\
S-MeZO~\citep{liu2024sparse}         & \cmark & \cmark & \cmark & \cmark & \xmark\\
FIM-S-MeZO~\citep{guo2024zerothorder}    & \xmark & \cmark & \xmark & \cmark & \xmark\\
LoZO~\citep{chen2025enhancingzerothorderfinetuninglarge} & \xmark & \xmark & \cmark & \xmark& \xmark \\
\midrule
\multicolumn{5}{l}{\textit{\gls{zo} fine-tuning --- quantized models}} \\[2pt]
QZO~\citep{shang2026finetuning} & \cmark & \cmark & \cmark & \xmark  & \xmark\\
QuZO~\citep{zhou2025quzo}                              & \cmark & \cmark & \cmark & \xmark & \xmark\\
\midrule
\textbf{TerMe\gls{zo} (ours)} & \cmark & \cmark & \cmark & \cmark & \cmark\\
\bottomrule
\end{tabular}
\end{table}

\section{TerMeZO}
\label{sec:method}
In this section, we propose a sparse version of Me\gls{zo} tailored to BitNet models, a family of LLMs with ternary weights \citep{ma2024era, ma2025bitnetb1582b4ttechnical}. BitNet models apply ternary quantization to most linear projections of the transformer blocks, while keeping the token embeddings, normalization modules, and the language modeling (LM) head in full precision~\citep{ma2025bitnetb1582b4ttechnical,wang2023bitnet}.
 Accordingly, ternarization is applied to every linear projection in the transformer blocks, i.e., the query, key, value, and output projections of the multi-head attention modules, as well as to all linear layers in the multi-layer perceptron (MLP) blocks. In the sequel, we refer to these layers as \textit{ternary linear projections}. The proposed TerMe\gls{zo} preserves the ternary arithmetic of BitNet in the ternary linear projections, perturbing a subset of the latent weights, which are quantized on-the-fly during the forward pass.
 
 
 \noindent\textbf{Ternary Quantization.} Let $\w \in \mathbb{R}^d$ denote the vector collecting all latent weights of the ternary linear projections, and $\btheta \in \mathbb{R}^p$ the vector collecting all full-precision parameters of the remaining blocks. The ternary weights dominate the parameter count, i.e., we have $p \ll d$. For example, for the Bitnet model {Falcon-E-3B-Instruct}~\citep{tiionebitllms}, we have the ratio $p /d=0.044$. During full-parameter fine-tuning, each ternary linear projection maintains full-precision \textit{latent weights} that are quantized on-the-fly in the forward pass. The latent weights $\w$ act as an accumulator for gradient updates. To elaborate, denote as $\w^\ell \in \mathbb{R}^{n^\ell}$ the vectorized latent weights of a ternary linear projection indexed by integer $\ell$, where $n^\ell$ is the number of parameters in that projection. The ternary weight $\widetilde{\w}^\ell$ is obtained element-wise as
\begin{align}
 [\widetilde{\w}^\ell]_{i} &=Q([{\w}^\ell]_{i})=  \begin{cases}
      1 \quad \ \ \text{if } [\w^\ell]_{i} > \tau^\ell\\
      -1 \quad \text{if } [\w^\ell]_{i} < -\tau^\ell\\
      0 \quad  \text{  otherwise},
  \end{cases}
  \label{eq:quant}
\end{align}
with threshold $\tau^\ell= 0.5\,(\gamma^\ell + \delta)$, where $\gamma^\ell = \sum_{i=1}^{n^\ell}|w_{i}^\ell|/{n^\ell}$ is the average absolute value of the weights, and $\delta>0$ is a small constant for numerical stability (e.g., $\delta=10^{-5}$). 
 Accordingly, weights far from either threshold are effectively frozen in ternary space when subject to small updates, while weights in the immediate vicinity of the thresholds are the most sensitive to adaptation. Based on this observation, TerMe\gls{zo} restricts fine-tuning to the subset of latent weights closest to the decision boundaries.

\noindent\textbf{TerMeZO.}
In TerMeZO, a subset of latent weights $\w^\ell$ and the corresponding quantization threshold $\tau^\ell$ are updated at each fine-tuning iteration $t$. We denote by $\w^\ell_t$ the current weight vector of ternary linear projection $\ell$ at iteration $t$ and by $\tau_t^\ell$ the corresponding threshold. For a given latent-weight vector $\w^\ell_t$, the distance of the $i$-th entry $[{\w}^\ell_t]_{i}$ to the quantization boundary is measured by
\begin{equation}
    d_{i,t}^\ell(\w^\ell_t) = \min\bigl(|[\w^\ell_t]_i - \tau_t^\ell|,\; |[\w^\ell_t]_i + \tau_t^\ell|\bigr),
    \label{eq:distance}
\end{equation}
where $[\cdot]_i$ denotes the $i$-th element of the argument vector. TerMeZO takes as input a target fraction $\rho_0 \in (0,1]$ of latent weights $[{\w}^\ell]_{i}$ to be fine-tuned. To this end, based on the pretrained model, it selects the subsets
\begin{equation}
    \mathcal{S}_0^\ell = \Bigl\{ i : d_{i,0}^\ell\bigl(\w_0^\ell\bigr) \leq \xi_0 \Bigr\},
    \label{eq:selection_init}
\end{equation}
where the threshold $\xi_0$ is selected so that a fraction $\rho_0$ of the total number of latent weights, across all layers, is retained. Concretely, this is done by computing the distances of all latent weights (across all projection layers) to their decision boundaries,  setting $\xi_0$ to the $\rho_0$-quantile of all distances. A validation of this criterion based on a comparison with \gls{bp} can be found in Appendix \ref{app:val_criterion}.

The sets $\{\mathcal{S}_0^\ell\}_{\ell=1}^L$ are computed once, offline, and the complexity of this step is linear in the number of latent weights. For the following iterations $t \geq 1$, we allow the active set only to shrink by retaining only previously-active indices whose latent weights still lie within distance $\xi_0$ of the decision boundaries as
\begin{equation}
    \mathcal{S}_t^\ell = \Bigl\{ i \in \mathcal{S}_{t-1}^\ell : d_{i,t}^\ell\bigl(\w_t^\ell\bigr) \leq \xi_0 \Bigr\}.
    \label{eq:selection_}
\end{equation}
We denote the sparsification mask by $\mathbf{m}^\ell_t \in \{0,1\}^{n^\ell}$, with $[\mathbf{m}^\ell_t]_i = 1 \iff i \in \mathcal{S}_t^\ell$. We further let $\mathbf{m}_t = \{\mathbf{m}^\ell_t\}_{\ell=1}^L$ denote the vector collecting the masks across all layers, and we write the masked latent weights as $\overline{\mathbf{w}}_t = \mathbf{m}_t \odot \mathbf{w}_t$. By construction of set \eqref{eq:selection_}, we have the inclusion $\mathcal{S}_t^\ell \subseteq \mathcal{S}_{t-1}^\ell$ via \eqref{eq:selection_}, so that the fraction of active latent weights is non-increasing in the iteration index $t$ and upper-bounded by the prescribed budget $\rho_0$. Once a weight drifts outside the band $\xi_0$ around its threshold, it is permanently frozen for the remainder of fine-tuning.

{The pseudo-code of the TerMe\gls{zo} algorithm can be found in Appendix~\ref{app:pseudocode}.} At each iteration $t$, TerMe\gls{zo} updates the sparse sets $\mathcal{S}_{t}^\ell$ via \eqref{eq:selection_init}, and performs a standard \gls{zo} update on the full-precision parameters $\btheta_t$ and the selected latent weights $\overline{\w}_t $. To elaborate, we focus here on \gls{sft}, and refer to Appendix \ref{app:rl_app} for the application of TerMeZO for \gls{rl}-based fine-tuning as a form of~\gls{es}~\citep{qiu2025evolution}.

Let \( f(\tilde{\mathbf{w}}, \boldsymbol{\theta}; \mathbf{x}, y) \) represent the sample-wise supervised loss for input-output data point $(\mathbf{x}, y)$, which is parameterized by ternary weights \( \tilde{\mathbf{w}} \) and full-precision parameters \( \boldsymbol{\theta} \). Accordingly, the population loss for BitNet models can be expressed as a function of the full-precision weights $\w$ and $\btheta$ as
\begin{align}
    \tilde F(\w,\btheta) = F(Q(\w),\btheta) = \mathbb{E}_{(\mathbf{x},y)}\bigl[f(Q(\w),\btheta;(\mathbf{x},y))\bigr],
    \label{eq:objective}
\end{align}
with $Q(\cdot)$ being the quantization function and $\mathbb{E}_{(\mathbf{x},y)}[\cdot]$ representing the expectation over the data distribution. A batch loss estimate using a mini-batch $\mathcal{B}$ of size $|\mathcal{B}|$ is denoted as
\begin{align}
    \widehat{\widetilde{F}}(\w,\btheta; \mathcal{B}) = \frac{1}{|\mathcal{B}|}\sum_{(\mathbf{x},y) \in\mathcal{B}}\bigl[f(Q(\w),\btheta;(\mathbf{x},y))\bigr].
    \label{eq:objective_est}
\end{align}
At each iteration $t$, $K$ Gaussian perturbations are applied to both the masked latent weights, i.e., the elements $i$ of vector  $\overline{\w}_t = \m_t\odot\w_t$ with $[\m_t]_i=1$, and to the full-precision parameters $\btheta_t$. A two-sided SPSA estimator is used, where the loss is evaluated along each perturbation direction with scale $\epsilon>0$ on a shared mini-batch $\mathcal{B}_t$, and the resulting finite differences are averaged over the $K$ perturbations to update the model parameters. 





\section{Convergence Analysis}
\label{sec:convergence}
In this section, we study the convergence of TerMeZO. To facilitate the analysis, we assume that a single perturbation is applied at each iteration, i.e., $K=1$, but the extension to general $K$ follows by standard averaging arguments~\citep{malladi2023fine}). The objective function $\tilde F(\w,\btheta)$ in \eqref{eq:objective} is non-differentiable with respect to latent weights $\w$, making it difficult to analyze the convergence to a local optimizer. However, \gls{spsa} with Gaussian perturbations $\z\sim \mathcal{N}(\mathbf{0}, \mathbf{I})$ provides an unbiased estimator of the gradient of the Gaussian smoothed-objective~\citep{Nesterov2015RandomGM}
\begin{equation}
\tilde{F}_\epsilon(\w, \btheta) = \mathbb{E}_{\z\sim \mathcal{N}(\mathbf{0}, \mathbf{I})}[F(\w+\epsilon \z, \btheta)],
    \label{eq:smoothed_obj}
\end{equation}
i.e.,
\begin{align}
    \mathbb{E}_{\z\sim \mathcal{N}(\mathbf{0}, \mathbf{I})}\!\left[
        \frac{\tilde F(\w+\epsilon \z,\btheta) - \tilde F(\w-\epsilon \z,\btheta)}{2\epsilon}\,\z
    \right]
    \;=\; \nabla_\w \tilde F_\epsilon(\w,\btheta).
\end{align}
For completeness, a proof of this result can be found in Appendix \ref{app:spsa}. As in~\citep{ghadimi2013stochastic,liu2018zeroth,malladi2023fine}, we study convergence to a stationary point with respect to the smoothed objective \eqref{eq:smoothed_obj} through the time-averaged expected squared gradient norm
\begin{align}
\frac{1}{T}\sum_{t=0}^{T-1}\mathbb{E}\|\nabla \tilde F_\epsilon(\w_t,
\btheta_t)\|^2=\frac{1}{T}\sum_{t=0}^{T-1}
    \mathbb{E}\bigl[\|\nabla_\w \tilde F_\epsilon(\w_t,\btheta_t)\|^2
    + \|\nabla_\btheta \tilde F_\epsilon(\w_t,\btheta_t)\|^2\bigr].
    \label{eq:avd_grad_norm}
\end{align}

To this end, we make the following four assumptions, which are formally stated in Appendix~\ref{app:assumptions}:
\textit{Assumption 1)} The objective function $\tilde F(\w,\btheta)$ is uniformly bounded;
\textit{Assumption 2)} The loss function $F(\w,\btheta)$ is $L$-Lipschitz in $\w$  and has $L$-Lipschitz gradients in $\btheta$;
 \textit{Assumption 3)} The masked gradient estimator captures a $c$-fraction of the gradient energy of the smoothed objective $\tilde F_\epsilon(\w,\btheta)$ along the latent weights $\w$; and \textit{Assumption 4)} the minibatch gradients have variance at most $\sigma_B^2$. Furthermore, we define as
$k_t$ the number of
active coordinates during iteration $t$, and introduce the \textit{average boundary-crossing fraction} $q_t$ to measure the average fraction of active weights that change value after a perturbation:
\begin{align}
    k_t = \sum_{\ell=1}^L\sum_{i=1}^{n^\ell}[\m_t^\ell]_i, \quad \quad q_t = \frac{1}{k_t}\sum_{\ell=1}^L \sum_{i:\,[\m_t^\ell]_i=1} 
    P \!\bigl[Q([\w_t^\ell + \epsilon \z^\ell]_i) 
    \neq Q([\w_t^\ell]_i)\bigr]. 
   \label{eq:avg-flip}
\end{align}
We donote by
$    N_t = \sum_{\ell=1}^L \sum_{i=1}^{n^\ell}
    \E \!\bigl[\bigl(Q([\w_t^\ell + \epsilon \z^\ell]_i)
    - Q([\w_t^\ell]_i)\bigr)^2\bigr]
$
 the expected number of latent weights that change value after a perturbation.


\begin{theorem}
\label{thm:main}
Let Assumptions~\ref{ass:bounded}--\ref{ass:variance} hold. Define
$L_\w = {\sqrt{2k_0}\,L}/{\epsilon^2}$
   and 
    $L_{\max}=\max\bigl\{L_\w,\;8c^2L(p+2)\bigr\}$. Consider TerMe\gls{zo} with constant step sizes $\eta = c/(2 L_\w)$ for
the latent weights $\w$ and $\mu = c^2/L_{\max}$ for the
full-precision parameters $\btheta$. Then, after $T$ steps of TerMeZO algorithm, the time-averaged gradient norm \eqref{eq:avd_grad_norm} satisfies the inequality
\begin{align}
    \frac1T\sum_{t=0}^{T-1}\E\|\nabla\tilde F_\epsilon(\w_t,\btheta_t)\|^2
    \;\leq\;
    &\frac{2L_{\max}\Delta_0}{c^2T}
   +\mathcal{O}\!\left( \epsilon^2 p^2 + \overline{N} + \sigma_B^2
      + L_{\max}\,\overline{\sigma^2_\w} \right),
    \label{eq:main_bound}
\end{align}
where $\overline{N} = \frac{1}{T}\sum_{t=0}^{T-1}N_t$; $\overline{\sigma^2_\w} = \frac{1}{T}\sum_{t=0}^{T-1}\E[\sigma^2_{\w,t}]$, with $\sigma^2_{\w,t}=k_t^2q_t+\sqrt3\,k_t\sqrt{q_t}$; $\Delta_0 = \tilde F_\epsilon(\w_0,\btheta_0) - \tilde F_\epsilon^*$ is the initial optimality gap; $c\in(0,1]$ is the mask-overlap constant of Assumption~\ref{ass:sparse}; and $\sigma_B^2$ is the minibatch variance of Assumption~\ref{ass:variance}. 
\end{theorem}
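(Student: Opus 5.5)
The plan is the standard descent-lemma argument for ZO methods (as in Ghadimi--Lan and MeZO), applied to the smoothed objective $\tilde F_\epsilon$ and treating the $\w$-block and the $\btheta$-block separately. First I establish smoothness of $\tilde F_\epsilon$. In $\w$, Assumption 2 (Lipschitzness of $F$ in $\w$) together with Gaussian smoothing gives a gradient-Lipschitz constant of order $L/\epsilon$ restricted to the active coordinates. I expect the stated constant $L_\w=\sqrt{2k_0}L/\epsilon^2$ to come from bounding the Hessian of the smoothed function through $\E[\z\z^\top - \mathbf I]$ over the $k_t\le k_0$ active coordinates. Here I use $k_t\le k_0$, which holds by the nested-mask property $\mathcal S_t^\ell\subseteq\mathcal S_{t-1}^\ell$. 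In $\btheta$, the gradient is already $L$-Lipschitz, and smoothing preserves this constant. The one-step descent inequality then reads
\[
\tilde F_\epsilon(\w_{t+1},\btheta_{t+1})\le \tilde F_\epsilon(\w_t,\btheta_t)-\eta\langle\nabla_\w\tilde F_\epsilon,\m_t\odot\hat g_\w\rangle-\mu\langle\nabla_\btheta\tilde F_\epsilon,\hat g_\btheta\rangle+\tfrac{L_\w\eta^2}{2}\|\m_t\odot\hat g_\w\|^2+\tfrac{L\mu^2}{2}\|\hat g_\btheta\|^2 .
\]

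Next I take conditional expectations and handle each term in turn.
\begin{itemize}
\item \emph{First-order $\w$-term.} The unbiasedness identity (Appendix \ref{app:spsa}) shows that the expected masked estimator is $\m_t\odot\nabla_\w\tilde F_\epsilon$. Assumption 3 then lower-bounds the cross term by $c\|\nabla_\w\tilde F_\epsilon\|^2$, up to a minibatch error controlled by Assumption 4, which contributes $\sigma_B^2$.
\item \emph{First-order $\btheta$-term.} Since $F$ is smooth in $\btheta$, the SPSA estimator's bias relative to $\nabla_\btheta\tilde F_\epsilon$ is $\mathcal O(\epsilon L p)$. This yields the $\epsilon^2p^2$ term after applying Young's inequality.
\item \emph{Second-moment $\btheta$-term.} The standard Gaussian identity gives $\E\|\hat g_\btheta\|^2\le 2(p+2)\|\nabla_\btheta\|^2+\mathcal O(\epsilon^2L^2p^3)$. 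This is where the term $8c^2L(p+2)$ in $L_{\max}$ comes from, and the choice $\mu=c^2/L_{\max}$ makes the quadratic term at most half the linear one.
\end{itemize}

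The main obstacle is the second moment of the $\w$-estimator, since $Q$ is piecewise constant. My first step is to write the finite difference $\tilde F(\w+\epsilon\z)-\tilde F(\w-\epsilon\z)$. It vanishes unless some active coordinate crosses a threshold. Using the Lipschitz bound on $F$ in its ternary argument, $|F(Q(\w+\epsilon\z))-F(Q(\w))|\le L\|Q(\w+\epsilon\z)-Q(\w)\|$. Each crossing changes $Q$ by at most $1$ (the threshold drift is absorbed), so the squared difference is at most $L^2$ times the number of flips, which involves $N_t$. Multiplying by $\|\m_t\odot\z\|^2$ and applying Cauchy--Schwarz with the Gaussian fourth moment $\E\|\m_t\odot\z\|^4\le 3k_t^2$ gives a bound in terms of the flip indicators. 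I would then split $\E[(\#\text{flips})^2]$ into diagonal and cross parts to obtain $k_t^2q_t+\sqrt3k_t\sqrt{q_t}$, where the $\sqrt3$ comes from the fourth moment and $\sqrt{q_t}$ from Cauchy--Schwarz on a single indicator. This gives $\E\|\m_t\odot\hat g_\w\|^2\lesssim (L^2/\epsilon^2)\sigma^2_{\w,t}$. With $\eta=c/(2L_\w)$, the quadratic term is $\mathcal O(L_{\max}\sigma^2_{\w,t})$ after normalizing by $c^2/L_{\max}$. The residual $N_t$ term arises from the mismatch between $\tilde F$ and its smoothed version when the two-sided difference is compared to its one-sided counterpart.

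Finally, I combine the two blocks into $\frac{c^2}{2L_{\max}}\E\|\nabla\tilde F_\epsilon\|^2\le \E[\tilde F_\epsilon(t)-\tilde F_\epsilon(t+1)]+\text{errors}$. Here I use $\eta c/2\ge c^2/(4L_\w)\ge c^2/(4L_{\max})$ so that both blocks share a common coefficient. Telescoping over $t=0,\dots,T-1$, using Assumption 1 to bound $\tilde F_\epsilon(\w_T,\btheta_T)\ge\tilde F_\epsilon^*$, and dividing by $Tc^2/(2L_{\max})$ gives \eqref{eq:main_bound}, with the averaged quantities $\overline N$ and $\overline{\sigma^2_\w}$.
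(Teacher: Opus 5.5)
Your skeleton matches the paper's: a descent lemma for $\tilde F_\epsilon$ with curvature $L_\w$ only along the active support, Assumption~\ref{ass:sparse} for the linear $\w$-term, Gaussian moment bounds for the $\btheta$-block, and telescoping. However, two load-bearing steps fail.

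First, the bias of $\hat\bg_{\btheta,t}$ is not $\mathcal{O}(\epsilon Lp)$. This estimator perturbs only $\btheta$ and evaluates the loss at the unsmoothed ternary point $Q(\w_t)$. Its mean is therefore $\E_{\bxi}[\nabla_\btheta F(Q(\w_t),\btheta_t+\epsilon\bxi)]$. The target $\nabla_\btheta\tilde F_\epsilon(\w_t,\btheta_t)=\E_{\z}[\nabla_\btheta F(Q(\w_t+\epsilon\z),\btheta_t)]$ instead averages over perturbations of all $d$ latent weights. On top of the $L\epsilon\sqrt{p}$ smoothing bias in $\btheta$, there is the mismatch $\|\nabla_\btheta F(Q(\w_t),\btheta_t)-\E_{\z}\nabla_\btheta F(Q(\w_t+\epsilon\z),\btheta_t)\|\le L\,\E\|Q(\w_t+\epsilon\z)-Q(\w_t)\|\le L\sqrt{N_t}$, which has nothing to do with smoothness in $\btheta$. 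This is exactly where $\overline{N}$ enters in the paper. It appears in the bias, and, via $\|\nabla_\btheta F(Q(\w_t),\btheta_t)\|^2\le 2\|\nabla_\btheta\tilde F_\epsilon\|^2+2L^2N_t$, in the second moment of $\hat\bg_{\btheta,t}$.

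The source you propose for $N_t$ does not exist. The masked estimator perturbs only the $k_t$ active coordinates, so its expected flip count is $k_tq_t$, not $N_t$. By the symmetry $\bar\z\mapsto-\bar\z$, the two-sided difference is exactly unbiased for the gradient along $S_t$ of the objective smoothed on $S_t$ only, so there is no two-sided/one-sided residual. For the same reason, $\E_t[\hat\bg_{\w,t}]\neq\m_t\odot\nabla_\w\tilde F_\epsilon$ in general, which is why alignment is assumed rather than derived.

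Likewise, $\sigma_B^2$ cannot come from the $\w$-term. Assumption~\ref{ass:sparse} is stated for $\E_t[\hat\bg_{\w,t}]$, which already averages over $\mathcal{B}_t$, and Assumption~\ref{ass:variance} concerns only $\nabla_\btheta$. It must appear as $2(p+2)\sigma_B^2$ in the second moment of $\hat\bg_{\btheta,t}$, which your bound $2(p+2)\|\nabla_\btheta\|^2+\mathcal{O}(\epsilon^2L^2p^3)$ omits. So the $\overline N$ and $\sigma_B^2$ terms in your final bound are not actually derived.

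Second, your descent inequality drops the cross-block term. Separate curvature constants for $\w$ and $\btheta$ do not imply it when both blocks move in the same step: for $f(w,\theta)=aw\theta$ each block is flat, yet the second-order remainder is $a\,\Delta w\,\Delta\theta$. The paper handles this in three steps:
\begin{enumerate}
\item It proves, for $\bu$ supported on $S$ with $|S|\le k_0$, that $|\langle\nabla_\w\tilde F_\epsilon(\w,\btheta)-\nabla_\w\tilde F_\epsilon(\w,\btheta'),\bu\rangle|\le L_{\w\btheta}\|\bu\|\|\btheta-\btheta'\|$ and $\|\nabla_\btheta\tilde F_\epsilon(\w+\bu,\btheta)-\nabla_\btheta\tilde F_\epsilon(\w,\btheta)\|\le L_{\w\btheta}\|\bu\|$, with $L_{\w\btheta}=\sqrt{k_0}L/\epsilon$. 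This again goes through the flip bound $L\sqrt{D_S}$, not through Lipschitzness of $F\circ Q$, which fails.
\item It adds $\eta\mu L_{\w\btheta}\|\hat\bg_{\w,t}\|\|\hat\bg_{\btheta,t}\|$ to the descent inequality and splits it with Young's inequality.
\item The resulting extra $\sigma^2_{\w,t}$, $N_t$ and $\sigma_B^2$ terms are absorbed into the $\mathcal{O}(\cdot)$ using $L_{\max}\ge 8c^2L(p+2)$.
\end{enumerate}

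Two smaller fixes. A global Cauchy--Schwarz with $\E\|\bar\z\|^4\le 3k_t^2$ gives $\sqrt{3}k_t\sqrt{k_tq_t+k_t^2q_t^2}$, which carries $k_t^{3/2}\sqrt{q_t}$ rather than $k_t\sqrt{q_t}$ and so does not give $\sigma^2_{\w,t}$. To get $k_t^2q_t+\sqrt3k_t\sqrt{q_t}$, expand $\E[D\|\bar\z\|^2]=\sum_{i,j}\E[D_iz_j^2]$, use independence for $i\neq j$, and apply Cauchy--Schwarz only on the diagonal. Also, the $\w$-block keeps its full coefficient $c\eta$, since its quadratic term is pure variance. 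The common coefficient is therefore $\mu/2$, via $\mu\le 2c\eta$. Your chain $\eta c/2\ge c^2/(4L_{\max})$ only supports $c^2/(4L_{\max})$ and would double the leading term.
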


We make the following remarks about connections of Theorem~\ref{thm:main} with prior art:
\begin{itemize}[nosep, leftmargin=*]
    \item When no quantization is applied, only the terms corresponding to the full-precision parameters $\btheta$ remain in the bound \eqref{eq:main_bound}, and thus standard Me\gls{zo} convergence rate $O(p/T)$ for $p$ trainable parameters is recovered~\citep{malladi2023fine,liu2018zeroth}.
    \item When the initial mask size $k_0$ dominates the number of trainable parameters, i.e., when $k_0\gg p$, we have $L_{\max}=L_\w=\sqrt{2k_0}\,L/\epsilon^2$, and the bound \eqref{eq:main_bound} gives an overall convergence rate $O\!\bigl(\sqrt{k_0}/(\epsilon^2 T)\bigr)$ to a neighborhood of a stationary point, which is governed by the number $k_0$ of active latent weights.
    \item A convergence rate of $O(k/T)$ was proved for sparse MeZO~\citep{guo2024zerothorder}, where $k$ is the mask size. Setting $\epsilon^2 = 1/\sqrt{k_0}$ in \eqref{eq:main_bound} recovers this rate. Note that this scale gives the range $\epsilon \in [ 3\cdot 10^{-3}, 10^{-2}]$ for active-parameter counts $k_0 \in [10^{8},\, 10^{10}]$, matching the typical order of magnitude of the perturbation scale $\epsilon$ used in
    practice~\citep{malladi2023fine}. 
\end{itemize}

\section{Memory-Footprint Analysis}
\label{sec:mem_analysis}
In this section, we elaborate on the memory savings afforded by TerMeZO over \gls{bp} and full-parameter MeZO. We start by relating the memory footprints of full-parameter MeZO and TerMeZO for the same Bitnet model, and we then provide an experimental evaluation to support the analysis. The key difference between TerMeZO and full-parameter MeZO is that the former keeps a full-precision copy only of the active latent weights, while also additionally storing the sparsification mask. Let $B$ be the number of bytes per full-precision weight, e.g., $B=2$ for bfloat16, and let $B_Q$ be the effective number of bytes per packed ternary weight, e.g., $B_Q=2/8$ for BitNet models \citep{wang2023bitnet,ma2025bitnetb1582b4ttechnical}. Downgrading a frozen latent weight from full precision to ternary saves $B-B_Q$ bytes. A fraction $1-\rho_0$ of the weights are frozen in TerMeZO, while each of the $\rho_0 d$ active weights additionally retains its $B_Q$-byte quantized copy. Hence, the peak TerMeZO memory footprint $M_{\text{TerMeZO}}$ can be expressed as a function of the memory footpring of MeZO $M_{\text{MeZO}}$ as
\begin{equation}
    M_{\text{TerMeZO}}
    \;=\;
    M_{\text{MeZO}}
    \;-\;\lceil d\,(1-\rho_0)\,(B-B_Q)\rceil
    \;+\;\lceil \rho_0 d\, B_Q\rceil
    \;+\;M_{\text{mask}}.
    \label{eq:memory}
\end{equation}
The mask size in \eqref{eq:memory} is given by $M_{\text{mask}}=\min\bigl(\lceil d/8 \rceil,\ \lceil \rho_0 d\lceil\log_2 d\rceil/8\rceil\bigr)$
bytes, where the first term corresponds to the option of a dense bit-mask storing
one bit per latent weight, and the second to an active-index list that records the
$\lceil\log_2 d\rceil$-bit index of each active weight.


Table~\ref{tab:memory_footprint} reports the measured memory footprint on GPU of BP, full-parameter MeZO, and TerMeZO, along with that of pure inference as a lower bound, for fine-tuning of Falcon-E-1B and Falcon-E-3B on an H100 GPU. Measuring the footprint of TerMeZO and of inference each requires a custom kernel. A BitNet inference kernel for GPUs is already available~\citep{ma2025bitnetb1582b4ttechnical}, and we implement an analogous kernel for TerMeZO. The table also shows the estimate \eqref{eq:memory} for comparison. The results in Table~\ref{tab:memory_footprint} confirm the significant memory savings of MeZO relative to BP \citep{katti2025device}. TerMeZO improves further on MeZO by a wide margin. In particular, at sparsity level $\rho_0=0.05$, it reduces the footprint by roughly $3\times$ relative to MeZO, and by around $4\times$ at level $\rho_0=0.01$, bringing the memory cost of fine-tuning close to that of inference.

\begin{table}[ht]
\centering
\caption{Memory footprint (GB) during fine-tuning of Falcon-E-1B and Falcon-E-3B on an H100 GPU. For TerMeZO, we report both measured and estimated values using \eqref{eq:memory}. The TerMeZO and inference numbers assume $2$-bit storage of the ternary weights. Results here are measured usi \gls{sft} with the context length
fixed to 512. }
\label{tab:memory_footprint}
\small
\begin{tabular}{lccccc}
    \toprule
                        & \textbf{BP} & \textbf{MeZO} & \textbf{TerMeZO } & \textbf{TerMeZO} & \textbf{Inference}\\
                        &              &               &    ($\rho_0=0.05$) & ($\rho_0=0.01$) &
                        \\
    \midrule
    \textbf{Falcon-E-1B} & 16.08 & 3.61 & \textbf{1.33} (est. 1.31) & \textbf{0.94} (est. 0.93) & 0.89 \\
    \textbf{Falcon-E-3B} & 28.50 & 6.28 & \textbf{2.08} (est. 2.04) & \textbf{1.36} (est. 1.34) & 1.24 \\
    \bottomrule
\end{tabular}
\end{table}

\section{Empirical Results}
\label{sec:results}

We conduct experiments on a set of publicly available BitNet models across a range of tasks. Specifically, we evaluate on Microsoft's BitNet-b1.58-2B-4T~\citep{ma2025bitnetb1582b4ttechnical}, a 2B-parameter model pre-trained on 4T tokens, and the Falcon-Edge family~\citep{tiionebitllms}, a set of BitNet models ranging from 1B to 3B parameters designed for on-device deployment. Through these experiments, we aim to answer the following questions:
1) Does TerMe\gls{zo} match the performance of full-parameter Me\gls{zo} on BitNet models? 2) How does TerMe\gls{zo} compare to existing sparse and \gls{peft} variants of MeZO on BitNet models?

\noindent\textbf{GLUE/SuperGLUE Classification Tasks.} Here, we benchmark TerMe\gls{zo} against the following baselines: (i) standard \gls{bp} fine-tuning with Adam optimizer; (ii) in-context learning (ICL) \citep{brown2020language}; (iii) full-parameter MeZO~\citep{malladi2023fine}, which applies \gls{zo} updates to all latent weights and full-precision parameters; (iv) Me\gls{zo} with LoRA (LoRA-MeZO); (v) FIM-S-MeZO~\citep{guo2024zerothorder}; and (vi) QZO~\citep{shang2026finetuning}, a Me\gls{zo} variant tailored to quantized models that perturbs the quantization scales rather than the latent weights themselves. Following the conventional Me\gls{zo} evaluation protocol~\citep{malladi2023fine}, fine-tuning is cast as language modeling over a task-specific prompt template. We report results on six standard classification benchmarks from the GLUE~\citep{wang2018glue} and SuperGLUE~\citep{wang2021adversarial} suites; namely {SST2}~\citep{socher2013recursivesst2}, {RTE}~\citep{dagan2005pascal}, {CoPA}~\citep{roemmele2011choice},  {CB} (CommitmentBank)~\citep{de2019commitmentbank}, {BoolQ}~\citep{clark2019boolq}, and {MultiRC}~\citep{MultiRC2018}. For all tasks, we report the accuracy on the validation set in Table~\ref{tab:glue_tasks} after fine-tuning using the BitNet-b1.58-2B-4T model. All \gls{zo} methods are run for the same number of optimization steps to ensure a fair comparison. Hyperparameters and experiment details can be found in Appendix \ref{app:glue_exp}.

We observe that TerMeZO is as effective as full-parameter MeZO fine-tuning on most of these tasks, and even exceeds it on some tasks. QZO, which fine-tunes only the scales of the ternary projections, is also observed to perform well. This is expected as these tasks are relatively simple, implying that the model already possesses the required knowledge and tuning only a subset of the full-precision parameters is sufficient. In contrast, LoRA-MeZO lags behind the other baselines. We attribute this to the known limitation of \gls{zo} optimization as a method for training from scratch \citep{malladi2023fine}. In fact, unlike full fine-tuning, which starts from a well-conditioned pretrained initialization, LoRA introduces randomly initialized low-rank matrices. This effect is compounded in our setting, since, in order to remain compatible with the BitNet architecture, we constrain the LoRA projections to be ternary.

\begin{table}[h]
\centering
\caption{Accuracy $(\%)$ of TerMeZO and baselines on classification tasks using the {Microsoft/bitnet-b1.58-2B-4T} BitNet model~\citep{ma2025bitnetb1582b4ttechnical}. The fraction of trainable latent weights is fixed to $\rho_0 = 0.05$ for TerMeZO.}
\label{tab:glue_tasks}
\small
\begin{tabular}{lcccccc}
    \toprule
                            & SST2      & RTE   & CoPA  & CB    & BoolQ     & MultiRC   \\
    \midrule
    \textbf{BP}             & 94.8      & 84.1  & 83    & 88.0  & 86.1      & 83.4      \\
    \textbf{ICL}            & 85.4      & 75.4  & 80    & 78.5  & 76.8      & 73.1      \\
    \midrule
    \textbf{MeZO}           & 93.2      & 81.9  & 78    & 87.9  & 82.3      & 82.2      \\
    \textbf{LoRA-MeZO}      & 55.0      & 71.6  & 73    & 55.5  & 54.5      & 66.6      \\ 
    \textbf{FIM-S-MeZO}     &  92.3       & 82.4  & 75    & 83.8   & 79.5   & 81.6       \\
    \textbf{QZO}            & 91.7      & 79.7  & 77    & 82.8  & 82.9      & 81.8      \\
    \textbf{TerMeZO (Ours)} & 93.1      & 83.4  & 80    & 84.8  & 82.7      & 83.4      \\
    \bottomrule
\end{tabular}
\end{table}

\noindent\textbf{Instruction Following and Reasoning Tasks.} We now consider more challenging fine-tuning tasks that potentially require the acquisition of new knowledge, and for which optimizing the ternary weights is required. Specifically, we adopt an instruction-following dataset, MagiCoder~\citep{wei2023magicoder}, which measures the ability of the fine-tuned model to follow natural language instructions and generate correct code, as well as a mathematical reasoning task, GSM8K~\citep{cobbe2021gsm8k}. We include here the S-MeZO baseline with two selection rules, namely maximum weight magnitude and minimum weight magnitude \citep{liu2024sparse}.


\begin{table}[h]
\centering
\caption{Performance of TerMeZO with $\rho_0=0.05$ and benchmarks on GSM8K and MagiCoder using two BitNet models, Falcon-E-1B-Base and Falcon-E-3B-Base. We report accuracy (\%) on GSM8K and pass@1 (\%) on MagiCoder. }
\label{tab:gsm8k_magicoder}
\small
\begin{tabular}{lcccc}
    \toprule
    & \multicolumn{2}{c}{\textbf{Falcon-E-1B-Base}} & \multicolumn{2}{c}{\textbf{Falcon-E-3B-Base}} \\
    \cmidrule(lr){2-3} \cmidrule(lr){4-5}
                            & GSM8K     & MagiCoder     & GSM8K     & MagiCoder     \\
    \midrule
    \textbf{Base Model}     & 14.0      & 17.1          & 20.0     & 37.8           \\
    \textbf{BP}             & 37.7      & 32.9          & 57.4     & 49.4           \\
    \midrule
    \textbf{MeZO}           & 32.6      & 25.6          & 45.3     & 42.7           \\
 \textbf{FIM-S-MeZO}        & 32.5      & 25.0          & 44.5     & 43.3           \\
  \textbf{S-MeZO (min)}     & 27.8      & 15.8          & 26.8     & 41.5           \\
   \textbf{S-MeZO (max)}    & 22.1      & 14.6          & 27.4     & 42.1           \\
    \textbf{QZO}            & 18.6     & 16.4          & 23.3     & 42.0           \\
    \textbf{TerMeZO (Ours)} & 31.8     & 24.4          & 44.4     & 43.3           \\
    \bottomrule
\end{tabular}
\end{table}

Table~\ref{tab:gsm8k_magicoder} reports results on GSM8K and MagiCoder for two BitNet models, Falcon-E-1B-Base and Falcon-E-3B-Base. For GSM8K, we report the test accuracy,
whereas for the models fine-tuned on MagiCoder, we report pass@1 on HumanEval~\citep{chen2021evaluating}, the percentage of problems for which the single sampled code completion passes all unit tests.  TerMeZO substantially improves over the base model and closely tracks full-parameter MeZO. QZO, by contrast, fails to match TerMeZO, indicating that for these tasks, fine-tuning the quantization scales alone is insufficient. We also observe a consistent performance gap between BP and TerMeZO for a given model size. However, TerMeZO's lower memory footprint makes it possible to fine-tune larger models than BP would allow under the same memory budget. For example, the 3B model fine-tuned with TerMeZO outperforms the 1B model fine-tuned with BP, despite the latter requiring considerably more memory on a smaller model size as shown in Table~\ref{tab:memory_footprint}. 
S-MeZO, whose mask selection relies on latent weight magnitude and ignores the quantizer geometry, falls behind by a significant margin. In contrast, while FIM-S-MeZO matches the performance of TerMeZO, it requires initial SGD steps on a separate dataset prior to fine-tuning, resulting in a memory footprint comparable to \gls{bp}. This performance parity with FIM-S-MeZO further validates the effectiveness of TerMeZO's selection criteria.

Although FIM-S-MeZO performs similarly to TerMeZO in these experirmnts, it does so under stronger requirements (e.g., access to \gls{fo} gradient information). We also observe that TerMeZO is more robust to the choice of learning rate. Specifically, for the same fraction of latent weights, some learning rates (e.g., $5\times10^{-4}$) cause FIM-S-MeZO to diverge, whereas TerMeZO remains stable (see Appendix~\ref{lrs_ablation}). We attribute this to the fact that a large proportion of the weights selected by FIM-S-MeZO are not reachable by perturbations, since they lie far from the quantization boundaries, which yields pure noise in the updates. Finally, we note that when an \gls{fo} warmup before fine-tuning is affordable (e.g., offline in the cloud), combining the selection criteria of TerMeZO and FIM-S-MeZO could capture the benefits of both. We leave this investigation to future work.

\section{Conclusion}
\label{sec:ccl}
\noindent\textbf{Summary.} This paper proposed TerMeZO, a sparse MeZO scheme for LLM fine-tuning tailored to BitNet models. The sparse mask selection exploits the ternary weight geometry and is inexpensive to evaluate for this class of models, requiring only a pass through all latent weights before fine-tuning. A convergence analysis of the proposed scheme is provided. Extensive experiments on a variety of datasets confirm the effectiveness of TerMeZO, showing that it recovers, and in some cases exceeds, the performance of full-parameter MeZO, while enabling a substantial reduction in the memory required for fine-tuning. By reducing the fraction of latent weights needed during fine-tuning, fine-tuning BitNet models can be carried out on edge devices with a near-inference memory footprint.

\noindent\textbf{Limitations.} An efficient implementation of TerMeZO requires custom, hardware-dependent kernels to realize the memory savings in practice. Future work could target such kernels with a
focus on both memory and latency. Additionally, due to the limited availability of larger open-source BitNet models, we have only evaluated our scheme on models ranging from 1B to 3B parameters. Validating the approach at larger scales is left for future work.

\noindent\textbf{Perspectives.} A natural extension of this work is to evaluate TerMeZO on BitNet-equipped linear-attention or hybrid-attention models, which would allow us to jointly address the memory footprint of the KV cache alongside that of the latent weights. This is particularly relevant for \gls{rl}-based fine-tuning where long generation rollouts can make KV cache storage a significant bottleneck. Another important direction is to validate the memory savings reported in this paper on real edge hardware with custom kernels, beyond the GPU-based setup used in our experiments.

\bibliographystyle{iclr2027_conference}
\bibliography{ref}

\newpage
\begin{appendices}

\section{Extended Related Work}
\label{app:related_work}

\subsection{PEFT methods}
Full fine-tuning of modern LLMs with first-order optimizers is often prohibitively expensive, particularly in resource-constrained settings, due to the substantial memory required for optimizer states and activations~\citep{1rajbhandari2020zero,dettmers2022bit}. Parameter-efficient fine-tuning (PEFT) methods mitigate this cost by freezing the pre-trained model and optimizing only a small number of additional parameters, motivated by the observation that fine-tuning updates typically lie in a low-dimensional subspace of the full parameter space~\citep{aghajanyan2021intrinsic,lialin2023scaling,han2024peft}. PEFT methods can be broadly classified into four families:

\noindent\textbf{Adapter-based methods.}
Adapters insert small trainable modules between frozen layers of the backbone~\citep{houlsby2019adapter,pfeiffer2021adapterfusion}. Variants reduce the additional inference cost by parallelizing adapters with the residual stream~\citep{he2022unified}, sharing them across layers~\citep{mahabadi2021compacter}, or making them sparsely activated~\citep{zhang2022adamix}.

\noindent\textbf{Soft-prompt and prefix methods.}
Prompt-tuning prepends a small set of trainable continuous tokens to the input embeddings~\citep{lester2021prompt,liu2022ptuning}, while prefix-tuning extends this to every transformer layer by injecting trainable key-value pairs into the attention mechanism~\citep{li2021prefix}. These methods are extremely parameter-efficient but are known to be sensitive to initialization and to underperform on smaller backbones~\citep{lester2021prompt,he2022unified}.

\noindent\textbf{Low-rank reparameterization.}
LoRA~\citep{hu2022lora} freezes the pre-trained weights and learns a low-rank additive update, so that only a small number of parameters are trained while inference cost remains unchanged after merging. It is typically applied to attention and feed-forward layers and has become the standard \gls{peft} method. Extensions include QLoRA~\citep{dettmers2023qlora}, QA-LoRA~\citep{ICLR2024_e6c2e85d}, AdaLoRA~\citep{zhang2023adalora}, DoRA~\citep{liu2024dora}, and VeRA~\citep{kopiczko2024vera}.

\noindent\textbf{Sparse parameter fine-tuning.}
A complementary line of work updates only a sparse subset of the original model weights, motivated by the lottery-ticket hypothesis~\citep{frankle2018lottery} and by the observation that fine-tuning induces sparse changes~\citep{guo2021diffpruning}. BitFit updates only bias terms~\citep{zaken2022bitfit}; Diff-Pruning learns a sparse difference vector with an $L_0$ relaxation~\citep{guo2021diffpruning}; FishMask selects parameters by Fisher information~\citep{sung2021training}; and SAM/PaFi-style methods select coordinates by saliency or magnitude~\citep{ansell2022composable,xu2024random}. Some recent works have also explored sparsity for \gls{zo} fine-tuning of \gls{llms}. Existing sparse \gls{zo} fine-tuning methods select weights using criteria that either ignore the quantization structure or recover it only at high cost. S-MeZO~\citep{liu2024sparse} picks weights by latent-weight magnitude, which is agnostic to the ternary decision boundaries, while FIM-S-MeZO~\citep{guo2024zerothorder} relies on first-order gradient information collected during pre-training to build its sparse mask. In practice, such information is often unavailable, necessitating first-order optimization on the target fine-tuning task or on a separate dataset as a prerequisite before the \gls{zo} fine-tuning process can even begin.

\subsection{Zeroth-Order Optimization and MeZO}
\gls{zo} optimization has traditionally targeted non-differentiable or black-box
objectives, as encountered in adversarial attacks~\citep{chen2017zoo,ilyas2018black}
and reinforcement learning~\citep{salimans2017evolution}, where analytical gradients
are unavailable by construction. MeZO~\citep{malladi2023fine} showed that the same
\gls{spsa} estimator can fine-tune LLMs at the memory cost of inference alone, and a
substantial body of subsequent work has sought to improve upon it. Sparse MeZO
(S-MeZO)~\citep{liu2024sparse} restricts perturbations to a subset of weights selected
by magnitude. MeZO-SVRG~\citep{gautam2024variancereducedzerothordermethodsfinetuning} reduces variance via control variates. HiZOO incorporates diagonal Hessian estimates~\citep{zhao2024helene}. Lo\gls{zo} and its variants combine Me\gls{zo} with low-rank perturbations~\citep{chen2025enhancingzerothorderfinetuninglarge}. Block-coordinate variants update only a subset of layers per step~\citep{zhang2024revisiting}. Several recent works~\citep{guo2024zerothorder,zhou2025quzo,shang2026finetuning} extend Me\gls{zo} to quantized models, which is the regime closest to ours and which we discuss next. More recently, \gls{zo} optimization has also proven effective for RL-based fine-tuning of \gls{llms}~\citep{qiu2025evolution,xu2026quantized}. 

\subsection{ZO Fine-tuning for Quantized Models}
Severals efforts have explored fine-tuning quantized models using \gls{zo} optimization without needing \gls{ste}. QuZO~\citep{zhou2025quzo} performs \gls{zo} fine-tuning entirely through low-precision forward passes, replacing the STE with optimized stochastic rounding to reduce gradient bias in settings with INT4 and INT8 weights. More recently, quantized evolution strategies~(QES)~\citep{xu2026quantized} fine-tune quantized LLMs directly in the discrete parameter space via accumulated error feedback. QZO~\citep{shang2026finetuning} proceeds by freezing the quantized weights entirely and fine-tuning the full-precision quantization scales solely. We show in our experiments that this is not sufficient for some fine-tuning tasks.

\newpage
\section{Empirical Evaluation of the Selection Criterion}
\label{app:val_criterion}

 To further motivate the selection criterion in \eqref{eq:selection_}, we now evaluate experimentally the relationship between to the gradient of the population loss \eqref{eq:objective} over the latent weights and the distance to quantization boundaries~\eqref{eq:distance}. We aim to verify whether during a run of \gls{bp} fine-tuning via stochastic gradient descent (SGD), implemented via \gls{ste}~\citep{bengio2013estimating}, there is sufficient overlap between the weights with the largest gradient signal and the weights close to the decision boundary as per the selection criterion \eqref{eq:selection_}. 

To this end, we partition the latent weights of each ternary linear projection layer $\ell$ at SGD step $t$ into bins $\{B_k\}$ according to their normalized distance to the nearest decision boundary, $\tilde{d}_{i,t}^\ell = d_{i,t}^\ell(\w^\ell_t)/\tau_t^\ell$ with distance $d_{i,t}^\ell(\w^\ell_t)$ defined in \eqref{eq:distance}, and we let $\mathcal{I}_k^{\ell,t} = \{i : \tilde{d}_{i,t}^\ell \in B_k\}$ be the corresponding index set. The per-coordinate gradient magnitude, averaged over all weights within each bin, across all ternary linear projection layers $\ell$, and across $T$ SGD steps, is
\begin{equation}
    G_k =  \frac{1}{TL}\sum_{t=1}^{T} \sum_{\ell =1}^L \frac{1}{|\mathcal{I}_k^{\ell,t}|}\sum_{i\in \mathcal{I}_k^{\ell,t}}\left|\frac{\partial {\tilde F}(\w_t,\btheta_t)}{\partial [\w_{t}^\ell]_i}\right|,
    \label{eq:avg_grad_mag}
\end{equation}
with $ {|\mathcal{I}_k^{\ell,t}|}$ denoting the cardinality of $\mathcal{I}_k^{\ell,t}$ and $\partial \tilde{F}(\w_t, \btheta_t)/\partial [\w_t^\ell]_i$ denoting the partial derivative of function $\tilde{F}$ with respect to parameter $[\w_t^\ell]_i$. Figure~\ref{sub1} plots the per-coordinate gradient magnitude \eqref{eq:avg_grad_mag} as a function of the normalized distance to quantization boundaries $\tilde{d}_{i,t}^\ell$ for the SST2 \citep{socher2013recursivesst2} and MultiRC \citep{MultiRC2018} datasets. The results show that the gradient magnitudes are substantially higher for latent weights lying close to the decision boundaries, providing empirical support for the boundary-proximity selection criterion in \eqref{eq:selection_}. Moreover, Figure~\ref{sub2} shows the ratio of the sparsified gradient norm $\|\m_t\odot \nabla_{\w} \tilde{F}(\w,\btheta)\| $ to the full gradient norm $\|\nabla_{\w_t} \tilde{F}(\w_t,\btheta_t)\|$ during $100$ SGD steps using the two datasets. It is observed that, even a small portion of the latent weights, namely $\rho_t=0.01$, recovers a significant fraction of the full gradient norm.
 \begin{figure}[ht]
  \centering
  \begin{subfigure}[t]{0.48\linewidth}
    \centering
     \begin{tikzpicture}
    \begin{axis}[
      width=\linewidth, height=5cm,
      xlabel={Normalized Distance to Quantization Boundary},
      ylabel={Avg Gradient Magnitude},
      xmin=0.0, xmax=3, ymin=0, ymax=0.002,
    unbounded coords=discard,
    restrict x to domain=0:3,
      xtick={0,1,2,3},
      yticklabel style={/pgf/number format/fixed, /pgf/number format/precision=1},
      ymajorgrids=true,
      grid style={dotted,gray!50},
      tick label style={font=\scriptsize},
      label style={font=\scriptsize},
      legend style={font=\scriptsize, at={(0.98,0.98)}, anchor=north east},
    ]

      \addplot[name path=upper1, draw=none, forget plot] coordinates {
    (0.0250,0.002604) (0.0750,0.001511) (0.1250,0.001260) (0.1750,0.001131) (0.2250,0.001042) (0.2750,0.000973) (0.3250,0.000930) (0.3750,0.000890) (0.4250,0.000856) (0.4750,0.000829) (0.5250,0.000809) (0.5750,0.000786) (0.6250,0.000771) (0.6750,0.000754) (0.7250,0.000740) (0.7750,0.000729) (0.8250,0.000718) (0.8750,0.000712) (0.9250,0.000702) (0.9750,0.000701) (1.0250,0.000272) (1.0750,0.000275) (1.1250,0.000262) (1.1750,0.000258) (1.2250,0.000250) (1.2750,0.000247) (1.3250,0.000244) (1.3750,0.000244) (1.4250,0.000244) (1.4750,0.000245) (1.5250,0.000247) (1.5750,0.000245) (1.6250,0.000245) (1.6750,0.000246) (1.7250,0.000247) (1.7750,0.000246) (1.8250,0.000246) (1.8750,0.000249) (1.9250,0.000246) (1.9750,0.000248) (2.0250,0.000246) (2.0750,0.000249) (2.1250,0.000248) (2.1750,0.000248) (2.2250,0.000249) (2.2750,0.000250) (2.3250,0.000248) (2.3750,0.000248) (2.4250,0.000250) (2.4750,0.000249) (2.5250,0.000249) (2.5750,0.000251) (2.6250,0.000251) (2.6750,0.000249) (2.7250,0.000250) (2.7750,0.000249) (2.8250,0.000251) (2.8750,0.000251) (2.9250,0.000251) (2.9750,0.000251) 
          };
      \addplot[name path=lower1, draw=none, forget plot] coordinates {
      (0.0250,0.001129) (0.0750,0.000678) (0.1250,0.000570) (0.1750,0.000515) (0.2250,0.000478) (0.2750,0.000447) (0.3250,0.000428) (0.3750,0.000411) (0.4250,0.000395) (0.4750,0.000383) (0.5250,0.000373) (0.5750,0.000363) (0.6250,0.000355) (0.6750,0.000348) (0.7250,0.000342) (0.7750,0.000337) (0.8250,0.000332) (0.8750,0.000329) (0.9250,0.000325) (0.9750,0.000324) (1.0250,0.000128) (1.0750,0.000129) (1.1250,0.000123) (1.1750,0.000121) (1.2250,0.000117) (1.2750,0.000116) (1.3250,0.000115) (1.3750,0.000115) (1.4250,0.000115) (1.4750,0.000116) (1.5250,0.000116) (1.5750,0.000115) (1.6250,0.000115) (1.6750,0.000116) (1.7250,0.000116) (1.7750,0.000116) (1.8250,0.000115) (1.8750,0.000117) (1.9250,0.000116) (1.9750,0.000117) (2.0250,0.000116) (2.0750,0.000117) (2.1250,0.000116) (2.1750,0.000116) (2.2250,0.000117) (2.2750,0.000118) (2.3250,0.000117) (2.3750,0.000117) (2.4250,0.000118) (2.4750,0.000117) (2.5250,0.000117) (2.5750,0.000118) (2.6250,0.000118) (2.6750,0.000117) (2.7250,0.000117) (2.7750,0.000117) (2.8250,0.000118) (2.8750,0.000118) (2.9250,0.000118) (2.9750,0.000118) 
      };
      \addplot[red!40, fill opacity=0.35, forget plot] fill between[of=upper1 and lower1];
      \addplot[red!80!black, thick] coordinates {
        (0.0250,0.001867) (0.0750,0.001095) (0.1250,0.000915) (0.1750,0.000823) (0.2250,0.000760) (0.2750,0.000710) (0.3250,0.000679) (0.3750,0.000651) (0.4250,0.000625) (0.4750,0.000606) (0.5250,0.000591) (0.5750,0.000574) (0.6250,0.000563) (0.6750,0.000551) (0.7250,0.000541) (0.7750,0.000533) (0.8250,0.000525) (0.8750,0.000521) (0.9250,0.000514) (0.9750,0.000512) (1.0250,0.000200) (1.0750,0.000202) (1.1250,0.000193) (1.1750,0.000189) (1.2250,0.000184) (1.2750,0.000182) (1.3250,0.000179) (1.3750,0.000180) (1.4250,0.000179) (1.4750,0.000180) (1.5250,0.000181) (1.5750,0.000180) (1.6250,0.000180) (1.6750,0.000181) (1.7250,0.000182) (1.7750,0.000181) (1.8250,0.000181) (1.8750,0.000183) (1.9250,0.000181) (1.9750,0.000183) (2.0250,0.000181) (2.0750,0.000183) (2.1250,0.000182) (2.1750,0.000182) (2.2250,0.000183) (2.2750,0.000184) (2.3250,0.000182) (2.3750,0.000182) (2.4250,0.000184) (2.4750,0.000183) (2.5250,0.000183) (2.5750,0.000184) (2.6250,0.000185) (2.6750,0.000183) (2.7250,0.000184) (2.7750,0.000183) (2.8250,0.000185) (2.8750,0.000184) (2.9250,0.000184) (2.9750,0.000184) 
      };
      \addlegendentry{MultiRC}

      \addplot[name path=upper2, draw=none, forget plot] coordinates {
       (0.0250,0.001178) (0.0750,0.000695) (0.1250,0.000585) (0.1750,0.000531) (0.2250,0.000494) (0.2750,0.000464) (0.3250,0.000445) (0.3750,0.000427) (0.4250,0.000411) (0.4750,0.000399) (0.5250,0.000388) (0.5750,0.000378) (0.6250,0.000371) (0.6750,0.000363) (0.7250,0.000357) (0.7750,0.000352) (0.8250,0.000346) (0.8750,0.000344) (0.9250,0.000339) (0.9750,0.000338) (1.0250,0.000136) (1.0750,0.000137) (1.1250,0.000130) (1.1750,0.000128) (1.2250,0.000125) (1.2750,0.000124) (1.3250,0.000122) (1.3750,0.000122) (1.4250,0.000122) (1.4750,0.000123) (1.5250,0.000123) (1.5750,0.000122) (1.6250,0.000123) (1.6750,0.000123) (1.7250,0.000123) (1.7750,0.000124) (1.8250,0.000122) (1.8750,0.000125) (1.9250,0.000123) (1.9750,0.000125) (2.0250,0.000122) (2.0750,0.000125) (2.1250,0.000123) (2.1750,0.000124) (2.2250,0.000124) (2.2750,0.000125) (2.3250,0.000123) (2.3750,0.000124) (2.4250,0.000125) (2.4750,0.000124) (2.5250,0.000124) (2.5750,0.000125) (2.6250,0.000125) (2.6750,0.000123) (2.7250,0.000125) (2.7750,0.000123) (2.8250,0.000126) (2.8750,0.000124) (2.9250,0.000125) (2.9750,0.000124)
      };
      \addplot[name path=lower2, draw=none, forget plot] coordinates {
        (0.0250,0.000465) (0.0750,0.000274) (0.1250,0.000231) (0.1750,0.000208) (0.2250,0.000193) (0.2750,0.000180) (0.3250,0.000173) (0.3750,0.000165) (0.4250,0.000159) (0.4750,0.000155) (0.5250,0.000151) (0.5750,0.000147) (0.6250,0.000145) (0.6750,0.000142) (0.7250,0.000140) (0.7750,0.000138) (0.8250,0.000136) (0.8750,0.000135) (0.9250,0.000133) (0.9750,0.000133) (1.0250,0.000054) (1.0750,0.000054) (1.1250,0.000051) (1.1750,0.000051) (1.2250,0.000049) (1.2750,0.000049) (1.3250,0.000048) (1.3750,0.000048) (1.4250,0.000048) (1.4750,0.000048) (1.5250,0.000048) (1.5750,0.000048) (1.6250,0.000048) (1.6750,0.000048) (1.7250,0.000048) (1.7750,0.000049) (1.8250,0.000048) (1.8750,0.000049) (1.9250,0.000048) (1.9750,0.000049) (2.0250,0.000048) (2.0750,0.000049) (2.1250,0.000048) (2.1750,0.000049) (2.2250,0.000049) (2.2750,0.000049) (2.3250,0.000049) (2.3750,0.000049) (2.4250,0.000049) (2.4750,0.000049) (2.5250,0.000049) (2.5750,0.000049) (2.6250,0.000049) (2.6750,0.000048) (2.7250,0.000049) (2.7750,0.000049) (2.8250,0.000050) (2.8750,0.000049) (2.9250,0.000049) (2.9750,0.000049)
      };
      \addplot[blue!40, fill opacity=0.35, forget plot] fill between[of=upper2 and lower2];
      \addplot[blue!80!black, thick] coordinates {
        (0.0250,0.000821) (0.0750,0.000485) (0.1250,0.000408) (0.1750,0.000370) (0.2250,0.000344) (0.2750,0.000322) (0.3250,0.000309) (0.3750,0.000296) (0.4250,0.000285) (0.4750,0.000277) (0.5250,0.000270) (0.5750,0.000263) (0.6250,0.000258) (0.6750,0.000253) (0.7250,0.000248) (0.7750,0.000245) (0.8250,0.000241) (0.8750,0.000240) (0.9250,0.000236) (0.9750,0.000235) (1.0250,0.000095) (1.0750,0.000095) (1.1250,0.000091) (1.1750,0.000089) (1.2250,0.000087) (1.2750,0.000086) (1.3250,0.000085) (1.3750,0.000085) (1.4250,0.000085) (1.4750,0.000085) (1.5250,0.000085) (1.5750,0.000085) (1.6250,0.000085) (1.6750,0.000086) (1.7250,0.000086) (1.7750,0.000086) (1.8250,0.000085) (1.8750,0.000087) (1.9250,0.000086) (1.9750,0.000087) (2.0250,0.000085) (2.0750,0.000087) (2.1250,0.000086) (2.1750,0.000086) (2.2250,0.000087) (2.2750,0.000087) (2.3250,0.000086) (2.3750,0.000086) (2.4250,0.000087) (2.4750,0.000086) (2.5250,0.000087) (2.5750,0.000087) (2.6250,0.000087) (2.6750,0.000086) (2.7250,0.000087) (2.7750,0.000086) (2.8250,0.000088) (2.8750,0.000086) (2.9250,0.000087) (2.9750,0.000086) 
      };
      \addlegendentry{SST-2}

    \end{axis}
  \end{tikzpicture}
\caption{}
\label{sub1}
  \end{subfigure}
  \hfill
  \begin{subfigure}[t]{0.48\linewidth}
    \centering
    \begin{tikzpicture}
      \begin{axis}[
        width=\linewidth, height=5cm,
        xlabel={Gradient Step},
        ylabel={$\frac{\|\m_t\odot \nabla_{\w} \tilde{F}(\w,\btheta)\| }{ \|\nabla_{\w} \tilde{F}(\w,\btheta)\|}$},
        xmin=10, xmax=99,
        ymin=0.1, ymax=1.0,
        grid=major, grid style={dotted,gray!50},
        legend pos=south east,
        tick label style={font=\scriptsize},
        legend style={font=\scriptsize},
        label style={font=\scriptsize},
      ]

      \addplot[draw=none, name path=red_upper, forget plot] coordinates {(0,0.434867) (1,0.480687) (2,0.548762) (3,0.536801) (4,0.582910) (5,0.570433) (6,0.584372) (7,0.578936) (8,0.578902) (9,0.583878) (10,0.578264) (11,0.572935) (12,0.566308) (13,0.563951) (14,0.573789) (15,0.568744) (16,0.565792) (17,0.570737) (18,0.566573) (19,0.562414) (20,0.558564) (21,0.558306) (22,0.556620) (23,0.561429) (24,0.567575) (25,0.568659) (26,0.566783) (27,0.564099) (28,0.562855) (29,0.559920) (30,0.558298) (31,0.556268) (32,0.554045) (33,0.559903) (34,0.558555) (35,0.557559) (36,0.555447) (37,0.553195) (38,0.550995) (39,0.555494) (40,0.555697) (41,0.555985) (42,0.555615) (43,0.559687) (44,0.557891) (45,0.556662) (46,0.554909) (47,0.557585) (48,0.556751) (49,0.555081) (50,0.553665) (51,0.553705) (52,0.552334) (53,0.552917) (54,0.551476) (55,0.552072) (56,0.554235) (57,0.559608) (58,0.558152) (59,0.561017) (60,0.559957) (61,0.559094) (62,0.557857) (63,0.556494) (64,0.556723) (65,0.555462) (66,0.554167) (67,0.553006) (68,0.556504) (69,0.556133) (70,0.555381) (71,0.557329) (72,0.557912) (73,0.561147) (74,0.560421) (75,0.559345) (76,0.558225) (77,0.559623) (78,0.561436) (79,0.560529) (80,0.563088) (81,0.562175) (82,0.563407) (83,0.562322) (84,0.563497) (85,0.562465) (86,0.562545) (87,0.562481) (88,0.561730) (89,0.561739) (90,0.560778) (91,0.561591) (92,0.560969) (93,0.562031) (94,0.562442) (95,0.561502) (96,0.561645) (97,0.560850) (98,0.562260) (99,0.561482)};
    \addplot[draw=none, name path=red_lower, forget plot] coordinates {(0,0.434867) (1,0.434867) (2,0.439433) (3,0.440130) (4,0.446666) (5,0.433131) (6,0.443746) (7,0.407814) (8,0.415948) (9,0.424243) (10,0.425409) (11,0.407439) (12,0.399764) (13,0.403399) (14,0.408683) (15,0.398975) (16,0.401055) (17,0.405766) (18,0.404121) (19,0.399133) (20,0.392638) (21,0.395690) (22,0.382009) (23,0.385701) (24,0.389126) (25,0.392353) (26,0.393755) (27,0.386473) (28,0.388291) (29,0.384411) (30,0.385629) (31,0.386096) (32,0.385923) (33,0.388045) (34,0.389166) (35,0.390525) (36,0.386152) (37,0.383486) (38,0.381485) (39,0.383396) (40,0.385226) (41,0.387026) (42,0.388484) (43,0.390185) (44,0.386811) (45,0.387397) (46,0.384463) (47,0.386215) (48,0.387137) (49,0.386129) (50,0.386039) (51,0.387399) (52,0.383893) (53,0.385371) (54,0.382603) (55,0.384038) (56,0.385494) (57,0.386182) (58,0.384081) (59,0.385415) (60,0.385695) (61,0.386224) (62,0.383379) (63,0.382332) (64,0.383529) (65,0.382933) (66,0.381788) (67,0.381472) (68,0.382382) (69,0.383216) (70,0.379297) (71,0.380486) (72,0.381656) (73,0.382556) (74,0.383009) (75,0.380742) (76,0.378836) (77,0.379988) (78,0.381092) (79,0.381160) (80,0.382102) (81,0.379438) (82,0.380544) (83,0.379129) (84,0.380212) (85,0.378496) (86,0.379406) (87,0.380243) (88,0.380445) (89,0.381285) (90,0.379699) (91,0.380684) (92,0.381016) (93,0.381977) (94,0.382883) (95,0.381888) (96,0.382709) (97,0.380702) (98,0.381588) (99,0.379558)};
    \addplot[red!25, fill opacity=0.35, forget plot] fill between[of=red_upper and red_lower];
    
    \addplot[draw=none, name path=blue_upper, forget plot] coordinates {(0,0.614845) (1,0.614845) (2,0.766987) (3,0.756686) (4,0.781443) (5,0.789419) (6,0.801347) (7,0.797169) (8,0.791054) (9,0.795591) (10,0.789498) (11,0.789610) (12,0.783591) (13,0.792101) (14,0.785314) (15,0.779018) (16,0.773875) (17,0.768226) (18,0.776270) (19,0.770978) (20,0.777982) (21,0.780675) (22,0.776776) (23,0.777622) (24,0.779930) (25,0.782003) (26,0.778497) (27,0.781965) (28,0.779193) (29,0.782786) (30,0.789425) (31,0.789352) (32,0.791903) (33,0.792708) (34,0.789970) (35,0.793521) (36,0.791460) (37,0.796624) (38,0.794027) (39,0.796469) (40,0.799575) (41,0.797805) (42,0.800107) (43,0.797933) (44,0.800972) (45,0.802297) (46,0.802274) (47,0.804138) (48,0.804542) (49,0.803383) (50,0.805249) (51,0.804053) (52,0.805708) (53,0.804664) (54,0.803619) (55,0.803996) (56,0.804971) (57,0.805813) (58,0.807789) (59,0.806474) (60,0.805566) (61,0.804492) (62,0.802720) (63,0.801217) (64,0.803372) (65,0.803152) (66,0.802273) (67,0.803996) (68,0.804254) (69,0.802686) (70,0.803381) (71,0.802564) (72,0.802535) (73,0.804589) (74,0.805294) (75,0.806503) (76,0.807689) (77,0.808329) (78,0.807938) (79,0.807014) (80,0.808281) (81,0.807636) (82,0.807054) (83,0.808079) (84,0.809280) (85,0.809457) (86,0.808990) (87,0.807800) (88,0.807541) (89,0.806450) (90,0.806489) (91,0.805747) (92,0.806554) (93,0.805494) (94,0.805480) (95,0.806133) (96,0.805758) (97,0.805337) (98,0.804542) (99,0.805258)};
    \addplot[draw=none, name path=blue_lower, forget plot] coordinates {(0,0.614845) (1,0.393555) (2,0.436818) (3,0.468388) (4,0.500007) (5,0.523212) (6,0.542961) (7,0.554655) (8,0.562365) (9,0.573562) (10,0.577712) (11,0.585257) (12,0.586584) (13,0.593885) (14,0.588326) (15,0.583379) (16,0.569724) (17,0.560208) (18,0.565752) (19,0.557662) (20,0.562819) (21,0.567650) (22,0.567621) (23,0.571579) (24,0.575648) (25,0.579505) (26,0.571984) (27,0.575734) (28,0.576175) (29,0.579617) (30,0.582653) (31,0.585324) (32,0.588444) (33,0.591214) (34,0.590129) (35,0.592927) (36,0.593351) (37,0.595770) (38,0.592562) (39,0.595111) (40,0.597584) (41,0.590722) (42,0.593157) (43,0.592519) (44,0.594805) (45,0.597011) (46,0.598822) (47,0.583951) (48,0.585968) (49,0.587004) (50,0.589140) (51,0.582104) (52,0.584202) (53,0.585219) (54,0.586177) (55,0.587928) (56,0.589771) (57,0.591556) (58,0.593386) (59,0.588365) (60,0.581960) (61,0.582679) (62,0.580695) (63,0.580541) (64,0.582240) (65,0.583532) (66,0.577820) (67,0.579483) (68,0.580931) (69,0.579976) (70,0.581480) (71,0.576092) (72,0.577371) (73,0.578884) (74,0.580334) (75,0.581816) (76,0.583276) (77,0.584656) (78,0.578677) (79,0.579156) (80,0.580569) (81,0.575451) (82,0.576262) (83,0.577631) (84,0.578992) (85,0.580171) (86,0.581000) (87,0.580660) (88,0.581592) (89,0.578816) (90,0.579871) (91,0.580349) (92,0.581548) (93,0.581370) (94,0.582344) (95,0.583479) (96,0.584225) (97,0.584920) (98,0.585177) (99,0.586260)};
    \addplot[blue!25, fill opacity=0.35, forget plot] fill between[of=blue_upper and blue_lower];

      \addplot[blue, thick] coordinates {
        (0,0.614845) (1,0.504200) (2,0.601902) (3,0.612537) (4,0.640725) (5,0.656315) (6,0.672154) (7,0.675912) (8,0.676710) (9,0.684576) (10,0.683605) (11,0.687433) (12,0.685088) (13,0.692993) (14,0.686820) (15,0.681199) (16,0.671799) (17,0.664217) (18,0.671011) (19,0.664320) (20,0.670400) (21,0.674162) (22,0.672199) (23,0.674601) (24,0.677789) (25,0.680754) (26,0.675241) (27,0.678849) (28,0.677684) (29,0.681202) (30,0.686039) (31,0.687338) (32,0.690173) (33,0.691961) (34,0.690050) (35,0.693224) (36,0.692405) (37,0.696197) (38,0.693295) (39,0.695790) (40,0.698579) (41,0.694264) (42,0.696632) (43,0.695226) (44,0.697888) (45,0.699654) (46,0.700548) (47,0.694045) (48,0.695255) (49,0.695193) (50,0.697195) (51,0.693078) (52,0.694955) (53,0.694941) (54,0.694898) (55,0.695962) (56,0.697371) (57,0.698685) (58,0.700588) (59,0.697420) (60,0.693763) (61,0.693586) (62,0.691708) (63,0.690879) (64,0.692806) (65,0.693342) (66,0.690047) (67,0.691739) (68,0.692592) (69,0.691331) (70,0.692431) (71,0.689328) (72,0.689953) (73,0.691737) (74,0.692814) (75,0.694160) (76,0.695483) (77,0.696492) (78,0.693308) (79,0.693085) (80,0.694425) (81,0.691544) (82,0.691658) (83,0.692855) (84,0.694136) (85,0.694814) (86,0.694995) (87,0.694230) (88,0.694567) (89,0.692633) (90,0.693180) (91,0.693048) (92,0.694051) (93,0.693432) (94,0.693912) (95,0.694806) (96,0.694992) (97,0.695129) (98,0.694859) (99,0.695759)
      };
        \addlegendentry{\texttt{SST2}}

      \addplot[red!80!black, thick] coordinates {
        (0,0.434867) (1,0.457777) (2,0.494097) (3,0.488465) (4,0.514788) (5,0.501782) (6,0.514059) (7,0.493375) (8,0.497425) (9,0.504061) (10,0.501837) (11,0.490187) (12,0.483036) (13,0.483675) (14,0.491236) (15,0.483859) (16,0.483423) (17,0.488251) (18,0.485347) (19,0.480773) (20,0.475601) (21,0.476998) (22,0.469315) (23,0.473565) (24,0.478351) (25,0.480506) (26,0.480269) (27,0.475286) (28,0.475573) (29,0.472165) (30,0.471963) (31,0.471182) (32,0.469984) (33,0.473974) (34,0.473861) (35,0.474042) (36,0.470799) (37,0.468340) (38,0.466240) (39,0.469445) (40,0.470462) (41,0.471505) (42,0.472049) (43,0.474936) (44,0.472351) (45,0.472030) (46,0.469686) (47,0.471900) (48,0.471944) (49,0.470605) (50,0.469852) (51,0.470552) (52,0.468114) (53,0.469144) (54,0.467040) (55,0.468055) (56,0.469864) (57,0.472895) (58,0.471116) (59,0.473216) (60,0.472826) (61,0.472659) (62,0.470618) (63,0.469413) (64,0.470126) (65,0.469198) (66,0.467977) (67,0.467239) (68,0.469443) (69,0.469674) (70,0.467339) (71,0.468907) (72,0.469784) (73,0.471852) (74,0.471715) (75,0.470043) (76,0.468530) (77,0.469805) (78,0.471264) (79,0.470845) (80,0.472595) (81,0.470806) (82,0.471976) (83,0.470726) (84,0.471855) (85,0.470481) (86,0.470975) (87,0.471362) (88,0.471087) (89,0.471512) (90,0.470238) (91,0.471138) (92,0.470992) (93,0.472004) (94,0.472663) (95,0.471695) (96,0.472177) (97,0.470776) (98,0.471924) (99,0.470520)
      };\addlegendentry{\texttt{MultiRC}}
      \end{axis}
    \end{tikzpicture}
    \caption{}
    \label{sub2}
  \end{subfigure}
  \caption{(a) Average per-coordinate gradient magnitude $G_k$ in \eqref{eq:avg_grad_mag}, as a function of the normalized distance to the quantization boundary, $\tilde d_{i,t}^\ell=d_{i,t}^\ell(\w_t^\ell)/\tau_t^\ell$, on \texttt{SST2} dataset using Falcon-E-3B-Instruct model~\citep{tiionebitllms}. Results are averaged over all linear projection layers and $100$ SGD steps. Gradient magnitudes are larger for weights closer to the quantization boundaries. (b) Ratio of the masked latent-weight gradient norm to the full latent-weight gradient norm, $\|\m_t\odot \nabla_{\w_t} \tilde{F}(\w_t,\btheta_t)\| \,/\, \|\nabla_{\w_t} \tilde{F}(\w_t,\btheta_t)\|$, as a function of the gradient steps for a sparsity fraction $\rho_t=0.01$, on {MultiRC} and {SST2} datasets.}
  \label{fig:grad_signal}
\end{figure}

\newpage
\section{TerMeZO Algorithm: Pseudo-code}
\label{app:pseudocode}

\begin{algorithm}[ht]
\caption{TerMeZO: Ternary Me\gls{zo} for BitNet Models}
\label{alg:termezo}
\begin{algorithmic}
\Require Latent weights $\w_0 \in \mathbb{R}^d$; full-precision parameters $\btheta_0 \in \mathbb{R}^p$; learning rate $\eta$; perturbation scale $\epsilon$; initial sparsity fraction $\rho_0$; number of steps $T$; number of perturbations $K$; batch size $B$.
\State Compute thresholds $\tau_0^\ell$ and distances $\{d_{i,0}^\ell(\w_0^\ell)\}_{i,\ell}$ across all layers and set $\xi_0$ as the lower $\rho_0$-quantile of these distances
\State $\mathcal{S}_0^\ell \gets \bigl\{ i : d_{i,0}^\ell(\w_0^\ell) \le \xi_0 \bigr\}$ for each ternary linear projection $\ell$ \Comment{initial sparsity sets}
\For{$t = 0, \ldots, T-1$}
    \If{$t \geq 1$}
        \State Compute thresholds $\tau_t^\ell$ for each $\ell$
        \State $\mathcal{S}_t^\ell \gets \bigl\{ i \in \mathcal{S}_{t-1}^\ell : d_{i,t}^\ell(\w_t^\ell) \le \xi_0  \bigr\}$ for each $\ell$ \Comment{update sparsity sets}
    \EndIf
    \State Form mask $\m_t$ from $\{\mathcal{S}_t^\ell\}_{\ell=1}^L$ 
    \State Sample batch $\mathcal{B}_t$ of size $B$ 
    \For{$k = 1, \ldots, K$}
        \State Sample seed $s_{t,k}$ and use it to draw $\z_\w^{(k)} \sim \mathcal{N}(0, I_d)$ and $\z_\btheta^{(k)} \sim \mathcal{N}(0, I_p)$ 
            \State $\overline{\w}_t \gets \overline{\w}_t + \epsilon \, (\m_t \odot \z_\w^{(k)})$; \quad $\btheta_t \gets \btheta_t + \epsilon \z_\btheta^{(k)}$;\quad $\widetilde{F}^{+} \gets \widehat{\widetilde{F}}(\w_t, \btheta_t; \mathcal{B}_t)$ 
            \State $\overline{\w}_t\gets \overline{\w}_t - 2\epsilon \, (\m_t \odot \z_\w^{(k)})$; \quad $\btheta_t \gets \btheta_t - 2\epsilon \z_\btheta^{(k)}$;\quad $\widetilde{F}^{-} \gets \widehat{\widetilde{F}}(\w_t, \btheta_t; \mathcal{B}_t)$ 
            \State $\overline{\w}_t \gets \overline{\w}_t + \epsilon \, (\m_t \odot \z_\w^{(k)})$; \quad $\btheta_t \gets \btheta_t + \epsilon \z_\btheta^{(k)}$ \Comment{restore}
            \State $g^{(k)} \gets \frac{\widetilde{F}^{+} - \widetilde{F}^{-}}{2\epsilon}$ \Comment{projected gradient}
    \EndFor
    \For{$k = 1, \ldots, K$}
        \State Re-draw $\z_\w^{(k)}$ and $\z_\btheta^{(k)}$ from seed $s_{t,k}$ \Comment{regenerate perturbations}
        \State $\overline{\w}_{t+1} \gets \overline{\w}_t - \tfrac{\eta}{K} \, g^{(k)} \cdot (\m_t \odot \z_\w^{(k)})$;\quad $\btheta_{t+1} \gets \btheta_{t} - \tfrac{\eta}{K} \, g^{(k)} \, \z_\btheta^{(k)}$ \Comment{weight update}
    \EndFor
\EndFor
\State \Return $\w_T, \btheta_T$
\end{algorithmic}
\end{algorithm}

\newpage
\section{TerMeZO for RL-based Fine-tuning}
\label{app:rl_app}
A recent line of work has shown that \gls{zo} methods are effective for RL-based fine-tuning of \gls{llms}~\citep{qiu2025evolution,xu2026quantized}. Closely related to Me\gls{zo} with multiple perturbations, the approach applies several perturbations, estimates the corresponding rewards, and updates the model with the reward-weighted sum of those perturbations. TerMe\gls{zo} extends naturally to this setting by restricting latent weights to which the \gls{es} algorithm is applied. We report results below assessing the effectiveness of TerMe\gls{zo} in this setting, in terms of both convergence speed and final accuracy.

Figure~\ref{fig:es_countdown_2} shows the train and test reward curves for fine-tuning Falcon-E-3B-Instruct~\citep{tiionebitllms} on the Countdown task \citep{gandhi2024stream} using \gls{rlvr}, following the same settings and hyperparameters as in~\citep{qiu2025evolution}. The reward reflects both answer correctness and output format, with the former weighted at $0.9$. Unlike the GLUE classification tasks, QZO clearly struggles here, as restricting updates to the ternary scales is not enough to solve a task of this difficulty. TerMeZO, in contrast, outperforms full-parameter MeZO and converges noticeably faster.

\begin{figure}[h]
    \centering
    \begin{subfigure}{0.48\textwidth}
        \centering
\begin{tikzpicture}
\begin{axis}[
    width=\textwidth,
    height=6cm,
    xlabel={Iterations},
    ylabel={Train Reward},
    xmin=0,
    xmax=500,
    ymin=0,
    ymax=0.7,
    legend style={at={(1.001,-0.001)}, anchor=south east, font=\small},
    ymajorgrids=true,
    xmajorgrids=true,
    grid style={gray!30},
    line width=1.2pt,
]

\addplot[color=teal, mark=none, mark size=2pt, smooth, tension=1.0]coordinates{

(0,0.0844)
(1,0.0874)
(2,0.1004)
(3,0.1024)
(4,0.0936)
(5,0.0985)
(6,0.0978)
(7,0.1100)
(8,0.1136)
(9,0.1195)
(10,0.1273)
(11,0.1227)
(12,0.1247)
(13,0.1263)
(14,0.1294)
(15,0.1267)
(16,0.1412)
(17,0.1502)
(18,0.1399)
(19,0.1382)
(20,0.1486)
(21,0.1489)
(22,0.1553)
(23,0.1554)
(24,0.1670)
(25,0.1658)
(26,0.1608)
(27,0.1707)
(28,0.1651)
(29,0.1768)
(30,0.1700)
(31,0.1738)
(32,0.1711)
(33,0.1841)
(34,0.1918)
(35,0.1883)
(36,0.1880)
(37,0.1791)
(38,0.1856)
(39,0.1774)
(40,0.1881)
(41,0.1826)
(42,0.1865)
(43,0.1883)
(44,0.1978)
(45,0.2000)
(46,0.2004)
(47,0.1947)
(48,0.1899)
(49,0.2048)
(50,0.1965)
(51,0.1969)
(52,0.1952)
(53,0.2116)
(54,0.2007)
(55,0.2095)
(56,0.2148)
(57,0.2157)
(58,0.2185)
(59,0.2234)
(60,0.2283)
(61,0.2251)
(62,0.2199)
(63,0.2378)
(64,0.2208)
(65,0.2251)
(66,0.2224)
(67,0.2231)
(68,0.2362)
(69,0.2316)
(70,0.2381)
(71,0.2298)
(72,0.2480)
(73,0.2433)
(74,0.2592)
(75,0.2376)
(76,0.2526)
(77,0.2543)
(78,0.2462)
(79,0.2596)
(80,0.2562)
(81,0.2395)
(82,0.2531)
(83,0.2483)
(84,0.2555)
(85,0.2609)
(86,0.2660)
(87,0.2557)
(88,0.2720)
(89,0.2644)
(90,0.2564)
(91,0.2706)
(92,0.2673)
(93,0.2792)
(94,0.2847)
(95,0.2850)
(96,0.2917)
(97,0.2848)
(98,0.2791)
(99,0.2932)
(100,0.2993)
(101,0.3023)
(102,0.2956)
(103,0.2961)
(104,0.2996)
(105,0.2894)
(106,0.3099)
(107,0.3025)
(108,0.3052)
(109,0.3276)
(110,0.3110)
(111,0.3358)
(112,0.3077)
(113,0.3093)
(114,0.3388)
(115,0.3104)
(116,0.3234)
(117,0.3172)
(118,0.3283)
(119,0.3398)
(120,0.3282)
(121,0.3358)
(122,0.3473)
(123,0.3353)
(124,0.3411)
(125,0.3464)
(126,0.3381)
(127,0.3319)
(128,0.3456)
(129,0.3402)
(130,0.3492)
(131,0.3454)
(132,0.3537)
(133,0.3590)
(134,0.3540)
(135,0.3670)
(136,0.3412)
(137,0.3561)
(138,0.3586)
(139,0.3558)
(140,0.3660)
(141,0.3601)
(142,0.3634)
(143,0.3698)
(144,0.3698)
(145,0.3690)
(146,0.3822)
(147,0.3656)
(148,0.3757)
(149,0.3726)
(150,0.3811)
(151,0.3849)
(152,0.3690)
(153,0.3777)
(154,0.3846)
(155,0.3812)
(156,0.3855)
(157,0.3977)
(158,0.4038)
(159,0.3980)
(160,0.3848)
(161,0.3919)
(162,0.4076)
(163,0.3932)
(164,0.3994)
(165,0.4011)
(166,0.3922)
(167,0.3991)
(168,0.4201)
(169,0.4114)
(170,0.4040)
(171,0.4117)
(172,0.4212)
(173,0.4182)
(174,0.4282)
(175,0.4148)
(176,0.4210)
(177,0.4182)
(178,0.4321)
(179,0.4292)
(180,0.4366)
(181,0.4339)
(182,0.4323)
(183,0.4355)
(184,0.4356)
(185,0.4352)
(186,0.4347)
(187,0.4308)
(188,0.4258)
(189,0.4458)
(190,0.4475)
(191,0.4428)
(192,0.4279)
(193,0.4503)
(194,0.4491)
(195,0.4536)
(196,0.4556)
(197,0.4576)
(198,0.4496)
(199,0.4616)
(200,0.4548)
(201,0.4410)
(202,0.4573)
(203,0.4525)
(204,0.4507)
(205,0.4545)
(206,0.4566)
(207,0.4638)
(208,0.4516)
(209,0.4590)
(210,0.4645)
(211,0.4601)
(212,0.4675)
(213,0.4547)
(214,0.4628)
(215,0.4598)
(216,0.4570)
(217,0.4656)
(218,0.4695)
(219,0.4627)
(220,0.4645)
(221,0.4663)
(222,0.4748)
(223,0.4674)
(224,0.4752)
(225,0.4684)
(226,0.4785)
(227,0.4620)
(228,0.4700)
(229,0.4732)
(230,0.4680)
(231,0.4712)
(232,0.4809)
(233,0.4777)
(234,0.4744)
(235,0.4843)
(236,0.4804)
(237,0.4763)
(238,0.4712)
(239,0.4742)
(240,0.4898)
(241,0.4835)
(242,0.4781)
(243,0.4865)
(244,0.4751)
(245,0.4871)
(246,0.4870)
(247,0.4916)
(248,0.4915)
(249,0.4893)
(250,0.4914)
(251,0.4833)
(252,0.4975)
(253,0.4961)
(254,0.4874)
(255,0.4943)
(256,0.5035)
(257,0.4963)
(258,0.5045)
(259,0.4958)
(260,0.4973)
(261,0.5027)
(262,0.5028)
(263,0.5002)
(264,0.5050)
(265,0.4981)
(266,0.4953)
(267,0.5013)
(268,0.4970)
(269,0.4914)
(270,0.4954)
(271,0.5028)
(272,0.5027)
(273,0.4986)
(274,0.5022)
(275,0.5081)
(276,0.4976)
(277,0.4959)
(278,0.4997)
(279,0.4967)
(280,0.4995)
(281,0.5088)
(282,0.5092)
(283,0.4994)
(284,0.5052)
(285,0.5129)
(286,0.5168)
(287,0.5137)
(288,0.5169)
(289,0.5168)
(290,0.5116)
(291,0.5108)
(292,0.5203)
(293,0.5212)
(294,0.5190)
(295,0.5256)
(296,0.5143)
(297,0.5207)
(298,0.5194)
(299,0.5174)
(300,0.5267)
(301,0.5294)
(302,0.5225)
(303,0.5257)
(304,0.5277)
(305,0.5195)
(306,0.5206)
(307,0.5255)
(308,0.5301)
(309,0.5197)
(310,0.5242)
(311,0.5271)
(312,0.5270)
(313,0.5294)
(314,0.5306)
(315,0.5284)
(316,0.5298)
(317,0.5314)
(318,0.5273)
(319,0.5262)
(320,0.5340)
(321,0.5358)
(322,0.5390)
(323,0.5324)
(324,0.5370)
(325,0.5301)
(326,0.5275)
(327,0.5363)
(328,0.5348)
(329,0.5323)
(330,0.5350)
(331,0.5381)
(332,0.5406)
(333,0.5436)
(334,0.5469)
(335,0.5345)
(336,0.5421)
(337,0.5433)
(338,0.5380)
(339,0.5376)
(340,0.5416)
(341,0.5469)
(342,0.5511)
(343,0.5485)
(344,0.5459)
(345,0.5360)
(346,0.5390)
(347,0.5519)
(348,0.5452)
(349,0.5479)
(350,0.5510)
(351,0.5437)
(352,0.5402)
(353,0.5480)
(354,0.5483)
(355,0.5519)
(356,0.5474)
(357,0.5465)
(358,0.5451)
(359,0.5397)
(360,0.5526)
(361,0.5458)
(362,0.5473)
(363,0.5475)
(364,0.5506)
(365,0.5510)
(366,0.5571)
(367,0.5547)
(368,0.5580)
(369,0.5576)
(370,0.5510)
(371,0.5535)
(372,0.5607)
(373,0.5539)
(374,0.5500)
(375,0.5604)
(376,0.5579)
(377,0.5522)
(378,0.5587)
(379,0.5569)
(380,0.5626)
(381,0.5582)
(382,0.5591)
(383,0.5702)
(384,0.5617)
(385,0.5587)
(386,0.5608)
(387,0.5727)
(388,0.5629)
(389,0.5601)
(390,0.5676)
(391,0.5680)
(392,0.5770)
(393,0.5719)
(394,0.5711)
(395,0.5609)
(396,0.5708)
(397,0.5701)
(398,0.5717)
(399,0.5704)
(400,0.5684)
(401,0.5793)
(402,0.5694)
(403,0.5677)
(404,0.5702)
(405,0.5736)
(406,0.5713)
(407,0.5671)
(408,0.5764)
(409,0.5686)
(410,0.5804)
(411,0.5756)
(412,0.5655)
(413,0.5675)
(414,0.5838)
(415,0.5733)
(416,0.5795)
(417,0.5810)
(418,0.5776)
(419,0.5742)
(420,0.5792)
(421,0.5788)
(422,0.5728)
(423,0.5810)
(424,0.5846)
(425,0.5754)
(426,0.5785)
(427,0.5819)
(428,0.5868)
(429,0.5865)
(430,0.5858)
(431,0.5871)
(432,0.5910)
(433,0.5879)
(434,0.5922)
(435,0.5915)
(436,0.5926)
(437,0.5931)
(438,0.5899)
(439,0.5877)
(440,0.5866)
(441,0.5966)
(442,0.5951)
(443,0.5930)
(444,0.5918)
(445,0.5830)
(446,0.5929)
(447,0.5908)
(448,0.5894)
(449,0.5916)
(450,0.5952)
(451,0.5989)
(452,0.5912)
(453,0.5869)
(454,0.5966)
(455,0.5918)
(456,0.5983)
(457,0.5998)
(458,0.5955)
(459,0.5974)
(460,0.6049)
(461,0.5993)
(462,0.6017)
(463,0.6006)
(464,0.5964)
(465,0.5950)
(466,0.5992)
(467,0.5979)
(468,0.6033)
(469,0.6043)
(470,0.6019)
(471,0.6015)
(472,0.6065)
(473,0.5990)
(474,0.6084)
(475,0.5999)
(476,0.6066)
(477,0.6115)
(478,0.5968)
(479,0.5995)
(480,0.5999)
(481,0.5961)
(482,0.6046)
(483,0.6108)
(484,0.6027)
(485,0.5977)
(486,0.5995)
(487,0.6057)
(488,0.5949)
(489,0.6056)
(490,0.6043)
(491,0.6066)
(492,0.6073)
(493,0.6059)
(494,0.6158)
(495,0.6107)
(496,0.6154)
(497,0.6064)
(498,0.6048)
(499,0.6124)

};\addlegendentry{MeZO}

\addplot[color=red, mark=none, mark size=2pt,smooth, tension=1.0] coordinates {

(0,0.0957)
(1,0.1028)
(2,0.1043)
(3,0.1128)
(4,0.1107)
(5,0.1186)
(6,0.1141)
(7,0.1236)
(8,0.1288)
(9,0.1233)
(10,0.1381)
(11,0.1485)
(12,0.1401)
(13,0.1412)
(14,0.1568)
(15,0.1662)
(16,0.1565)
(17,0.1751)
(18,0.1663)
(19,0.1647)
(20,0.1665)
(21,0.1720)
(22,0.1734)
(23,0.1638)
(24,0.1932)
(25,0.1998)
(26,0.1998)
(27,0.1956)
(28,0.2115)
(29,0.2105)
(30,0.2122)
(31,0.2032)
(32,0.2131)
(33,0.1998)
(34,0.2213)
(35,0.2185)
(36,0.2239)
(37,0.2105)
(38,0.2324)
(39,0.2279)
(40,0.2392)
(41,0.2361)
(42,0.2492)
(43,0.2375)
(44,0.2453)
(45,0.2544)
(46,0.2535)
(47,0.2515)
(48,0.2560)
(49,0.2628)
(50,0.2600)
(51,0.2637)
(52,0.2680)
(53,0.2701)
(54,0.2587)
(55,0.2537)
(56,0.2797)
(57,0.2685)
(58,0.2878)
(59,0.2762)
(60,0.2787)
(61,0.2893)
(62,0.2813)
(63,0.3007)
(64,0.2831)
(65,0.3041)
(66,0.2862)
(67,0.2999)
(68,0.2939)
(69,0.2942)
(70,0.3068)
(71,0.3113)
(72,0.3200)
(73,0.3123)
(74,0.3032)
(75,0.3124)
(76,0.3073)
(77,0.3183)
(78,0.3344)
(79,0.3342)
(80,0.3288)
(81,0.3295)
(82,0.3435)
(83,0.3328)
(84,0.3345)
(85,0.3311)
(86,0.3330)
(87,0.3424)
(88,0.3502)
(89,0.3409)
(90,0.3501)
(91,0.3475)
(92,0.3489)
(93,0.3633)
(94,0.3558)
(95,0.3647)
(96,0.3604)
(97,0.3605)
(98,0.3679)
(99,0.3725)
(100,0.3584)
(101,0.3692)
(102,0.3632)
(103,0.3738)
(104,0.3770)
(105,0.3821)
(106,0.3744)
(107,0.3853)
(108,0.3807)
(109,0.3742)
(110,0.3844)
(111,0.3847)
(112,0.3964)
(113,0.3823)
(114,0.3910)
(115,0.4033)
(116,0.4020)
(117,0.3970)
(118,0.3956)
(119,0.4078)
(120,0.4125)
(121,0.4104)
(122,0.4178)
(123,0.4068)
(124,0.4151)
(125,0.4225)
(126,0.4187)
(127,0.4161)
(128,0.4200)
(129,0.4346)
(130,0.4270)
(131,0.4468)
(132,0.4376)
(133,0.4404)
(134,0.4318)
(135,0.4528)
(136,0.4395)
(137,0.4437)
(138,0.4443)
(139,0.4582)
(140,0.4689)
(141,0.4573)
(142,0.4579)
(143,0.4727)
(144,0.4704)
(145,0.4644)
(146,0.4764)
(147,0.4636)
(148,0.4706)
(149,0.4711)
(150,0.4714)
(151,0.4724)
(152,0.4805)
(153,0.4788)
(154,0.4715)
(155,0.4758)
(156,0.4786)
(157,0.4789)
(158,0.4897)
(159,0.4816)
(160,0.4841)
(161,0.4949)
(162,0.4921)
(163,0.4937)
(164,0.5020)
(165,0.4976)
(166,0.5074)
(167,0.5045)
(168,0.5110)
(169,0.5058)
(170,0.5105)
(171,0.5042)
(172,0.5142)
(173,0.5120)
(174,0.5094)
(175,0.5208)
(176,0.5149)
(177,0.5100)
(178,0.5131)
(179,0.5184)
(180,0.5249)
(181,0.5148)
(182,0.5118)
(183,0.5230)
(184,0.5190)
(185,0.5203)
(186,0.5300)
(187,0.5274)
(188,0.5268)
(189,0.5264)
(190,0.5261)
(191,0.5332)
(192,0.5203)
(193,0.5371)
(194,0.5296)
(195,0.5358)
(196,0.5344)
(197,0.5294)
(198,0.5332)
(199,0.5323)
(200,0.5337)
(201,0.5297)
(202,0.5451)
(203,0.5306)
(204,0.5309)
(205,0.5274)
(206,0.5366)
(207,0.5325)
(208,0.5394)
(209,0.5352)
(210,0.5352)
(211,0.5359)
(212,0.5330)
(213,0.5408)
(214,0.5493)
(215,0.5399)
(216,0.5396)
(217,0.5408)
(218,0.5457)
(219,0.5379)
(220,0.5382)
(221,0.5421)
(222,0.5452)
(223,0.5471)
(224,0.5470)
(225,0.5474)
(226,0.5431)
(227,0.5557)
(228,0.5492)
(229,0.5557)
(230,0.5541)
(231,0.5514)
(232,0.5510)
(233,0.5489)
(234,0.5439)
(235,0.5497)
(236,0.5536)
(237,0.5547)
(238,0.5574)
(239,0.5585)
(240,0.5629)
(241,0.5601)
(242,0.5583)
(243,0.5569)
(244,0.5609)
(245,0.5617)
(246,0.5534)
(247,0.5631)
(248,0.5621)
(249,0.5621)
(250,0.5626)
(251,0.5645)
(252,0.5631)
(253,0.5625)
(254,0.5581)
(255,0.5606)
(256,0.5640)
(257,0.5640)
(258,0.5724)
(259,0.5669)
(260,0.5697)
(261,0.5729)
(262,0.5641)
(263,0.5619)
(264,0.5662)
(265,0.5692)
(266,0.5681)
(267,0.5685)
(268,0.5734)
(269,0.5615)
(270,0.5750)
(271,0.5710)
(272,0.5780)
(273,0.5760)
(274,0.5741)
(275,0.5707)
(276,0.5682)
(277,0.5773)
(278,0.5745)
(279,0.5802)
(280,0.5741)
(281,0.5811)
(282,0.5775)
(283,0.5785)
(284,0.5738)
(285,0.5715)
(286,0.5737)
(287,0.5740)
(288,0.5732)
(289,0.5843)
(290,0.5784)
(291,0.5759)
(292,0.5812)
(293,0.5728)
(294,0.5811)
(295,0.5789)
(296,0.5838)
(297,0.5876)
(298,0.5823)
(299,0.5855)
(300,0.5869)
(301,0.5846)
(302,0.5794)
(303,0.5907)
(304,0.5842)
(305,0.5896)
(306,0.5880)
(307,0.5847)
(308,0.5914)
(309,0.5898)
(310,0.5874)
(311,0.5902)
(312,0.5918)
(313,0.5960)
(314,0.5942)
(315,0.5897)
(316,0.5944)
(317,0.5917)
(318,0.5929)
(319,0.5925)
(320,0.5949)
(321,0.5908)
(322,0.5923)
(323,0.5989)
(324,0.5893)
(325,0.5921)
(326,0.5980)
(327,0.5959)
(328,0.5976)
(329,0.5910)
(330,0.5978)
(331,0.5937)
(332,0.5960)
(333,0.5982)
(334,0.6002)
(335,0.6102)
(336,0.5926)
(337,0.5996)
(338,0.6047)
(339,0.5992)
(340,0.6074)
(341,0.5999)
(342,0.6070)
(343,0.6065)
(344,0.6014)
(345,0.6043)
(346,0.6040)
(347,0.6063)
(348,0.6022)
(349,0.6049)
(350,0.5975)
(351,0.6007)
(352,0.6126)
(353,0.6062)
(354,0.6062)
(355,0.6102)
(356,0.6086)
(357,0.6133)
(358,0.6082)
(359,0.6115)
(360,0.6118)
(361,0.6083)
(362,0.6133)
(363,0.6125)
(364,0.6117)
(365,0.6103)
(366,0.6153)
(367,0.6095)
(368,0.6103)
(369,0.6148)
(370,0.6154)
(371,0.6115)
(372,0.6126)
(373,0.6189)
(374,0.6152)
(375,0.6161)
(376,0.6156)
(377,0.6146)
(378,0.6160)
(379,0.6148)
(380,0.6191)
(381,0.6197)
(382,0.6145)
(383,0.6146)
(384,0.6170)
(385,0.6172)
(386,0.6211)
(387,0.6198)
(388,0.6192)
(389,0.6236)
(390,0.6234)
(391,0.6187)
(392,0.6265)
(393,0.6216)
(394,0.6211)
(395,0.6260)
(396,0.6253)
(397,0.6190)
(398,0.6277)
(399,0.6273)
(400,0.6227)
(401,0.6282)
(402,0.6249)
(403,0.6219)
(404,0.6307)
(405,0.6248)
(406,0.6282)
(407,0.6292)
(408,0.6304)
(409,0.6317)
(410,0.6310)
(411,0.6288)
(412,0.6214)
(413,0.6334)
(414,0.6338)
(415,0.6331)
(416,0.6334)
(417,0.6343)
(418,0.6349)
(419,0.6372)
(420,0.6350)
(421,0.6374)
(422,0.6376)
(423,0.6350)
(424,0.6383)
(425,0.6361)
(426,0.6374)
(427,0.6300)
(428,0.6363)
(429,0.6383)
(430,0.6399)
(431,0.6382)
(432,0.6404)
(433,0.6360)
(434,0.6307)
(435,0.6343)
(436,0.6369)
(437,0.6369)
(438,0.6419)
(439,0.6411)
(440,0.6354)
(441,0.6420)
(442,0.6338)
(443,0.6391)
(444,0.6346)
(445,0.6393)
(446,0.6370)
(447,0.6374)
(448,0.6407)
(449,0.6398)
(450,0.6421)
(451,0.6432)
(452,0.6431)
(453,0.6406)
(454,0.6391)
(455,0.6419)
(456,0.6411)
(457,0.6398)
(458,0.6432)
(459,0.6398)
(460,0.6411)
(461,0.6414)
(462,0.6456)
(463,0.6481)
(464,0.6344)
(465,0.6443)
(466,0.6391)
(467,0.6428)
(468,0.6379)
(469,0.6421)
(470,0.6442)
(471,0.6449)
(472,0.6408)
(473,0.6399)
(474,0.6494)
(475,0.6465)
(476,0.6415)
(477,0.6495)
(478,0.6438)
(479,0.6421)
(480,0.6409)
(481,0.6436)
(482,0.6443)
(483,0.6466)
(484,0.6507)
(485,0.6456)
(486,0.6490)
(487,0.6546)
(488,0.6483)
(489,0.6486)
(490,0.6502)
(491,0.6559)
(492,0.6540)
(493,0.6481)
(494,0.6516)
(495,0.6518)
(496,0.6481)
(497,0.6452)
(498,0.6498)
(499,0.6469)
};\addlegendentry{TerMeZO}

\addplot[color=black, mark=none, mark size=2pt,smooth, tension=1.0] coordinates {
(0,0.0835)
(1,0.0851)
(2,0.0858)
(3,0.0907)
(4,0.0905)
(5,0.0870)
(6,0.0947)
(7,0.0896)
(8,0.0975)
(9,0.0892)
(10,0.0927)
(11,0.0929)
(12,0.0928)
(13,0.0976)
(14,0.0984)
(15,0.0988)
(16,0.0929)
(17,0.1008)
(18,0.1010)
(19,0.1015)
(20,0.0989)
(21,0.1086)
(22,0.0992)
(23,0.1052)
(24,0.1028)
(25,0.1088)
(26,0.1086)
(27,0.1026)
(28,0.1041)
(29,0.1103)
(30,0.1070)
(31,0.1119)
(32,0.1124)
(33,0.1176)
(34,0.1179)
(35,0.1175)
(36,0.1216)
(37,0.1199)
(38,0.1163)
(39,0.1240)
(40,0.1194)
(41,0.1268)
(42,0.1300)
(43,0.1250)
(44,0.1302)
(45,0.1213)
(46,0.1339)
(47,0.1382)
(48,0.1336)
(49,0.1348)
(50,0.1288)
(51,0.1273)
(52,0.1365)
(53,0.1328)
(54,0.1347)
(55,0.1365)
(56,0.1392)
(57,0.1370)
(58,0.1297)
(59,0.1370)
(60,0.1325)
(61,0.1423)
(62,0.1407)
(63,0.1365)
(64,0.1377)
(65,0.1436)
(66,0.1414)
(67,0.1401)
(68,0.1358)
(69,0.1418)
(70,0.1474)
(71,0.1508)
(72,0.1485)
(73,0.1452)
(74,0.1455)
(75,0.1506)
(76,0.1420)
(77,0.1440)
(78,0.1427)
(79,0.1427)
(80,0.1416)
(81,0.1493)
(82,0.1460)
(83,0.1509)
(84,0.1502)
(85,0.1495)
(86,0.1506)
(87,0.1511)
(88,0.1504)
(89,0.1528)
(90,0.1512)
(91,0.1536)
(92,0.1581)
(93,0.1560)
(94,0.1630)
(95,0.1647)
(96,0.1636)
(97,0.1618)
(98,0.1628)
(99,0.1647)
(100,0.1648)
(101,0.1615)
(102,0.1619)
(103,0.1678)
(104,0.1658)
(105,0.1651)
(106,0.1635)
(107,0.1627)
(108,0.1558)
(109,0.1761)
(110,0.1633)
(111,0.1701)
(112,0.1717)
(113,0.1643)
(114,0.1610)
(115,0.1656)
(116,0.1721)
(117,0.1669)
(118,0.1738)
(119,0.1802)
(120,0.1695)
(121,0.1675)
(122,0.1767)
(123,0.1689)
(124,0.1760)
(125,0.1820)
(126,0.1770)
(127,0.1684)
(128,0.1800)
(129,0.1875)
(130,0.1841)
(131,0.1905)
(132,0.1892)
(133,0.1847)
(134,0.1883)
(135,0.1816)
(136,0.1824)
(137,0.1913)
(138,0.1875)
(139,0.1781)
(140,0.1821)
(141,0.1765)
(142,0.1819)
(143,0.1876)
(144,0.1872)
(145,0.1943)
(146,0.1954)
(147,0.1929)
(148,0.1936)
(149,0.2037)
(150,0.1937)
(151,0.1905)
(152,0.1998)
(153,0.1978)
(154,0.1960)
(155,0.1914)
(156,0.1985)
(157,0.1932)
(158,0.1925)
(159,0.1983)
(160,0.2093)
(161,0.1974)
(162,0.1963)
(163,0.1940)
(164,0.2045)
(165,0.2033)
(166,0.1966)
(167,0.2009)
(168,0.2023)
(169,0.2086)
(170,0.1913)
(171,0.2036)
(172,0.2052)
(173,0.2061)
(174,0.2014)
(175,0.2113)
(176,0.2096)
(177,0.2066)
(178,0.2054)
(179,0.2119)
(180,0.2049)
(181,0.2196)
(182,0.2136)
(183,0.2119)
(184,0.2168)
(185,0.2106)
(186,0.2187)
(187,0.2208)
(188,0.2229)
(189,0.2232)
(190,0.2112)
(191,0.2213)
(192,0.2170)
(193,0.2218)
(194,0.2161)
(195,0.2163)
(196,0.2197)
(197,0.2212)
(198,0.2122)
(199,0.2214)
(200,0.2341)
(201,0.2222)
(202,0.2191)
(203,0.2326)
(204,0.2287)
(205,0.2177)
(206,0.2262)
(207,0.2293)
(208,0.2249)
(209,0.2281)
(210,0.2315)
(211,0.2311)
(212,0.2302)
(213,0.2278)
(214,0.2316)
(215,0.2151)
(216,0.2317)
(217,0.2250)
(218,0.2199)
(219,0.2242)
(220,0.2349)
(221,0.2263)
(222,0.2391)
(223,0.2274)
(224,0.2298)
(225,0.2353)
(226,0.2359)
(227,0.2342)
(228,0.2432)
(229,0.2312)
(230,0.2337)
(231,0.2329)
(232,0.2325)
(233,0.2462)
(234,0.2358)
(235,0.2381)
(236,0.2385)
(237,0.2361)
(238,0.2438)
(239,0.2359)
(240,0.2448)
(241,0.2356)
(242,0.2385)
(243,0.2447)
(244,0.2466)
(245,0.2383)
(246,0.2362)
(247,0.2407)
(248,0.2297)
(249,0.2417)
(250,0.2311)
(251,0.2372)
(252,0.2410)
(253,0.2448)
(254,0.2359)
(255,0.2380)
(256,0.2364)
(257,0.2425)
(258,0.2425)
(259,0.2480)
(260,0.2410)
(261,0.2396)
(262,0.2465)
(263,0.2467)
(264,0.2446)
(265,0.2449)
(266,0.2352)
(267,0.2444)
(268,0.2416)
(269,0.2465)
(270,0.2458)
(271,0.2363)
(272,0.2440)
(273,0.2457)
(274,0.2458)
(275,0.2448)
(276,0.2403)
(277,0.2485)
(278,0.2354)
(279,0.2553)
(280,0.2423)
(281,0.2505)
(282,0.2566)
(283,0.2386)
(284,0.2544)
(285,0.2571)
(286,0.2513)
(287,0.2481)
(288,0.2494)
(289,0.2524)
(290,0.2460)
(291,0.2511)
(292,0.2555)
(293,0.2459)
(294,0.2423)
(295,0.2483)
(296,0.2651)
(297,0.2550)
(298,0.2503)
(299,0.2532)
(300,0.2616)
(301,0.2582)
(302,0.2509)
(303,0.2536)
(304,0.2424)
(305,0.2574)
(306,0.2576)
(307,0.2608)
(308,0.2512)
(309,0.2687)
(310,0.2591)
(311,0.2570)
(312,0.2587)
(313,0.2545)
(314,0.2577)
(315,0.2673)
(316,0.2653)
(317,0.2541)
(318,0.2541)
(319,0.2597)
(320,0.2674)
(321,0.2609)
(322,0.2638)
(323,0.2558)
(324,0.2599)
(325,0.2652)
(326,0.2710)
(327,0.2602)
(328,0.2587)
(329,0.2653)
(330,0.2496)
(331,0.2618)
(332,0.2634)
(333,0.2707)
(334,0.2647)
(335,0.2592)
(336,0.2755)
(337,0.2708)
(338,0.2701)
(339,0.2724)
(340,0.2579)
(341,0.2724)
(342,0.2744)
(343,0.2687)
(344,0.2734)
(345,0.2777)
(346,0.2704)
(347,0.2681)
(348,0.2701)
(349,0.2667)
(350,0.2632)
(351,0.2706)
(352,0.2801)
(353,0.2670)
(354,0.2711)
(355,0.2755)
(356,0.2771)
(357,0.2688)
(358,0.2778)
(359,0.2708)
(360,0.2706)
(361,0.2792)
(362,0.2757)
(363,0.2741)
(364,0.2724)
(365,0.2839)
(366,0.2728)
(367,0.2715)
(368,0.2802)
(369,0.2760)
(370,0.2769)
(371,0.2781)
(372,0.2789)
(373,0.2788)
(374,0.2786)
(375,0.2877)
(376,0.2941)
(377,0.2744)
(378,0.2782)
(379,0.2869)
(380,0.2929)
(381,0.2784)
(382,0.2791)
(383,0.2808)
(384,0.2785)
(385,0.2864)
(386,0.2831)
(387,0.2775)
(388,0.2800)
(389,0.2822)
(390,0.2839)
(391,0.2890)
(392,0.2789)
(393,0.2893)
(394,0.2872)
(395,0.2904)
(396,0.2850)
(397,0.2852)
(398,0.2877)
(399,0.2862)
(400,0.2994)
(401,0.2905)
(402,0.2904)
(403,0.2837)
(404,0.2855)
(405,0.2885)
(406,0.2788)
(407,0.2909)
(408,0.2825)
(409,0.2887)
(410,0.2839)
(411,0.2914)
(412,0.2891)
(413,0.2831)
(414,0.2903)
(415,0.2909)
(416,0.2867)
(417,0.2848)
(418,0.2936)
(419,0.3011)
(420,0.2949)
(421,0.2950)
(422,0.2911)
(423,0.2890)
(424,0.2923)
(425,0.2870)
(426,0.2932)
(427,0.2974)
(428,0.2915)
(429,0.2928)
(430,0.2899)
(431,0.2922)
(432,0.2969)
(433,0.2975)
(434,0.2872)
(435,0.2968)
(436,0.2972)
(437,0.2956)
(438,0.2975)
(439,0.3002)
(440,0.2970)
(441,0.2928)
(442,0.3026)
(443,0.2903)
(444,0.2992)
(445,0.2910)
(446,0.2972)
(447,0.2919)
(448,0.3051)
(449,0.2938)
(450,0.2977)
(451,0.2996)
(452,0.3039)
(453,0.3036)
(454,0.2959)
(455,0.3068)
(456,0.3012)
(457,0.2959)
(458,0.2924)
(459,0.3045)
(460,0.3042)
(461,0.3091)
(462,0.3032)
(463,0.2921)
(464,0.3091)
(465,0.3010)
(466,0.2970)
(467,0.3089)
(468,0.2991)
(469,0.2953)
(470,0.3037)
(471,0.2977)
(472,0.3086)
(473,0.2969)
(474,0.2974)
(475,0.3017)
(476,0.2951)
(477,0.3117)
(478,0.3057)
(479,0.3030)
(480,0.3005)
(481,0.3091)
(482,0.3087)
(483,0.3111)
(484,0.3016)
(485,0.3050)
(486,0.3013)
(487,0.3050)
(488,0.3097)
(489,0.3048)
(490,0.3079)
(491,0.2888)
(492,0.3008)
(493,0.3153)
(494,0.3154)
(495,0.2949)
(496,0.3102)
(497,0.3059)
(498,0.3061)
(499,0.3039)
};\addlegendentry{Q\gls{zo}}

\end{axis}
\end{tikzpicture}
        \caption{Train Reward}
        \label{fig:es_train_reward2}
    \end{subfigure}
    \hfill
    \begin{subfigure}{0.48\textwidth}
        \centering
\begin{tikzpicture}
\begin{axis}[
    width=\textwidth,
    height=6cm,
    xlabel={Iterations},
    ylabel={Test Reward},
    xmin=0,
    xmax=500,
    ymin=0,
    ymax=0.6,
    legend style={at={(0.97,0.03)}, anchor=south east, font=\small},
    ymajorgrids=true,
    xmajorgrids=true,
    grid style={gray!30},
    line width=1.2pt,
]

\addplot[color=teal, mark=none, mark size=2pt, each nth point=5,filter discard warning=false]coordinates{
(0,0.0751)
(24,0.1995)
(49,0.2774)
(74,0.3110)
(99,0.3523)
(124,0.3765)
(149,0.4444)
(174,0.4672)
(199,0.4471)
(224,0.4859)
(249,0.4769)
(274,0.4728)
(299,0.5102)
(324,0.5209)
(349,0.5003)
(374,0.5065)
(399,0.5306)
(424,0.5208)
(449,0.5550)
(474,0.5520)
(499,0.5565)
};

\addplot[color=red, mark=none, mark size=2pt]coordinates{
(0,0.0849)
(24,0.2171)
(49,0.3196)
(74,0.3693)
(99,0.3670)
(124,0.4690)
(149,0.4840)
(174,0.4925)
(199,0.5142)
(224,0.5137)
(249,0.5091)
(274,0.4952)
(299,0.5064)
(324,0.5669)
(349,0.5311)
(374,0.5460)
(399,0.5655)
(424,0.5803)
(449,0.5716)
(474,0.5752)
(499,0.5817)
};

\addplot[color=black, mark=none, mark size=2pt]coordinates{
(0,0.0701)
(24,0.1000)
(49,0.1178)
(74,0.1499)
(99,0.1494)
(124,0.1475)
(149,0.1672)
(174,0.1557)
(199,0.1768)
(224,0.1898)
(249,0.1894)
(274,0.2057)
(299,0.1921)
(324,0.1911)
(349,0.2263)
(374,0.1837)
(399,0.2149)
(424,0.2664)
(449,0.2172)
(474,0.2423)
(499,0.2242)
};

\end{axis}
\end{tikzpicture}
        \caption{Test Reward}
        \label{fig:es_test_reward2}
    \end{subfigure}
    \caption{Reward curves during fine-tuning on the Countdown task for full-parameter MeZO, Q\gls{zo}, and the proposed TerMeZO. Falcon-E-3B-Instruct~\citep{tiionebitllms} is used in this experiment.}
    \label{fig:es_countdown_2}
\end{figure}

\newpage

\section{Unbiasedness of SPSA for the Gaussian-Smoothed Objective}
\label{app:spsa}

Here, we prove that the symmetric two-point SPSA estimator with Gaussian perturbations is an unbiased estimator of the gradient of the Gaussian-smoothed objective $\tilde F_\epsilon(\w,\btheta)$ with respect to $\w$. Throughout this appendix $\btheta$ is fixed and omitted from the notation. Let $\z \sim \mathcal{N}(\mathbf{0}, \mathbf{I}_d)$ denote a standard Gaussian random vector in $\R^d$, with density
\begin{equation}
    \varphi(\z) = (2\pi)^{-d/2} \exp\!\left( -\tfrac{1}{2}\|\z\|^2 \right).
    \label{eq:gaussian_density}
\end{equation}
The Gaussian-smoothed objective at smoothing level $\epsilon > 0$ is
\begin{equation}
    \tilde F_\epsilon(\w) \;=\; \E_{\z}\!\left[ \tilde F(\w + \epsilon \z) \right] \;=\; \int_{\R^d} \tilde F(\w + \epsilon \z)\, \varphi(\z)\, d\z.
    \label{eq:smoothed_def}
\end{equation}
We show in Lemma~\ref{lem:smooth} (Appendix~\ref{app:proof_thm}) that although $\tilde F$ is piecewise constant, and hence discontinuous, in $\w$, the smoothed objective $\tilde F_\epsilon$ is infinitely differentiable in $\w$. 
Substituting $\bu = \w + \epsilon \z$ in~\eqref{eq:smoothed_def} yields
\begin{equation}
    \tilde F_\epsilon(\w) \;=\; \epsilon^{-d} \int_{\R^d} \tilde F(\bu)\, \varphi\!\left( \tfrac{\bu - \w}{\epsilon} \right) d\bu.
    \label{eq:smoothed_conv}
\end{equation}
Differentiating~\eqref{eq:smoothed_conv} with respect to $\w$ and using $\nabla_\w \varphi\!\left( \tfrac{\bu - \w}{\epsilon} \right) = \tfrac{1}{\epsilon} \tfrac{\bu-\w}{\epsilon}\, \varphi\!\left( \tfrac{\bu-\w}{\epsilon} \right)$, we obtain
\begin{equation}
    \nabla_\w \tilde F_\epsilon(\w) \;=\; \frac{1}{\epsilon}\, \E_{\z}\!\left[ \tilde F(\w + \epsilon \z)\, \z \right].
    \label{eq:smoothed_grad}
\end{equation}
The estimator in~\eqref{eq:smoothed_grad} is one-sided. To obtain the symmetric two-point form, observe that, since $\z$ and $-\z$ have the same distribution,
\begin{equation}
    \E_{\z}\!\left[ \tilde F(\w + \epsilon \z)\, \z \right] \;=\; -\,\E_{\z}\!\left[ \tilde F(\w - \epsilon \z)\, \z \right].
    \label{eq:symmetry}
\end{equation}
Combining~\eqref{eq:smoothed_grad} with~\eqref{eq:symmetry} yields
\begin{equation}
    \nabla_\w \tilde F_\epsilon(\w) \;=\; \E_{\z}\!\left[ \frac{\tilde F(\w + \epsilon \z) - \tilde F(\w - \epsilon \z)}{2\epsilon}\, \z \right],
    \label{eq:two_point_unbiased}
\end{equation}
which is the claimed identity.

\section{Proof of Theorem~\ref{thm:main}}
\label{app:proof_thm}

Before stating the assumptions, we use the following notations for the updates.
The two blocks $\w$ and $\btheta$ are updated simultaneously by
\begin{align}
    \w_{t+1} = \w_t - \eta\,\hat\bg_{\w,t},
    &\qquad
    \hat\bg_{\w,t}
    = \frac{\tilde f_t(\w_t+\epsilon\bar\z_t,\btheta_t) - \tilde f_t(\w_t-\epsilon\bar\z_t,\btheta_t)}{2\epsilon}\,\bar\z_t,
    \label{eq:ternary_update}\\
    \btheta_{t+1} = \btheta_t - \mu\,\hat\bg_{\btheta,t},
    &\qquad
    \hat\bg_{\btheta,t}
    = \frac{\tilde f_t(\w_t,\btheta_t+\epsilon\z_{\btheta,t}) - \tilde f_t(\w_t,\btheta_t-\epsilon\z_{\btheta,t})}{2\epsilon}\,\z_{\btheta,t},
    \label{eq:fp_update}
\end{align}
where $\tilde f_t(\w,\btheta)=f(Q(\w),\btheta;\mathcal{B}_t)$ is the minibatch loss.

\subsection{Assumptions}
\label{app:assumptions}

\begin{assumption}[Bounded loss]
\label{ass:bounded}
The loss $\tilde F$ is uniformly bounded: $|\tilde F(\w,\btheta)| = |F(Q(\w),\btheta)| \leq B$
for all $(\w,\btheta) \in \R^{d+p}$ and some constant $B > 0$.
\end{assumption}

\begin{assumption}[Lipschitz in $\w$, smooth in $\btheta$]
\label{ass:lipschitz}
$F$ is $L$-Lipschitz continuous in $\w$, and $\nabla_\btheta F$ is $L$-Lipschitz
in $(\w,\btheta)$. We use a single constant $L$ throughout, which is
without loss of generality (otherwise take the maximum). That
is, for all $\w,\w'\in\R^d$ and $\btheta,\btheta'\in\R^p$,
\begin{align}
    |F(\w,\btheta) - F(\w',\btheta)| &\leq L\,\|\w - \w'\|,\\
    \|\nabla_\btheta F(\w,\btheta) - \nabla_\btheta F(\w',\btheta')\| &\leq L\,\bigl(\|\w-\w'\| + \|\btheta - \btheta'\|\bigr). 
\end{align}
Each minibatch loss $f(\cdot,\cdot\,;\mathcal{B})$ satisfies the same two inequalities with a random constant $L(\mathcal{B})$ in place of $L$, where $\E_\mathcal{B}[L(\mathcal{B})^2]\leq L^2$.
\end{assumption}

\begin{assumption}[Mask alignment]
\label{ass:sparse}
The masked ternary gradient estimate \eqref{eq:ternary_update} is aligned with the smoothed gradient:
\begin{align}
    \bigl\langle \nabla_\w\tilde F_\epsilon(\w_t,\btheta_t),\;
    \E_t[\hat\bg_{\w,t}]\bigr\rangle
    \;\geq\; c\,\|\nabla_\w\tilde F_\epsilon(\w_t,\btheta_t)\|^2,
    \qquad c\in(0,1],
\end{align}
where $\E_t$ denotes expectation over the perturbations and the minibatch drawn at iteration $t$, with $\w_t$, $\btheta_t$ and $\m_t$ fixed.
\end{assumption}

\begin{assumption}[Minibatch variance]
\label{ass:variance}
For all $\tilde{\mathbf w}\in\{-1,0,1\}^d$ and $\btheta\in\R^p$,
\begin{equation}
    \E_{\mathcal{B}_t}\bigl\|\nabla_\btheta f(\tilde{\mathbf w},\btheta;\mathcal{B}_t)-\nabla_\btheta F(\tilde{\mathbf w},\btheta)\bigr\|^2\leq\sigma_B^2 .
\end{equation}
For a minibatch of $n_B$ i.i.d.\ samples, $\sigma_B^2=\sigma^2/n_B$, with $\sigma^2$ the per-sample variance.
\end{assumption}

\subsection{Proof of Theorem~\ref{thm:main}}
\label{app:sub_thm_proof}

\noindent\textbf{Notation.} 
For $S\subseteq[d]$, split the perturbation vector $\z$  as $\z=\bar\z+\z_{S^c}$, where $\bar\z$ keeps the coordinates of $\z$ in $S$ and $\z_{S^c}$ the remaining ones. Let
$D_S(\w,\z) = \|Q(\w+\epsilon\bar\z)-Q(\w)\|^2=\sum_{i\in S}(Q(w_i+\epsilon z_i)-Q(w_i))^2$
be the squared change of the quantized weights in $S$, and let
$q_S(\w) = \E[D_S(\w,\z)]/|S|$.
With this notation, $k_t=|S_t|$ is the mask size at iteration $t$, $q_t = q_{S_t}(\w_t)$ is the average expected squared change of the masked weights.

\begin{lemma}
\label{lem:smooth}
Under Assumptions~\ref{ass:bounded} and~\ref{ass:lipschitz}, the following hold.
\begin{itemize}[leftmargin=*,itemsep=0.1pt]
    \item $\tilde F_\epsilon$ is $C^\infty$ in $\w$ for any fixed $\btheta$, and continuously differentiable in $(\w,\btheta)$.
    \item Let $S\subseteq[d]$ with $|S|\leq k_0$. Let $\w,\bu\in\R^d$ with $\bu$ supported on $S$, and let $\btheta,\btheta'\in\R^p$. Then, we have
\end{itemize}
\begin{align}
    \text{(i)}\;\; & \bigl|\bu^\top\nabla^2_{\w\w}\tilde F_\epsilon(\w,\btheta)\,\bu\bigr|
   \leq L_\w\,\|\bu\|^2, \label{eq:lem_i}\\
    \text{(ii)}\;\; & \bigl|\langle\nabla_\w\tilde F_\epsilon(\w,\btheta)-\nabla_\w\tilde F_\epsilon(\w,\btheta'),\,\bu\rangle\bigr|
    \leq L_{\w\btheta}\,\|\bu\|\,\|\btheta-\btheta'\|, \label{eq:lem_ii}\\
    \text{(iii)}\;\; & \|\nabla_\btheta\tilde F_\epsilon(\w+\bu,\btheta)-\nabla_\btheta\tilde F_\epsilon(\w,\btheta)\|
    \leq L_{\w\btheta}\,\|\bu\|, \label{eq:lem_iii}\\
    \text{(iv)}\;\; & \|\nabla_\btheta\tilde F_\epsilon(\w,\btheta)-\nabla_\btheta\tilde F_\epsilon(\w,\btheta')\|
    \leq L\,\|\btheta-\btheta'\|, \label{eq:lem_iv}
\end{align}
with $L_\w = \sqrt{2k_0}\,L/\epsilon^2$ and $L_{\w\btheta} = \sqrt{k_0}\,L/\epsilon$.
\end{lemma}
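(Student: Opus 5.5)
The plan is to work throughout with the convolution representation \eqref{eq:smoothed_conv} from Appendix~\ref{app:spsa}, and to obtain every $\sqrt{k_0}$ factor the same way. For $S$ with $|S|\le k_0$, split $\z=\bar\z+\z_{S^c}$ as in the Notation paragraph. Then subtract from the integrand a \emph{reference value} that does not depend on $\bar\z$. This is allowed because the Gaussian weights multiplying the integrand have zero mean in $\bar\z$ conditionally on $\z_{S^c}$.

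\textbf{Regularity.} Since $|\tilde F|\le B$ by Assumption~\ref{ass:bounded} and the Gaussian kernel $\varphi((\bu-\w)/\epsilon)$ is $C^\infty$ in $\w$, I will differentiate under the integral sign (dominated convergence, using Gaussian moments) to get $\tilde F_\epsilon\in C^\infty$ in $\w$. The derivatives are
\[
\nabla_\w\tilde F_\epsilon=\epsilon^{-1}\E[\tilde F(\w+\epsilon\z,\btheta)\z],\qquad \nabla^2_{\w\w}\tilde F_\epsilon=\epsilon^{-2}\E[\tilde F(\w+\epsilon\z,\btheta)(\z\z^\top-\mathbf I)].
\]
For $\btheta$, Assumption~\ref{ass:lipschitz} gives $\nabla_\btheta F$ continuous, and I will argue that it is bounded on the relevant set: $Q$ takes finitely many values, and $\btheta$ ranges over a neighbourhood. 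Hence $\nabla_\btheta\tilde F_\epsilon=\E[\nabla_\btheta F(Q(\w+\epsilon\z),\btheta)]$. Joint continuity of both partial gradients then gives $C^1$ in $(\w,\btheta)$.

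\textbf{Choice of reference.} Let $Q^{\mathrm{ref}}(\z_{S^c})$ be the ternary vector equal to $Q(\w+\epsilon\z)$ off $S$ and to $0$ on $S$. It does not depend on $\bar\z$, and every coordinate of $Q(\w+\epsilon\z)-Q^{\mathrm{ref}}$ lies in $\{-1,0,1\}$ and is supported on $S$. Therefore $\|Q(\w+\epsilon\z)-Q^{\mathrm{ref}}\|\le\sqrt{k_0}$. Choosing $0$ on $S$, rather than using $Q(\w+\epsilon\z_{S^c})$, avoids a factor $2$; I expect this is what produces the exact constants. Finding this choice is the only delicate point.

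\textbf{The four bounds.}
\begin{itemize}[leftmargin=*,itemsep=0.1pt]
\item[(i)] For $\bu$ supported on $S$, we have $\E[(\bu^\top\bar\z)^2-\|\bu\|^2\mid\z_{S^c}]=0$. Hence
\[
\bu^\top\nabla^2_{\w\w}\tilde F_\epsilon\bu=\epsilon^{-2}\E\bigl[(F(Q(\w+\epsilon\z),\btheta)-F(Q^{\mathrm{ref}},\btheta))((\bu^\top\bar\z)^2-\|\bu\|^2)\bigr].
\]
The first factor is at most $L\sqrt{k_0}$ in absolute value by Lipschitzness. By Cauchy--Schwarz, $\E[((\bu^\top\bar\z)^2-\|\bu\|^2)^2]^{1/2}=\sqrt2\|\bu\|^2$, the variance of a scaled $\chi^2_1$. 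Together these give $L_\w=\sqrt{2k_0}L/\epsilon^2$.
\item[(ii)] Write $\langle\nabla_\w\tilde F_\epsilon(\w,\btheta)-\nabla_\w\tilde F_\epsilon(\w,\btheta'),\bu\rangle=\epsilon^{-1}\E[g(\z)\,\bu^\top\bar\z]$, where $g$ is the double difference
\[
[F(Q,\btheta)-F(Q^{\mathrm{ref}},\btheta)]-[F(Q,\btheta')-F(Q^{\mathrm{ref}},\btheta')].
\]
Writing $g$ as $\int_0^1\langle\nabla_\btheta F(Q,\btheta_s)-\nabla_\btheta F(Q^{\mathrm{ref}},\btheta_s),\btheta-\btheta'\rangle ds$ along the segment $\btheta_s$ from $\btheta'$ to $\btheta$, Assumption~\ref{ass:lipschitz} gives $|g|\le L\sqrt{k_0}\|\btheta-\btheta'\|$. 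Using $\E|\bu^\top\bar\z|\le\|\bu\|$, this yields $L_{\w\btheta}=\sqrt{k_0}L/\epsilon$.
\item[(iii)] Set $G(\w)=\nabla_\btheta\tilde F_\epsilon(\w,\btheta)$ and integrate along $\w+s\bu$, $s\in[0,1]$, which stays in directions supported on $S$. The directional derivative is $\epsilon^{-1}\E[(\nabla_\btheta F(Q,\btheta)-\nabla_\btheta F(Q^{\mathrm{ref}},\btheta))\,\bu^\top\bar\z]$. Its norm is at most $\epsilon^{-1}L\sqrt{k_0}\|\bu\|$, by the Lipschitz bound on $\nabla_\btheta F$ in its first argument together with $\E|\bu^\top\bar\z|\le\|\bu\|$.
\item[(iv)] This is immediate: average the $\btheta$-Lipschitz bound on $\nabla_\btheta F$ over $\z$ inside the expectation for $\nabla_\btheta\tilde F_\epsilon$.
\end{itemize}
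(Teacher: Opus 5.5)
Your proof is correct, and its skeleton matches the paper's. Both arguments use Stein-type identities for $\nabla_\w\tilde F_\epsilon$ and $\nabla^2_{\w\w}\tilde F_\epsilon$. Both subtract a reference independent of $\bar\z$, which is legitimate because the weights $\bu^\top\bar\z$ and $(\bu^\top\bar\z)^2-\|\bu\|^2$ have zero mean conditionally on $\z_{S^c}$. Both then apply Lipschitz and moment bounds, the fundamental theorem of calculus for (iii), and Jensen for (iv).

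The substantive difference is the reference. The paper introduces the partial smoothing $\tilde F^S_\epsilon$ and uses \eqref{eq:tower} to transfer bounds. It subtracts the value that is unperturbed on $S$, so the integrand difference is controlled by $L\sqrt{D_S(\w,\z)}$ with $\E D_S=|S|\,q_S(\w)$. This gives the flip-sensitive intermediate constant $\sqrt{2|S|q_S(\w)}\,L/\epsilon^2$. To reach the stated $L_\w$ and $L_{\w\btheta}$, the paper then invokes $q_S(\w)\le 1$. That holds only after neglecting direct $\pm1$ transitions: each such jump contributes $4$ to $D_S$, so rigorously $q_S\le 4$, which would double the constants.

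Your zero-on-$S$ reference instead gives the deterministic bound $\|Q(\w+\epsilon\z)-Q^{\mathrm{ref}}\|\le\sqrt{k_0}$. You therefore get exactly the stated constants with no approximation. Conditioning on $\z_{S^c}$ directly also replaces the paper's partial-smoothing-and-transfer step.

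What you give up is the $q_S$-dependence. Your bound does not shrink when few masked weights flip, which is the regime the method targets. Since the lemma as stated discards $q_S$ anyway, your route is the cleaner proof of the statement as written.

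One shared caveat: your claim that $Q^{\mathrm{ref}}$ does not depend on $\bar\z$ treats $Q$ as coordinatewise with thresholds held fixed. In \eqref{eq:quant}, $\tau^\ell$ depends on $\w^\ell$, so this is a convention. The paper relies on the same convention when it defines $D_S$, so it is not a gap specific to your route.
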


\begin{proof}
\textit{Regularity of $\tilde F_\epsilon$.}
For fixed $\btheta$, $\tilde F_\epsilon(\w,\btheta) = (\tilde F(\cdot,\btheta) * \phi_\epsilon)(\w)$,
a convolution of the bounded measurable function $\tilde F(\cdot,\btheta)$ with the Gaussian kernel  $\phi_\epsilon(\bv) = \epsilon^{-d}(2\pi)^{-d/2}e^{-\|\bv\|^2/(2\epsilon^2)}$. $\phi_\epsilon$ belongs to the Schwartz class defined as
\begin{align}
    \mathcal{S}(\R^d) \;=\;
    \Bigl\{f \in C^\infty(\R^d) \;:\;
    \sup_{\x \in \R^d}
    \|\x\|^k |D^\alpha f(\x)| < \infty
    \;\;\forall\, k \in \mathbb{N},\; \forall\,\alpha \Bigr\}.
\end{align}
Using closure of $\mathcal{S}(\R^d)$ under differentiation and
its inclusion in $L^1(\R^d)$, together with Assumption~\ref{ass:bounded}
\begin{align}
    \int_{\R^d} |\tilde F(\bu,\btheta)|\,|D^\alpha\phi_\epsilon(\bu - \w)|\,d\bu
    \;\leq\; B \,\|D^\alpha\phi_\epsilon\|_{L^1(\R^d)} \;<\; \infty,
\end{align}
for every multi-index $\alpha$. By~\citep[Section~8.2]{folland1999real},
$\tilde F_\epsilon(\cdot,\btheta) \in C^\infty(\R^d)$, and derivatives
can be taken under the integral for Gaussian $\z$
\begin{align}
    \nabla_\w \tilde F_\epsilon(\w,\btheta)
    &\;=\; \frac{1}{\epsilon}\,\E_\z\bigl[\tilde F(\w+\epsilon\z,\btheta)\,\z\bigr],
    \label{eq:gradient_identity}\\
    \nabla^2_{\w\w}\tilde F_\epsilon(\w,\btheta)
    &\;=\; \frac{1}{\epsilon^2}\,\E_\z\bigl[\tilde F(\w+\epsilon\z,\btheta)\,(\z\z^\top - \mathbf{I}_d)\bigr].
    \label{eq:hessian_identity}
\end{align}
We now turn to the $\btheta$-gradient. Since $Q$ takes values in the finite set $\{-1,0,1\}^d$, Assumption~\ref{ass:lipschitz} implies that $\nabla_\btheta F(Q(\cdot),\btheta)$ is bounded, locally uniformly in $\btheta$. By dominated convergence,
\begin{align}
    \nabla_\btheta\tilde F_\epsilon(\w,\btheta) = \E_\z\bigl[\nabla_\btheta F(Q(\w+\epsilon\z),\btheta)\bigr].
    \label{eq:theta_gradient_identity}
\end{align}
Joint continuity of \eqref{eq:gradient_identity} and \eqref{eq:theta_gradient_identity} in $(\w,\btheta)$ follows in the same way, so $\tilde F_\epsilon\in C^1$.

Now define the objective smoothed only along $S$,
\begin{align}
    \tilde F^S_\epsilon(\w,\btheta) = \E_{\bar\z}\bigl[\tilde F(\w+\epsilon\bar\z,\btheta)\bigr].
\end{align}
Since $\bar\z$ and $\z_{S^c}$ are independent, the full smoothing is an average of partial smoothings at shifted points, that is,
\begin{align}
    \tilde F_\epsilon(\w,\btheta) = \E_{\z_{S^c}}\bigl[\tilde F^S_\epsilon(\w+\epsilon\z_{S^c},\btheta)\bigr].
    \label{eq:tower}
\end{align}
By the regularity argument above, applied to the objective smoothed at the masked coordinates in $S$, $\tilde F^S_\epsilon$ is $C^\infty$ along $S$, with derivatives bounded uniformly. Hence, for $\bu$ supported on $S$, we can write
\begin{align}
    \bu^\top\nabla^2_{\w\w}\tilde F_\epsilon(\w,\btheta)\,\bu
    =\frac{d^2}{ds^2}\E_{\z_{S^c}}\bigl[\tilde F^S_\epsilon(\w+s\bu+\epsilon\z_{S^c},\btheta)\bigr]_{s=0}
    =\E_{\z_{S^c}}\bigl[\bu^\top\nabla^2_{\w\w}\tilde F^S_\epsilon(\w+\epsilon\z_{S^c},\btheta)\,\bu\bigr],
    \label{eq:hessian_transfer}
\end{align}
and similarly for first derivatives along $\bu$. For such $\bu$, the partial smoothing has the same identities as \eqref{eq:gradient_identity}--\eqref{eq:hessian_identity} with $\z$ replaced by the masked $\bar\z$:
\begin{align}
    \langle\nabla_\w\tilde F^S_\epsilon(\w,\btheta),\bu\rangle&=\tfrac1\epsilon\E\bigl[\tilde F(\w+\epsilon\bar\z,\btheta)\,\bu^\top\bar\z\bigr],\\
    \bu^\top\nabla^2_{\w\w}\tilde F^S_\epsilon(\w,\btheta)\,\bu&=\tfrac{1}{\epsilon^2}\E\bigl[\tilde F(\w+\epsilon\bar\z,\btheta)\,((\bu^\top\bar\z)^2-\|\bu\|^2)\bigr].\label{hessian}
\end{align}
The shifted point $\w+\epsilon\z_{S^c}$ has the same coordinates in $S$ as $\w$, so $D_S$ and $q_S$ are the same at both points. Consequently, any bound on the derivatives of $\tilde F^S_\epsilon$ along $S$ that depends on $\w$ only through $D_S$ or $q_S(\w)$ transfers to $\tilde F_\epsilon$ with the same constant. It therefore suffices to prove (i)--(iii) for $\tilde F^S_\epsilon$. By the Lipschitz part of Assumption~\ref{ass:lipschitz} and the definition of $D_S$,
\begin{align}
    \bigl|\tilde F(\w+\epsilon\bar\z,\btheta) - \tilde F(\w,\btheta)\bigr|
    &\leq L\sqrt{D_S(\w,\z)},
    \label{eq:flip_F}\\
    \bigl\|\nabla_\btheta F(Q(\w+\epsilon\bar\z),\btheta) - \nabla_\btheta F(Q(\w),\btheta)\bigr\|
    &\leq L\sqrt{D_S(\w,\z)},
    \label{eq:flip_gradF}
\end{align}
where $\E[D_S(\w,\z)]=|S|\,q_S(\w)$ by definition. Squaring \eqref{eq:flip_F}--\eqref{eq:flip_gradF} and taking expectations over $\bar\z$ therefore gives
\begin{align}
    \E\bigl|\tilde F(\w+\epsilon\bar\z,\btheta)-\tilde F(\w,\btheta)\bigr|^2&\leq L^2|S|\,q_S(\w),
    \label{eq:flip_second_moment_F}\\
    \E\bigl\|\nabla_\btheta F(Q(\w+\epsilon\bar\z),\btheta)-\nabla_\btheta F(Q(\w),\btheta)\bigr\|^2&\leq L^2|S|\,q_S(\w).
    \label{eq:flip_second_moment}
\end{align}
Neglecting direct transition between $-1$ and $1$, whose probability is low for sufficiently small $\epsilon$, we have further
\begin{equation}
    q_S(\w)\leq 1.
\end{equation}

\medskip
\noindent\textit{Proof of (i).}
Let $\bu\in\R^d$ be supported on $S$, so that $\bu^\top\bar\z\sim\mathcal N(0,\|\bu\|^2)$. Since $\E[(\bu^\top\bar\z)^2-\|\bu\|^2]=0$, we may rewrite the Hessian identity \eqref{hessian} as
\begin{align}
    \bu^\top\nabla^2_{\w\w}\tilde F^S_\epsilon(\w,\btheta)\,\bu
    = \frac{1}{\epsilon^2}\,\E\Bigl[\bigl(\tilde F(\w+\epsilon\bar\z,\btheta)-\tilde F(\w,\btheta)\bigr)\bigl((\bu^\top\bar\z)^2-\|\bu\|^2\bigr)\Bigr].
\end{align}
We apply the Cauchy--Schwarz inequality. The first factor is bounded using \eqref{eq:flip_second_moment_F}. For the second, $(\bu^\top\bar\z)^2/\|\bu\|^2\sim\chi^2(1)$ has variance $2$, so $\E[((\bu^\top\bar\z)^2-\|\bu\|^2)^2]=2\|\bu\|^4$. Thus,
\begin{align}
    \bigl|\bu^\top\nabla^2_{\w\w}\tilde F^S_\epsilon(\w,\btheta)\,\bu\bigr|
    \leq \frac{1}{\epsilon^2}\cdot L\sqrt{|S|\,q_S(\w)}\cdot\sqrt2\,\|\bu\|^2
    = \frac{\sqrt{2|S|\,q_S(\w)}\,L}{\epsilon^2}\,\|\bu\|^2.
\end{align}
By \eqref{eq:tower}, the same bound holds for $\tilde F_\epsilon$. Since $q_S(\w)\leq1$ and $|S|\leq k_0$, we get \textit{(i)}.

\medskip
\noindent\textit{Proof of (ii).}
Let $\bu\in\R^d$ be supported on $S$, so that $\bu^\top\bar\z\sim\mathcal N(0,\|\bu\|^2)$. Let $g(\bar\z) = \tilde F(\w+\epsilon\bar\z,\btheta)-\tilde F(\w+\epsilon\bar\z,\btheta')$. Since $\E[\bu^\top\bar\z]=0$,
\begin{equation}
    \langle\nabla_\w\tilde F^S_\epsilon(\w,\btheta)-\nabla_\w\tilde F^S_\epsilon(\w,\btheta'),\bu\rangle
    = \tfrac1\epsilon\E\bigl[(g(\bar\z)-g(\mathbf 0))\,\bu^\top\bar\z\bigr].
\end{equation}
Write $\mathbf a=Q(\w+\epsilon\bar\z)$, $\mathbf b=Q(\w)$ and $\btheta_r=\btheta'+r(\btheta-\btheta')$. Then
\begin{equation}
    g(\bar\z)-g(\mathbf 0)=\int_0^1\langle\nabla_\btheta F(\mathbf a,\btheta_r)-\nabla_\btheta F(\mathbf b,\btheta_r),\btheta-\btheta'\rangle\,dr .
\end{equation}
First, for each $r\in[0,1]$, the Cauchy--Schwarz inequality for the inner product in $\R^p$ and the Lipschitz continuity of $\nabla_\btheta F$ in its first argument (Assumption~\ref{ass:lipschitz}) give
\begin{equation}
    \bigl|\langle\nabla_\btheta F(\mathbf a,\btheta_r)-\nabla_\btheta F(\mathbf b,\btheta_r),\btheta-\btheta'\rangle\bigr|
    \leq\|\nabla_\btheta F(\mathbf a,\btheta_r)-\nabla_\btheta F(\mathbf b,\btheta_r)\|\,\|\btheta-\btheta'\|
    \leq L\|\mathbf a-\mathbf b\|\,\|\btheta-\btheta'\|.
\end{equation}
The right-hand side does not depend on $r$, so integrating over $r\in[0,1]$ and using $\|\mathbf a-\mathbf b\|=\sqrt{D_S(\w,\z)}$ yields
\begin{equation}
    |g(\bar\z)-g(\mathbf 0)|\leq L\sqrt{D_S(\w,\z)}\,\|\btheta-\btheta'\|.
    \label{eq:g_bound}
\end{equation}
Applying the Cauchy--Schwarz inequality for expectations,
\begin{equation}
    \bigl|\E\bigl[(g(\bar\z)-g(\mathbf 0))\,\bu^\top\bar\z\bigr]\bigr|
    \leq L\sqrt{|S|\,q_S(\w)}\,\|\bu\|\,\|\btheta-\btheta'\|,
\end{equation}
and dividing by $\epsilon$, together with $q_S(\w)\leq1$ and $|S|\leq k_0$,
\begin{equation}
    |\langle\nabla_\w\tilde F^S_\epsilon(\w,\btheta)-\nabla_\w\tilde F^S_\epsilon(\w,\btheta'),\bu\rangle|
    \leq\tfrac{L}{\epsilon}\sqrt{|S|q_S(\w)}\,\|\bu\|\,\|\btheta-\btheta'\|\leq L_{\w\btheta}\|\bu\|\|\btheta-\btheta'\|,
\end{equation}
and by \eqref{eq:tower}, the same bound holds for $\tilde F_\epsilon$.

\medskip
\noindent\textit{Proof of (iii).} 
Fix a unit vector $\bv\in\R^p$. Let $\chi(\x)=\langle\bv,\nabla_\btheta F(Q(\x),\btheta)\rangle$, which is bounded and measurable, and let $\psi^S(\w)=\E[\chi(\w+\epsilon\bar\z)]$. By \eqref{eq:theta_gradient_identity} and \eqref{eq:tower}, $\langle\bv,\nabla_\btheta\tilde F_\epsilon(\w,\btheta)\rangle=\E_{\z_{S^c}}[\psi^S(\w+\epsilon\z_{S^c})]$.
Since $\E[\bu^\top\bar\z]=0$, \eqref{eq:flip_gradF} gives, for every $\w'$,
\begin{equation}
    |\langle\nabla\psi^S(\w'),\bu\rangle|
    =\tfrac1\epsilon\bigl|\E[(\chi(\w'+\epsilon\bar\z)-\chi(\w'))\,\bu^\top\bar\z]\bigr|
    \leq L_{\w\btheta}\|\bu\|.
\end{equation}
Let $\Psi(\w)=\langle\bv,\nabla_\btheta\tilde F_\epsilon(\w,\btheta)\rangle=\E_{\z_{S^c}}[\psi^S(\w+\epsilon\z_{S^c})]$. As for \eqref{eq:hessian_transfer}, the derivative along $\bu$ can be taken under the expectation, and every shifted point $\w'+\epsilon\z_{S^c}$ satisfies the bound above. Hence, for every $\w'\in\R^d$,
\begin{equation}
    |\langle\nabla\Psi(\w'),\bu\rangle|
    =\bigl|\E_{\z_{S^c}}\bigl[\langle\nabla\psi^S(\w'+\epsilon\z_{S^c}),\bu\rangle\bigr]\bigr|
    \leq L_{\w\btheta}\|\bu\|.
\end{equation}
The function $r\mapsto\Psi(\w+r\bu)$ is continuously differentiable on $[0,1]$ with derivative $\langle\nabla\Psi(\w+r\bu),\bu\rangle$. By the fundamental theorem of calculus,
\begin{equation}
    \bigl|\Psi(\w+\bu)-\Psi(\w)\bigr|
    =\Bigl|\int_0^1\langle\nabla\Psi(\w+r\bu),\bu\rangle\,dr\Bigr|
    \leq\int_0^1|\langle\nabla\Psi(\w+r\bu),\bu\rangle|\,dr
    \leq L_{\w\btheta}\|\bu\|,
\end{equation}
that is, $|\langle\bv,\nabla_\btheta\tilde F_\epsilon(\w+\bu,\btheta)-\nabla_\btheta\tilde F_\epsilon(\w,\btheta)\rangle|\leq L_{\w\btheta}\|\bu\|$ for every unit vector $\bv\in\R^p$. Taking the supremum over $\bv$ yields \eqref{eq:lem_iii}.

\medskip
\noindent\textit{Proof of (iv).}
By \eqref{eq:theta_gradient_identity}, applied at $\btheta$ and at $\btheta'$ with the same $\w$,
\begin{equation}
    \nabla_\btheta\tilde F_\epsilon(\w,\btheta)-\nabla_\btheta\tilde F_\epsilon(\w,\btheta')
    =\E_\z\bigl[\nabla_\btheta F(Q(\w+\epsilon\z),\btheta)-\nabla_\btheta F(Q(\w+\epsilon\z),\btheta')\bigr].
\end{equation}
Assumption~\ref{ass:lipschitz}, applied with $\w'=\w$, gives, for every $\w$,
\begin{equation}
    \bigl\|\nabla_\btheta F(\w,\btheta)-\nabla_\btheta F(\w,\btheta')\bigr\|\leq L\|\btheta-\btheta'\|.
\end{equation}
By Jensen's inequality, we get
\begin{equation}
    \bigl\|\nabla_\btheta\tilde F_\epsilon(\w,\btheta)-\nabla_\btheta\tilde F_\epsilon(\w,\btheta')\bigr\|
    \leq\E_\z\bigl\|\nabla_\btheta F(Q(\w+\epsilon\z),\btheta)-\nabla_\btheta F(Q(\w+\epsilon\z),\btheta')\bigr\|
    \leq L\|\btheta-\btheta'\|,
\end{equation}
which is \eqref{eq:lem_iv}.
\end{proof}

\medskip

\begin{lemma}
\label{lem:estimators}
Under Assumptions~\ref{ass:bounded}--\ref{ass:variance},  and with all expectations taken as $\E_t$ ( expectation over the perturbations and the minibatch drawn at iteration t), we have
\begin{align}
    \text{(a)}\;\; & \E\|\hat\bg_{\w,t}\|^2 \leq \frac{L^2}{2\epsilon^2}\,\sigma^2_{\w,t},
    \qquad \sigma^2_{\w,t}=k_t^2q_t+\sqrt3\,k_t\sqrt{q_t}; \label{eq:variance_w}\\
    \text{(b)}\;\; & \bigl\|\E\hat\bg_{\btheta,t} - \nabla_\btheta\tilde F_\epsilon(\w_t,\btheta_t)\bigr\|
    \leq b_t := L\bigl(\epsilon\sqrt p + \sqrt{N_t}\bigr); \label{eq:bias_theta}\\
    \text{(c)}\;\; & \E\|\hat\bg_{\btheta,t}\|^2 \leq
    4(p+2)\,\|\nabla_\btheta\tilde F_\epsilon(\w_t,\btheta_t)\|^2
    + 4(p+2)L^2 N_t
    \nonumber\\
    &\qquad+ 2(p+2)\,\sigma_B^2
    + \tfrac{\epsilon^2L^2}{2}\,p(p+2)(p+4). \label{eq:variance_theta}
\end{align}
\end{lemma}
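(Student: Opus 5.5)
For (a), I will start from the definition \eqref{eq:ternary_update}. The masked perturbation $\bar\z_t$ only moves coordinates in $S_t$. Conditional on the minibatch, the finite difference therefore splits as
\[
\tilde f_t(\w_t+\epsilon\bar\z_t,\btheta_t)-\tilde f_t(\w_t-\epsilon\bar\z_t,\btheta_t)=\bigl[\tilde f_t(\w_t+\epsilon\bar\z_t)-\tilde f_t(\w_t)\bigr]-\bigl[\tilde f_t(\w_t-\epsilon\bar\z_t)-\tilde f_t(\w_t)\bigr].
\]
Using the minibatch Lipschitz constant $L(\mathcal B_t)$ as in \eqref{eq:flip_F}, each bracket is bounded by $L(\mathcal B_t)\sqrt{D_{S_t}(\w_t,\pm\z)}$. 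By $(a-b)^2\le 2a^2+2b^2$ and the symmetry $\z\mapsto-\z$, this gives
\[
\E\|\hat\bg_{\w,t}\|^2\le \frac{L^2}{\epsilon^2}\,\E\bigl[D_{S_t}(\w_t,\z)\,\|\bar\z\|^2\bigr],
\]
after averaging out $L(\mathcal B_t)^2$ with $\E[L(\mathcal B)^2]\le L^2$ (the batch is independent of $\z$).

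The key step is bounding $\E[D\|\bar\z\|^2]$ with $D=\sum_{i\in S}\delta_i$, where $\delta_i=(Q(w_i+\epsilon z_i)-Q(w_i))^2\in\{0,1\}$ (neglecting $\pm1$ jumps as in Lemma~\ref{lem:smooth}). I expand
\[
\E[D\|\bar\z\|^2]=\sum_{i,j\in S}\E[\delta_i z_j^2].
\]
For $i\neq j$, independence gives $\E[\delta_i]=p_i$, so these terms sum to at most $k_t\sum_i p_i= k_t^2q_t$. For the diagonal terms, Cauchy--Schwarz gives $\E[\delta_i z_i^2]\le\sqrt{p_i}\sqrt{\E z_i^4}=\sqrt3\sqrt{p_i}$. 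Jensen then gives $\sum_i\sqrt{p_i}\le k_t\sqrt{q_t}$. This yields $\sigma^2_{\w,t}$. The constant $1/2$ rather than $1$ should come from not doubling at the symmetrization step: I will track the two one-sided terms jointly, or use $\E[(a-b)^2]\le\dots$ with the stated factor. I expect this constant bookkeeping to be the main fiddly point.

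For (b), I will write $\E\hat\bg_{\btheta,t}$ (batch averaged, which is unbiased for $F$ by the i.i.d.\ minibatch assumption) as the Gaussian-smoothed $\btheta$-gradient of $F(Q(\w_t),\cdot)$, i.e.\ the SPSA identity of Appendix~\ref{app:spsa} applied in $\btheta$. The bias is then split into two pieces. The first is $\|\nabla_\btheta F_\epsilon^{\btheta}(Q(\w_t),\btheta)-\nabla_\btheta F(Q(\w_t),\btheta)\|\le L\epsilon\sqrt p$, by $L$-smoothness in $\btheta$ and $\E\|\z_\btheta\|\le\sqrt p$. The second is $\|\nabla_\btheta F(Q(\w_t),\btheta)-\E_\z\nabla_\btheta F(Q(\w_t+\epsilon\z),\btheta)\|$, which by \eqref{eq:theta_gradient_identity}, \eqref{eq:flip_gradF} over all coordinates and Jensen is at most $L\sqrt{N_t}$.

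For (c), I will use the standard Nesterov--Spokoiny second-moment bound. It says $\E\|\hat\bg_\btheta\|^2\le 2(p+2)\E\|\nabla_\btheta f_t(Q(\w_t),\btheta_t)\|^2+\frac{\epsilon^2L^2}{2}p(p+2)(p+4)$, which comes from writing the difference quotient as the directional derivative plus a Taylor remainder of size $\frac{L\epsilon}{2}\|\z\|^2$ and using Gaussian moments $\E\|\z\|^4=p(p+2)$ and $\E\|\z\|^6=p(p+2)(p+4)$. I then decompose
\[
\nabla_\btheta f_t(Q(\w_t),\btheta_t)=\bigl(\nabla f_t-\nabla F\bigr)+\bigl(\nabla F(Q(\w_t))-\nabla_\btheta\tilde F_\epsilon\bigr)+\nabla_\btheta\tilde F_\epsilon .
\]
The first term is controlled by Assumption~\ref{ass:variance}, giving $\sigma_B^2$. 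The second is controlled by $L^2N_t$ via the bound in (b). Applying $(a+b+c)^2$-type splitting with the stated weights (first separate the variance term with factor 1 via zero mean, then $2\|x\|^2+2\|y\|^2$) produces the constants $4(p+2)$, $4(p+2)$ and $2(p+2)$.
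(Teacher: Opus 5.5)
Your plan for (b) and (c) is essentially the paper's. For (b) it uses the same split of the bias into a $\btheta$-smoothing part bounded by $L\epsilon\sqrt p$ and a quantization part bounded by $L\sqrt{N_t}$ via Jensen. For (c) it uses the same Taylor remainder $\frac{L\epsilon}{2}\|\z_{\btheta,t}\|^2$, Gaussian moments, exact minibatch variance split, and $2\|x\|^2+2\|y\|^2$ with the (b) bound.

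The gap is in (a). You bound each one-sided bracket by $L(\mathcal B_t)\sqrt{D^{\pm}}$ and use $(a-b)^2\le 2a^2+2b^2$. That gives $\Delta_t^2\le 2L(\mathcal B_t)^2(D^++D^-)$, and after symmetrization $\E\|\hat\bg_{\w,t}\|^2\le \frac{L^2}{\epsilon^2}\sigma^2_{\w,t}$, which is twice the stated bound. The factor is lost when you split $\Delta_t$ into two one-sided brackets, not at the symmetry step; the paper also uses $\E[D^-\|\bar\z_t\|^2]=\E[D^+\|\bar\z_t\|^2]$. Your suggested repair of tracking the brackets jointly does not recover it. You still get $(|a|+|b|)^2\le L(\mathcal B_t)^2\bigl(D^++D^-+2\sqrt{D^+D^-}\bigr)$, and the cross term does not vanish in general, because one masked coordinate can flip under $+\epsilon\bar\z_t$ while a different one flips under $-\epsilon\bar\z_t$.

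The missing idea is to apply the minibatch Lipschitz bound directly to the two-point difference, $|\Delta_t|\le L(\mathcal B_t)\,\|Q(\w_t+\epsilon\bar\z_t)-Q(\w_t-\epsilon\bar\z_t)\|$. Then use that, for small $\epsilon$, each coordinate changes level under at most one of the two signs:
\begin{itemize}
\item For fixed $z_i$, a weight above $\tau$ (or below $-\tau$) is pushed away from its boundary by one of $\pm\epsilon z_i$, so it can only change under the other sign.
\item A weight in the zero band would need $\epsilon|z_i|>\tau+|w_i|$ to cross both thresholds. This event is neglected exactly like the direct $\pm1$ transitions.
\end{itemize}
Write $\alpha_i=Q(w_i+\epsilon z_i)-Q(w_i)$ and $\beta_i=Q(w_i-\epsilon z_i)-Q(w_i)$. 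Then $\alpha_i\beta_i=0$, so $(\alpha_i-\beta_i)^2=\alpha_i^2+\beta_i^2$ and $\|Q(\w_t+\epsilon\bar\z_t)-Q(\w_t-\epsilon\bar\z_t)\|^2=D^++D^-$. Hence $\Delta_t^2\le L(\mathcal B_t)^2(D^++D^-)$ with no factor $2$.

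From there your symmetry step and your bound $\E[D^+\|\bar\z_t\|^2]\le k_t^2q_t+\sqrt3\,k_t\sqrt{q_t}$ coincide with the paper and give the constant $\frac{L^2}{2\epsilon^2}$. That bound comes from independence off the diagonal, Cauchy--Schwarz with $\E z_i^4=3$ on it, then $\sum_i\sqrt{p_i}\le k_t\sqrt{q_t}$.

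As written, your argument proves (a) only up to a factor $2$. This would not change the $\mathcal O(\cdot)$ form of Theorem~\ref{thm:main}, but it does not establish the lemma as stated.
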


\begin{proof}
\textit{(a)} Write $D^\pm = D_{S_t}(\w_t,\pm\z_t)$ for the squared changes under the two perturbations. A coordinate can change under at most one of the two perturbations under sufficiently small $\epsilon$. Hence, each coordinate contributes to at most one of $D^+,D^-$, that is,
\begin{equation}
    \|Q(\w_t+\epsilon\bar\z_t)-Q(\w_t-\epsilon\bar\z_t)\|^2=D^++D^- .
\end{equation}
Fix the minibatch $\mathcal{B}_t$. By Assumption~\ref{ass:lipschitz} for the minibatch loss, the finite difference $\Delta_t = \tilde f_t(\w_t+\epsilon\bar\z_t,\btheta_t)-\tilde f_t(\w_t-\epsilon\bar\z_t,\btheta_t)$, where $\tilde f_t(\w,\btheta)=f(Q(\w),\btheta;\mathcal{B}_t)$ is the quantized minibatch loss, therefore satisfies $\Delta_t^2\leq L(\mathcal{B}_t)^2(D^++D^-)$, and
\begin{equation}
    \|\hat\bg_{\w,t}\|^2=\frac{\Delta_t^2}{4\epsilon^2}\|\bar\z_t\|^2\leq\frac{L(\mathcal{B}_t)^2}{4\epsilon^2}(D^++D^-)\|\bar\z_t\|^2 .
\end{equation}
The map $\bar\z_t\mapsto-\bar\z_t$ preserves the law of $\bar\z_t$ so $\E[D^-\|\bar\z_t\|^2]=\E[D^+\|\bar\z_t\|^2]$. Write $D_i$ for the squared change of coordinate $i$; neglecting direct $\pm1$ transitions as before, $D_i\in\{0,1\}$. Let $\pi_i=\E D_i$, so that $\sum_{i\in S_t}\pi_i=k_tq_t$. Then
\begin{align}
    \E[D^+\|\bar\z_t\|^2]
    = \sum_{i,j\in S_t}\E[D_i\,z_j^2]
    \leq (k_t-1)\sum_{i\in S_t}\pi_i + \sum_{i\in S_t}\sqrt{3\pi_i}
    \leq k_t^2q_t+\sqrt3\,k_t\sqrt{q_t}.
\end{align}
The off-diagonal terms $i\neq j$ use independence of the coordinates. The diagonal terms use the Cauchy--Schwarz inequality with $\E z_i^4=3$ and $D_i^2=D_i$, followed by $\sum_i\sqrt{\pi_i}\leq\sqrt{k_t\sum_i\pi_i}$. Taking expectations over $\z_t$ in the bound on $\|\hat\bg_{\w,t}\|^2$ and using the symmetry gives, for a fixed minibatch,
\begin{equation}
    \E_{\z_t}\|\hat\bg_{\w,t}\|^2\leq\frac{L(\mathcal{B}_t)^2}{2\epsilon^2}\E[D^+\|\bar\z_t\|^2]\leq\frac{L(\mathcal{B}_t)^2}{2\epsilon^2}\,\sigma^2_{\w,t}.
    \label{eq:variance_w_batch}
\end{equation}
Taking the expectation over the minibatch and using $\E[L(\mathcal{B}_t)^2]\leq L^2$ proves (a).

\smallskip
\noindent\textit{(b)} Let $h_t(\btheta)=F(Q(\w_t),\btheta)$ and $h_{t,\mathcal{B}}(\btheta)=f(Q(\w_t),\btheta;\mathcal{B}_t)$, which are $L$-smooth and $L(\mathcal{B}_t)$-smooth by Assumption~\ref{ass:lipschitz}, and write $\bxi=\z_{\btheta,t}\in\R^p$. The estimator is linear in the loss and $\mathcal{B}_t$ is independent of $\bxi$, so averaging over the minibatch replaces $h_{t,\mathcal{B}}$ by $h_t$. By Stein's lemma, $\E[h_t(\btheta+\epsilon\bxi)\bxi]=\epsilon\,\E[\nabla h_t(\btheta+\epsilon\bxi)]$. Together with the symmetry $\bxi\mapsto-\bxi$, this gives
\begin{equation}
    \E\hat\bg_{\btheta,t}=\E_\bxi[\nabla h_t(\btheta_t+\epsilon\bxi)].
\end{equation}
We decompose the bias as
\begin{equation}
    \E\hat\bg_{\btheta,t}-\nabla_\btheta\tilde F_\epsilon(\w_t,\btheta_t)
    =\bigl[\E\hat\bg_{\btheta,t}-\nabla h_t(\btheta_t)\bigr]
    +\bigl[\nabla h_t(\btheta_t)-\nabla_\btheta\tilde F_\epsilon(\w_t,\btheta_t)\bigr],
    \label{eq:bias_decomposition}
\end{equation}
For the first term, since $\nabla h_t$ is $L$-Lipschitz,
\begin{equation}
    \|\E\hat\bg_{\btheta,t}-\nabla h_t(\btheta_t)\|
    =\bigl\|\E_\bxi[\nabla h_t(\btheta_t+\epsilon\bxi)-\nabla h_t(\btheta_t)]\bigr\|
    \leq L\epsilon\,\E\|\bxi\|\leq L\epsilon\sqrt p .
\end{equation}
For the second bracket, by \eqref{eq:theta_gradient_identity}, Assumption~\ref{ass:lipschitz} and Jensen's inequality,
\begin{align}
    \|\nabla h_t(\btheta_t)-\nabla_\btheta\tilde F_\epsilon(\w_t,\btheta_t)\|
    &\leq L\,\E_\z\|Q(\w_t)-Q(\w_t+\epsilon\z)\|
    \leq L\sqrt{\E D_{[d]}(\w_t,\z)}
    = L\sqrt{N_t}.
    \label{eq:bias_w_part}
\end{align}
Applying the triangle inequality to \eqref{eq:bias_decomposition} gives (b).

\smallskip
\noindent\textit{(c)} Fix the minibatch $\mathcal{B}_t$. By $L(\mathcal{B}_t)$-smoothness of $h_{t,\mathcal{B}}$,
\begin{equation}
    h_{t,\mathcal{B}}(\btheta_t+\epsilon\bxi)-h_{t,\mathcal{B}}(\btheta_t-\epsilon\bxi)=2\epsilon\langle\nabla h_{t,\mathcal{B}}(\btheta_t),\bxi\rangle+r,
    \qquad |r|\leq L(\mathcal{B}_t)\,\epsilon^2\|\bxi\|^2 .
\end{equation}
Hence
\begin{equation}
    \|\hat\bg_{\btheta,t}\|^2\leq 2\langle\nabla h_{t,\mathcal{B}},\bxi\rangle^2\|\bxi\|^2+\tfrac{L(\mathcal{B}_t)^2\epsilon^2}{2}\|\bxi\|^6 .
\end{equation}
For Gaussian $\bxi$, $\E[\langle\bg,\bxi\rangle^2\|\bxi\|^2]=(p+2)\|\bg\|^2$ and $\E\|\bxi\|^6=p(p+2)(p+4)$. Taking also the expectation over the minibatch, which is independent of $\bxi$, and using $\E[L(\mathcal{B}_t)^2]\leq L^2$ and $\E_\mathcal{B}\|\nabla h_{t,\mathcal{B}}\|^2=\|\nabla h_t\|^2+\E_\mathcal{B}\|\nabla h_{t,\mathcal{B}}-\nabla h_t\|^2\leq\|\nabla h_t\|^2+\sigma_B^2$ (Assumption~\ref{ass:variance}),
\begin{equation}
    \E\|\hat\bg_{\btheta,t}\|^2\leq 2(p+2)\|\nabla h_t(\btheta_t)\|^2+2(p+2)\sigma_B^2+\tfrac{\epsilon^2L^2}{2}p(p+2)(p+4).
\end{equation}
Finally, by \eqref{eq:bias_w_part},
\begin{equation}
    \|\nabla h_t(\btheta_t)\|^2\leq 2\|\nabla_\btheta\tilde F_\epsilon(\w_t,\btheta_t)\|^2+2L^2N_t ,
\end{equation}
which gives (c).
\end{proof}

\medskip
\noindent We now combine Lemmas~\ref{lem:smooth} and~\ref{lem:estimators} to prove Theorem~\ref{thm:main}, using the common Lipschitz constant $L$ of Assumption~\ref{ass:lipschitz} for both blocks. At iteration $t$, the two blocks take independent SPSA steps with Gaussian perturbations $\bar\z_t = \m_t\odot\z_t$ and $\z_{\btheta,t}\sim\mathcal N(\mathbf 0,\mathbf I_p)$, as in \eqref{eq:ternary_update}--\eqref{eq:fp_update}:
\begin{align}
    \w_{t+1} = \w_t - \eta_t\,\hat\bg_{\w,t},
    &\quad
    \hat\bg_{\w,t}
    = \frac{\tilde f_t(\w_t+\epsilon\bar\z_t,\btheta_t) - \tilde f_t(\w_t-\epsilon\bar\z_t,\btheta_t)}{2\epsilon}\,\bar\z_t,
    \label{eq:ternary_step}\\
    \btheta_{t+1} = \btheta_t - \mu_t\,\hat\bg_{\btheta,t},
    &\quad
    \hat\bg_{\btheta,t}
    = \frac{\tilde f_t(\w_t,\btheta_t+\epsilon\z_{\btheta,t}) - \tilde f_t(\w_t,\btheta_t-\epsilon\z_{\btheta,t})}{2\epsilon}\,\z_{\btheta,t},
    \label{eq:fp_step}
\end{align}
where $\tilde f_t(\w,\btheta)=f(Q(\w),\btheta;\mathcal{B}_t)$ is the quantized loss on the minibatch $\mathcal{B}_t$, and $\eta_t$ and $\mu_t$ are the learning rates of the latent weights and of the full-precision parameters. The step $\w_{t+1}-\w_t=-\eta_t\hat\bg_{\w,t}$ is supported on $S_t$, with $|S_t|=k_t\leq k_0$ since the mask can only shrink. Lemma~\ref{lem:smooth} therefore gives, with all gradients evaluated at $(\w_t,\btheta_t)$,
\begin{align}
    \tilde F_\epsilon(\w_{t+1},\btheta_{t+1})
    &\;\leq\;
    \tilde F_\epsilon(\w_t,\btheta_t)
    + \langle\nabla_\w\tilde F_\epsilon,\, \w_{t+1}-\w_t\rangle
    + \frac{L_\w}{2}\|\w_{t+1}-\w_t\|^2
    \nonumber\\
    &\quad+ \langle\nabla_\btheta\tilde F_\epsilon,\, \btheta_{t+1}-\btheta_t\rangle
    + \frac{L}{2}\|\btheta_{t+1}-\btheta_t\|^2
    \nonumber\\
    &\quad+ L_{\w\btheta}\,\|\w_{t+1}-\w_t\|\,\|\btheta_{t+1}-\btheta_t\|.
    \label{eq:smooth_descent}
\end{align}
In \eqref{eq:smooth_descent}, \eqref{eq:lem_i} and \eqref{eq:lem_iv} give the two quadratic terms, and \eqref{eq:lem_ii}--\eqref{eq:lem_iii} each contribute half of the last term, which accounts for the fact that both blocks move in the same step. \eqref{eq:lem_i}--\eqref{eq:lem_iii} apply because $\w_{t+1}-\w_t$ is supported on $S_t$.
Substituting $\w_{t+1}-\w_t = -\eta_t\hat\bg_{\w,t}$ and $\btheta_{t+1}-\btheta_t = -\mu_t\hat\bg_{\btheta,t}$ into \eqref{eq:smooth_descent},
\begin{align}
    \tilde F_\epsilon(\w_{t+1},\btheta_{t+1})
    \;\leq\;
    \tilde F_\epsilon(\w_t,\btheta_t)
    &- \eta_t \langle\nabla_\w\tilde F_\epsilon,\,\hat\bg_{\w,t}\rangle
    + \frac{L_\w \eta_t^2}{2}\|\hat\bg_{\w,t}\|^2
    \nonumber\\
    &- \mu_t\langle\nabla_\btheta\tilde F_\epsilon,\,\hat\bg_{\btheta,t}\rangle
    + \frac{L \mu_t^2}{2}\|\hat\bg_{\btheta,t}\|^2
    \nonumber\\
    &+ \eta_t\mu_t L_{\w\btheta}\,\|\hat\bg_{\w,t}\|\,\|\hat\bg_{\btheta,t}\|.
    \label{eq:descent}
\end{align}

\medskip
\noindent We take $\eta_t=\eta$ and $\mu_t=\mu$ satisfying
\begin{equation}
    \eta\leq\frac{c}{2L_\w},
    \qquad
    \mu\leq\min\Bigl\{\frac{1}{8L(p+2)},\;2c\eta\Bigr\}.
    \label{eq:step_conditions}
\end{equation}
The step sizes of Theorem~\ref{thm:main} satisfy \eqref{eq:step_conditions}, since $\mu=c^2/L_{\max}=\min\{c^2/L_\w,\,1/(8L(p+2))\}$ and $2c\eta=c^2/L_\w$. We take the expectation $\E_t$ in \eqref{eq:descent} and treat the terms separately. Assumption~\ref{ass:sparse} and \eqref{eq:variance_w} give
\begin{equation}
    -\eta\,\E_t\langle\nabla_\w\tilde F_\epsilon,\hat\bg_{\w,t}\rangle
    +\frac{L_\w\eta^2}{2}\E_t\|\hat\bg_{\w,t}\|^2
    \leq -a_\w\|\nabla_\w\tilde F_\epsilon\|^2+\frac{L_\w L^2\eta^2}{4\epsilon^2}\,\sigma^2_{\w,t},
    \qquad a_\w:=c\,\eta.
    \label{eq:inner_w}
\end{equation}
By \eqref{eq:bias_theta} and applying $xy\leq x^2/8+2y^2$, we can write
\begin{equation}
    \E_t\langle\nabla_\btheta\tilde F_\epsilon,\hat\bg_{\btheta,t}\rangle
    \geq\|\nabla_\btheta\tilde F_\epsilon\|^2-b_t\|\nabla_\btheta\tilde F_\epsilon\|
    \geq\tfrac78\|\nabla_\btheta\tilde F_\epsilon\|^2-2 b_t^2 ,
\end{equation}
with $b_t^2\leq 2L^2\epsilon^2p+2L^2N_t$. Since $\mu\leq1/(8L(p+2))$, we have $\tfrac{L\mu^2}{2}\leq\tfrac{\mu}{16(p+2)}$. Combining with \eqref{eq:variance_theta}, and keeping the minibatch term exact,
\begin{align}
    -\mu\,\E_t\langle\nabla_\btheta\tilde F_\epsilon,\hat\bg_{\btheta,t}\rangle
    + \frac{L\mu^2}{2}\,\E_t\|\hat\bg_{\btheta,t}\|^2
    &\leq -\frac{5\mu}{8}\|\nabla_\btheta\tilde F_\epsilon\|^2
    + \mu L^2\Bigl(4\epsilon^2p+\frac{17}{4}\,N_t+\frac{\epsilon^2p(p+4)}{32}\Bigr)
    \nonumber\\
    &\quad+L(p+2)\,\mu^2\sigma_B^2.
    \label{eq:inner_theta}
\end{align}
Conditionally on the minibatch, $\hat\bg_{\w,t}$ depends only on $\z_t$ and $\hat\bg_{\btheta,t}$ only on $\z_{\btheta,t}$, and these are independent. By Jensen's inequality and \eqref{eq:variance_w_batch}, $\E_{\z_t}\|\hat\bg_{\w,t}\|\leq L(\mathcal{B}_t)\sigma_{\w,t}/(\sqrt2\,\epsilon)$ with $\sigma_{\w,t}=(\sigma^2_{\w,t})^{1/2}$. The Cauchy--Schwarz inequality over the minibatch, with $\E[L(\mathcal{B}_t)^2]\leq L^2$, then gives
\begin{equation}
    \E_t\bigl[\|\hat\bg_{\w,t}\|\,\|\hat\bg_{\btheta,t}\|\bigr]
    \leq\frac{\sigma_{\w,t}}{\sqrt2\,\epsilon}\,\E_{\mathcal{B}_t}\Bigl[L(\mathcal{B}_t)\,\E_{\z_{\btheta,t}}\|\hat\bg_{\btheta,t}\|\Bigr]
    \leq\frac{L\sigma_{\w,t}}{\sqrt2\,\epsilon}\,\sqrt{\E_t\|\hat\bg_{\btheta,t}\|^2}.
\end{equation}
By \eqref{eq:variance_theta} and the subadditivity of the square root,
\begin{align}
    \sqrt{\E_t\|\hat\bg_{\btheta,t}\|^2}&\leq 2\sqrt{p+2}\,\|\nabla_\btheta\tilde F_\epsilon\|+R_t+\sqrt{2(p+2)}\,\sigma_B,
    \\
    R_t&=2L\sqrt{(p+2)\,N_t}+\epsilon L\sqrt{p(p+2)(p+4)/2}.
\end{align}
Let $A_t=\eta L_{\w\btheta}L\sigma_{\w,t}/(\sqrt2\,\epsilon)=\eta L_\w L\sigma_{\w,t}/2$, using $L_{\w\btheta}=\epsilon L_\w/\sqrt2$. We apply Young's inequality to the three resulting products: $xy\leq x^2/8+2y^2$ to the first, $ab\leq a^2+b^2/4$ to the second, and $ab\leq\lambda a^2+b^2/(4\lambda)$ with $\lambda=1/(\eta L_\w)$ to the third. This gives
\begin{align}
    \eta\mu L_{\w\btheta}\,\E_t\bigl[\|\hat\bg_{\w,t}\|\,\|\hat\bg_{\btheta,t}\|\bigr]
    &\leq \mu A_t\Bigl(2\sqrt{p+2}\,\|\nabla_\btheta\tilde F_\epsilon\|+R_t+\sqrt{2(p+2)}\,\sigma_B\Bigr)
    \nonumber\\
    &\leq \frac\mu8\|\nabla_\btheta\tilde F_\epsilon\|^2
    + 9\mu A_t^2(p+2)
    + \mu\,\frac{R_t^2}{4(p+2)}
    \nonumber\\
    &\quad+ \mu\,\eta L_\w\Bigl(\frac{L^2(p+2)}{4}\,\sigma^2_{\w,t}+\frac{\sigma_B^2}{2}\Bigr),
    \label{eq:cross}
\end{align}
with $R_t^2/(4(p+2))\leq 2L^2N_t+\epsilon^2L^2p(p+4)/4$ and $9A_t^2=\tfrac94\eta^2L_\w^2L^2\sigma^2_{\w,t}$.
Adding \eqref{eq:inner_theta} and \eqref{eq:cross}, and using $4\epsilon^2p+\tfrac{9}{32}\epsilon^2p(p+4)\leq\tfrac12\epsilon^2(p+6)^2$ and $\tfrac{25}{4}\leq8$, the full-precision and cross-block terms together contribute at most, with $a_\btheta:=\mu/2$,
\begin{align}
    -a_\btheta\|\nabla_\btheta\tilde F_\epsilon\|^2
    &+ \mu L^2\Bigl(\frac{\epsilon^2(p+6)^2}{2}+8\,N_t\Bigr)
    + \mu\Bigl(\frac94\eta^2L_\w^2+\frac{\eta L_\w}{4}\Bigr)L^2(p+2)\,\sigma^2_{\w,t}
    \nonumber\\
    &+ \mu\Bigl(L(p+2)\mu+\frac{\eta L_\w}{2}\Bigr)\sigma_B^2.
    \label{eq:theta_total}
\end{align}

\medskip
\noindent We insert \eqref{eq:inner_w} and \eqref{eq:theta_total} into \eqref{eq:descent} and take expectation. Summing over $t=0,\dots,T-1$ and dividing by $T$ gives the bound
\begin{align}
    \frac1T\sum_{t=0}^{T-1}\E\Bigl[a_\w\|\nabla_\w\tilde F_\epsilon\|^2+a_\btheta\|\nabla_\btheta\tilde F_\epsilon\|^2\Bigr]
    &\leq
    \frac{\Delta_0}{T}
    + \frac{L_\w L^2\eta^2}{4\epsilon^2}\,\overline{\sigma^2_\w}
    + \mu L^2\Bigl(\frac{\epsilon^2(p+6)^2}{2}+8\,\overline{N}\Bigr)
    \nonumber\\
    &\quad+ \mu\Bigl(\frac94\eta^2L_\w^2+\frac{\eta L_\w}{4}\Bigr)L^2(p+2)\,\overline{\sigma^2_\w}
    \nonumber\\
    &\quad+ \mu\Bigl(L(p+2)\mu+\frac{\eta L_\w}{2}\Bigr)\sigma_B^2.
    \label{eq:weighted_bound}
\end{align}

By \eqref{eq:step_conditions}, $\mu\leq2c\eta=2a_\w$, so the smaller weight is $a_\btheta=\mu/2$. Dividing \eqref{eq:weighted_bound} by $\mu/2$ gives, for any step sizes satisfying \eqref{eq:step_conditions},
\begin{align}
    \frac1T\sum_{t=0}^{T-1}\E\|\nabla\tilde F_\epsilon\|^2
    &\leq\frac{2\Delta_0}{\mu T}
    +L^2\epsilon^2(p+6)^2
    +16L^2\,\overline{N}
    +\frac{L_\w L^2\eta^2}{2\mu\epsilon^2}\,\overline{\sigma^2_\w}
    \nonumber\\
    &\quad+\Bigl(\frac92\eta^2L_\w^2+\frac{\eta L_\w}{2}\Bigr)L^2(p+2)\,\overline{\sigma^2_\w}
    +\bigl(2L(p+2)\mu+\eta L_\w\bigr)\sigma_B^2.
    \label{eq:general_bound}
\end{align}
With $\eta=c/(2L_\w)$ and $\mu=c^2/L_{\max}$, we have
\begin{equation}
    \frac{2}{\mu}=\frac{2L_{\max}}{c^2},
    \qquad
    \frac{L_\w\eta^2}{2\mu}=\frac{L_{\max}}{8L_\w},
    \qquad
    \frac92\eta^2L_\w^2+\frac{\eta L_\w}{2}=\frac98c^2+\frac c4\leq\frac{11}{8}c,
\end{equation}
and $2L(p+2)\mu+\eta L_\w=2c^2L(p+2)/L_{\max}+c/2\leq\tfrac14+\tfrac12=\tfrac34$, using $L_{\max}\geq8c^2L(p+2)$ and $c\leq1$. Substituting into \eqref{eq:general_bound}, the terms in $\overline{\sigma^2_\w}$ become $L^2\,\overline{\sigma^2_\w}\bigl(\frac{L_{\max}}{8L_\w\epsilon^2}+\frac{11}{8}c(p+2)\bigr)$. Since $L_\w\epsilon^2=\sqrt{2k_0}\,L$, and since $L_{\max}\geq8c^2L(p+2)$ gives $\frac{11}{8}cL(p+2)\leq\frac{11}{64}\frac{L_{\max}}{c}\leq\frac{L_{\max}}{c}$,
\begin{equation}
    L^2\,\overline{\sigma^2_\w}\Bigl(\frac{L_{\max}}{8L_\w\epsilon^2}+\frac{11}{8}c(p+2)\Bigr)
    \leq L\,L_{\max}\,\overline{\sigma^2_\w}\Bigl(\frac{1}{8\sqrt{2k_0}}+\frac1c\Bigr),
\end{equation}
which yields 
\begin{align}
    \frac1T\sum_{t=0}^{T-1}\E\|\nabla\tilde F_\epsilon(\w_t,\btheta_t)\|^2
    \;\leq\;
    &\frac{2L_{\max}\Delta_0}{c^2T}
    + L^2\epsilon^2(p+6)^2
    + 16L^2\,\overline{N}
    + \frac34\,\sigma_B^2
    + L\,L_{\max}\,\overline{\sigma^2_\w}\,\Bigl(\frac{1}{8\sqrt{2k_0}}+\frac{1}{c}\Bigr).
\end{align}
Noting that 
\begin{align}
C &=    L^2\epsilon^2(p+6)^2
    + 16L^2\,\overline{N}
    + \frac34\,\sigma_B^2
    + L\,L_{\max}\,\overline{\sigma^2_\w}\,\Bigl(\frac{1}{8\sqrt{2k_0}}+\frac{1}{c}\Bigr)\\&
    = \mathcal{O}\!\left( \epsilon^2 p^2 + \overline{N} + \sigma_B^2
      + L_{\max}\,\overline{\sigma^2_\w} \right),
\end{align}
completes the proof.
\hfill$\square$

\newpage

\section{Experiments Details}
\label{sec:exp_details}

\subsection{GLUE/SuperGLUE Experiments}
\label{app:glue_exp}
Unless stated otherwise, the fraction of trainable latent weights is set to $\rho_0 = 0.05$ for TerMeZO and sparse MeZO variants. For all \gls{zo} methods, $K = 5$ Gaussian perturbation directions per
\gls{zo} step are applied with a perturbation scale $\epsilon = 10^{-3}$, and the
learning rate follows a linear decay schedule.

Following Me\gls{zo} protocol~\citep{malladi2023fine}, fine-tuning is cast as language modeling, i.e., each example is
rendered with a task-specific prompt template. We use the full training split of each task and report the test accuracy. 
The ICL baseline uses the same templates with $32$ demonstrations sampled from the train
split.

All \gls{zo} baselines use the same batch size, learning rate grid, and step budget (Table~\ref{tab:glue_hp_zo}). LoRA-Me\gls{zo} attaches rank-$8$
adapters ($\alpha = 16$) to every ternary linear layer, freezes the base ternary weights,
and ternary-quantizes the adapters in the forward pass so that the
no-full-precision-multiplication property of the BitNet architecture is preserved. Q\gls{zo}
freezes the ternary latent weights entirely and perturbs the per-channel quantization
scales together with the remaining full-precision parameters. FIM-S-Me\gls{zo} uses the same latent weight fraction $\rho_0$ as TerMe\gls{zo}, with the sparsity mask fixed a priori from a Fisher information estimate obtained by performing a small number of BP steps on the WikiText dataset~\citep{merity2016pointer}.

\begin{table}[ht]
\centering
\caption{Hyperparameters for \gls{zo} methods and BP on the GLUE/SuperGLUE datasets.} 
\label{tab:glue_hp_zo}
\begin{tabular}{lcc}
    \toprule
                     & ZO Methods & BP  \\
    \midrule
    Batch size       & $\{16, 32 \}$   & $\{8,16 \}$   \\
    Learning rate    & $\{1e^{-6}, 5e^{-6}, 1e^{-5}\}$ & $\{1e^{-5}, 5e^{-5}, 1e^{-4}\}$ \\
    Learning rate scheduler& Linear & Cosine\\
    \bottomrule
\end{tabular}
\end{table}

\subsection{Instruction Following and Reasoning Experiments}
\label{app:reasoning_exp}

\noindent\textbf{Datasets and protocol.} For the supervised fine-tuning experiments, we use
GSM8K~\citep{cobbe2021gsm8k} and Magicoder-Evol-Instruct-110K~\citep{wei2023magicoder}. Both are formatted with a single-turn instruction/response template,
\begin{center}
\texttt{Instruction: \{instruction\}\textbackslash nResponse: \{response\}}
\end{center}
and the loss is computed on the response tokens only.
Sequences are truncated to $2048$ tokens. GSM8K ships no validation split, so its test split is used only for the final evaluation, and the validation set is carved from the train split. For MagiCoder, we train on a quarter of the train set and use $1\%$ of the samples as the validation split.



\noindent\textbf{Hyperparameters.} Table~\ref{tab:instruct_hp} reports the settings for both model sizes Falcon-E-1B and Falcon-E-3B. All \gls{zo} methods share the same batch size, learning rate grid, and step budget. 
\begin{table}[h]
\centering
\caption{Hyperparameters for the instruction-following and reasoning experiments.
Shared across all \gls{zo} runs: $n = 5$ Gaussian perturbations, $\epsilon = 10^{-3}$,
linear learning rate decay, and $\rho_0 = 0.05$.}
\label{tab:instruct_hp}
\begin{tabular}{lcc}
    \toprule
    &  ZO & BP \\
    \midrule
       Batch size    & 16 & 8\\
       Learning rate       & $\{1e^{-5}, 5e^{-5}, 1e^{-4}\}$ & $\{1e^{-5}, 5e^{-5}, 1e^{-4}\}$ \\
       Learning rate scheduler& Linear & Cosine\\
    \bottomrule
\end{tabular}
\end{table}

\newpage
\section{Additional Results}
\label{app:quzo}

\subsection{Validation loss curves for TerMeZO and FIM-S-MeZO}
\label{lrs_ablation}

\begin{figure}[h]
  \definecolor{tera}{HTML}{86B6EF}
  \definecolor{terb}{HTML}{2A78D6}
  \definecolor{terc}{HTML}{104281}
  \definecolor{fima}{HTML}{FC916A}
  \definecolor{fimb}{HTML}{C94908}
  \definecolor{fimc}{HTML}{702B0F}
  \definecolor{labelink}{HTML}{0B0B0B}
  \definecolor{gridink}{HTML}{C9C8C3}
  \centering
  \begin{tikzpicture}
    \begin{axis}[
      name=main,
      width=0.98\linewidth, height=6.2cm,
      xlabel={Training step},
      ylabel={Validation loss},
      xmin=0, xmax=13200,
      ymin=0.50, ymax=0.90,
      xtick={0,2000,4000,6000,8000,10000,12000},
      ytick={0.50,0.55,0.60,0.65,0.70,0.75,0.80,0.85,0.90},
      tick align=outside, tick style={gray!60, thin},
      axis line style={gray!60},
      grid=major, grid style={gridink, very thin},
      label style={font=\small, text=labelink},
      tick label style={font=\scriptsize, text=labelink},
      scaled x ticks=false,
      xticklabel style={/pgf/number format/fixed, /pgf/number format/1000 sep={,}},
      yticklabel style={/pgf/number format/fixed, /pgf/number format/precision=2},
    ]
      \addplot[tera, line width=0.9pt, mark=*, mark size=1.5pt, mark repeat=4,
               mark options={fill=tera, draw=white, line width=0.35pt}] coordinates {
        (300,0.8779) (600,0.8582) (900,0.8442) (1200,0.8304) (1500,0.8222) (1800,0.8102) (2100,0.8035) (2400,0.7914) (2700,0.7854) (3000,0.7776) (3300,0.7740) (3600,0.7670) (3900,0.7618) (4200,0.7563) (4500,0.7520) (4800,0.7469) (5100,0.7419) (5400,0.7382) (5700,0.7346) (6000,0.7317) (6300,0.7282) (6600,0.7236) (6900,0.7199) (7200,0.7177) (7500,0.7139) (7800,0.7107) (8100,0.7072) (8400,0.7049) (8700,0.7011) (9000,0.6986) (9300,0.6943) (9600,0.6926) (9900,0.6896) (10200,0.6869) (10500,0.6855) (10800,0.6834) (11100,0.6804) (11400,0.6785) (11700,0.6758) (12000,0.6739) (12300,0.6725) (12600,0.6693) (12900,0.6674)
      };
      \addplot[terb, line width=0.9pt, mark=*, mark size=1.5pt, mark repeat=4,
               mark options={fill=terb, draw=white, line width=0.35pt}] coordinates {
        (300,0.8662) (600,0.8431) (900,0.8234) (1200,0.8102) (1500,0.7951) (1800,0.7841) (2100,0.7741) (2400,0.7645) (2700,0.7554) (3000,0.7475) (3300,0.7411) (3600,0.7340) (3900,0.7258) (4200,0.7198) (4500,0.7124) (4800,0.7074) (5100,0.7011) (5400,0.6957) (5700,0.6910) (6000,0.6869) (6300,0.6814) (6600,0.6786) (6900,0.6745) (7200,0.6706) (7500,0.6653) (7800,0.6618) (8100,0.6583) (8400,0.6554) (8700,0.6529) (9000,0.6483) (9300,0.6455) (9600,0.6423) (9900,0.6399) (10200,0.6372) (10500,0.6345) (10800,0.6306) (11100,0.6286) (11400,0.6245) (11700,0.6234) (12000,0.6211) (12300,0.6186) (12600,0.6157) (12900,0.6136)
      };
      \addplot[terc, line width=0.9pt, mark=*, mark size=1.5pt, mark repeat=4,
               mark options={fill=terc, draw=white, line width=0.35pt}] coordinates {
        (300,0.8534) (600,0.8244) (900,0.8030) (1200,0.7769) (1500,0.7593) (1800,0.7417) (2100,0.7276) (2400,0.7132) (2700,0.7023) (3000,0.6928) (3300,0.6840) (3600,0.6753) (3900,0.6661) (4200,0.6605) (4500,0.6530) (4800,0.6476) (5100,0.6433) (5400,0.6373) (5700,0.6320) (6000,0.6271) (6300,0.6221) (6600,0.6183) (6900,0.6151) (7200,0.6110) (7500,0.6076) (7800,0.6043) (8100,0.6024) (8400,0.5992) (8700,0.5980) (9000,0.5953) (9300,0.5922) (9600,0.5893) (9900,0.5864) (10200,0.5842) (10500,0.5823) (10800,0.5797) (11100,0.5778) (11400,0.5749) (11700,0.5742) (12000,0.5719) (12300,0.5704) (12600,0.5688) (12900,0.5663)
      };
      \addplot[fima, line width=0.9pt, mark=triangle*, mark size=1.8pt, mark repeat=4,
               mark options={fill=fima, draw=white, line width=0.35pt}] coordinates {
        (300,0.8821) (600,0.8602) (900,0.8408) (1200,0.8245) (1500,0.8117) (1800,0.7961) (2100,0.7872) (2400,0.7765) (2700,0.7691) (3000,0.7626) (3300,0.7563) (3600,0.7510) (3900,0.7447) (4200,0.7385) (4500,0.7330) (4800,0.7263) (5100,0.7205) (5400,0.7167) (5700,0.7137) (6000,0.7095) (6300,0.7060) (6600,0.7035) (6900,0.7002) (7200,0.6962) (7500,0.6929) (7800,0.6906) (8100,0.6874) (8400,0.6846) (8700,0.6824) (9000,0.6789) (9300,0.6758) (9600,0.6737) (9900,0.6714) (10200,0.6692) (10500,0.6651) (10800,0.6623) (11100,0.6616) (11400,0.6593) (11700,0.6566) (12000,0.6541) (12300,0.6526) (12600,0.6502) (12900,0.6473)
      };
      \addplot[fimb, line width=0.9pt, mark=triangle*, mark size=1.8pt, mark repeat=4,
               mark options={fill=fimb, draw=white, line width=0.35pt}] coordinates {
        (300,0.8306) (600,0.7949) (900,0.7710) (1200,0.7511) (1500,0.7336) (1800,0.7188) (2100,0.7090) (2400,0.6962) (2700,0.6867) (3000,0.6757) (3300,0.6672) (3600,0.6559) (3900,0.6467) (4200,0.6379) (4500,0.6282) (4800,0.6211) (5100,0.6123) (5400,0.6053) (5700,0.6011) (6000,0.5941) (6300,0.5885) (6600,0.5841) (6900,0.5795) (7200,0.5778) (7500,0.5730) (7800,0.5685) (8100,0.5641) (8400,0.5617) (8700,0.5561) (9000,0.5524) (9300,0.5503) (9600,0.5466) (9900,0.5426) (10200,0.5391) (10500,0.5365) (10800,0.5294) (11100,0.5266) (11400,0.5226) (11700,0.5209) (12000,0.5161) (12300,0.5146) (12600,0.5124) (12900,0.5120)
      };
      \addplot[fimc, line width=0.9pt, mark=triangle*, mark size=1.8pt, mark repeat=4,
               mark options={fill=fimc, draw=white, line width=0.35pt}] coordinates {
        (300,0.8393) (600,0.7894) (900,0.7553) (1200,0.7329) (1500,0.7135) (1800,0.7012) (2100,0.6833) (2400,0.6696) (2700,0.6677) (3000,0.6566) (3300,0.6472) (3600,0.6437) (3900,0.6517) (4200,0.6475) (4500,0.6410) (4800,0.6396) (5100,0.6529) (5400,0.6557) (5700,0.6543) (6000,0.6649) (6300,0.6666) (6600,0.6830) (6900,0.6950) (7200,0.7090) (7500,0.7697) (7800,0.8488) (8100,1.9458)
      };
      \node[anchor=east, font=\tiny, text=labelink, inner sep=2pt]
        at (axis cs:7900,0.875) {};
    \end{axis}

    \path (main.outer south west) -- (main.outer south east) coordinate[midway] (lgc);
    \begin{scope}[shift={(lgc)}, yshift=-0.30cm, xshift=0.25cm,
                  every node/.style={font=\scriptsize, text=labelink, inner sep=1pt}]
      \node[anchor=base] at (-1.05,0) {$\eta{=}1{\times}10^{-5}$};
      \node[anchor=base] at (0.50,0) {$\eta{=}1{\times}10^{-4}$};
      \node[anchor=base] at (2.05,0) {$\eta{=}5{\times}10^{-4}$};
      \node[anchor=base west] at (-3.30,-0.34) {terMeZO};
      \draw[tera, line width=0.9pt] (-1.39,-0.28) -- (-0.71,-0.28);
      \draw[tera] plot[mark=*, mark size=1.5pt, mark options={fill=tera, draw=white, line width=0.35pt}] coordinates {(-1.05,-0.28)};
      \draw[terb, line width=0.9pt] (0.16,-0.28) -- (0.84,-0.28);
      \draw[terb] plot[mark=*, mark size=1.5pt, mark options={fill=terb, draw=white, line width=0.35pt}] coordinates {(0.50,-0.28)};
      \draw[terc, line width=0.9pt] (1.71,-0.28) -- (2.39,-0.28);
      \draw[terc] plot[mark=*, mark size=1.5pt, mark options={fill=terc, draw=white, line width=0.35pt}] coordinates {(2.05,-0.28)};
      \node[anchor=base west] at (-3.30,-0.70) {FIM-MeZO};
      \draw[fima, line width=0.9pt] (-1.39,-0.64) -- (-0.71,-0.64);
      \draw[fima] plot[mark=triangle*, mark size=1.8pt, mark options={fill=fima, draw=white, line width=0.35pt}] coordinates {(-1.05,-0.64)};
      \draw[fimb, line width=0.9pt] (0.16,-0.64) -- (0.84,-0.64);
      \draw[fimb] plot[mark=triangle*, mark size=1.8pt, mark options={fill=fimb, draw=white, line width=0.35pt}] coordinates {(0.50,-0.64)};
      \draw[fimc, line width=0.9pt] (1.71,-0.64) -- (2.39,-0.64);
      \draw[fimc] plot[mark=triangle*, mark size=1.8pt, mark options={fill=fimc, draw=white, line width=0.35pt}] coordinates {(2.05,-0.64)};
    \end{scope}
  \end{tikzpicture}
  \caption{%
    Validation loss of TerMeZO and FIM-MeZO fine-tuning Falcon-E-3B-Base model on GSM8K at sparsity budget $\rho=0.01$, at different learning rates $\eta$. TerMeZO improves monotonically with $\eta$ and is stable throughout. FIM-MeZO has slightly better loss at different learning rates but it starts to diverge earlier than TerMeZO at higher learning rates.}
  \label{fig:val_loss_gsm8k_falcon_e_3b_rho0.01}
\end{figure}

Figure~\ref{fig:val_loss_gsm8k_falcon_e_3b_rho0.01} shows the validation loss of TerMeZO and FIM-S-MeZO when fine-tuning {Falcon-E-3B-Base} on GSM8K with sparsity budget $\rho=0.01$, for $\eta \in \{10^{-5}, 10^{-4}, 5\times10^{-4}\}$. TerMeZO decreases steadily at all three learning rates, and larger learning rates consistently yield lower loss. FIM-S-MeZO reaches a lower loss at $\eta=10^{-5}$ and $\eta=10^{-4}$, but becomes unstable at $\eta=5\times10^{-4}$. This behavior is consistent with our hypothesis that many of the weights selected by FIM-S-MeZO lie far from the quantization boundaries, so that perturbing them contributes mostly noise to the update.

\subsection{Comparison with QuZO}
\label{sec:quzo_comp}
We include additional experiments comparing TerMeZO with QuZO \citep{zhou2025quzo}, which perturbs the quantized weights directly via stochastic rounding. Table~\ref{tab:termezo_vs_quzo_gsm8k_1b} reports the GSM8K performance of both methods on {Falcon-E-1B-Base}. TerMeZO clearly outperforms QuZO, which barely improves over the base model. QuZO is also considerably more sensitive to the learning rate: it diverges at the learning rates used for TerMeZO and the other ZO baselines, which required us to search a lower range, and its best result is obtained with a learning rate more than two orders of magnitude smaller than that of TerMeZO.

\begin{table}[h]
  \centering
  \caption{%
    {Accuracy after finetuning of TerMeZO and QuZO on GSM8K for \texttt{Falcon-E-1B-Base}.} For each method, we report the best result over its learning-rate grid. We found QuZO diverges at the higher learning rates used for TerMeZO and the other ZO baselines and we searched a lower range for it to ensure convergence: $\eta \in \{ 10^{-6}, 3\times10^{-7}, 10^{-7}\}$.
  }
  \label{tab:termezo_vs_quzo_gsm8k_1b}
  \small
  \begin{tabular}{lcc}
    \toprule
    Method &  Test loss & GSM8K (\%) \\
    \midrule
    Base Model &     1.13    &    14.0\\
    \midrule
    TerMeZO  & $\mathbf{0.56}$ & $\mathbf{31.84}$ \\
    QuZO     & $0.99$ & $14.71$ \\
    \bottomrule
  \end{tabular}
\end{table}

\end{appendices}

\end{document}